\DeclareMathAlphabet\mathbfcal{OMS}{cmsy}{b}{n}
\newcommand{\BEAS}{\begin{eqnarray*}}
\newcommand{\EEAS}{\end{eqnarray*}}
\newcommand{\BEA}{\begin{eqnarray}}
\newcommand{\EEA}{\end{eqnarray}}
\newcommand{\BEQ}{\begin{equation}}
\newcommand{\EEQ}{\end{equation}}
\newcommand{\BIT}{\begin{itemize}}
\newcommand{\EIT}{\end{itemize}}
\newcommand{\BNUM}{\begin{enumerate}}
\newcommand{\ENUM}{\end{enumerate}}
\newcommand{\BA}{\begin{array}}
\newcommand{\EA}{\end{array}}
\newcommand{\argmax}{\mathop{\rm argmax}}
\newcommand{\sign}{\mathop{ \rm sign}}
\newcommand{\rb}{\mathbb{R}}
\newtheorem{lemma}{Lemma}
\newtheorem{theorem}{Theorem}
\newtheorem*{theorem*}{Theorem}
\def \E{{\mathbb E}}
\def \P{{\mathbb P}}
\def \P{{\mathbb P}}
\newcommand{\punt}[1]{}
\def\argmax{\mathop{\rm arg\,max}}
\def\argmax{\mathop{\rm arg\,max}}
\newcommand{\bq}{\begin{equation}}
\newcommand{\ba}{\begin{eqnarray}}
\newcommand{\ea}{\end{eqnarray}}
\newcommand{\remove}[1]{}
\title{Convergence of Uncertainty Sampling for Active Learning} 
\author{Anant Raj \\
  Inria, Ecole Normale Sup\'erieure \\
  PSL Research University, Paris, France. \\
  \texttt{anant.raj@inria.fr} 
\and Francis Bach \\
  Inria, Ecole Normale Sup\'erieure \\
  PSL Research University, Paris, France. \\
  \texttt{francis.bach@inria.fr} \\
}
\begin{document}

\maketitle

\begin{abstract}
Uncertainty sampling in active learning is heavily used in practice to reduce the annotation cost. However, there has been no wide consensus on the function to be used for uncertainty estimation in binary classification tasks and convergence guarantees of the corresponding active learning algorithms are not well understood. The situation is even more challenging for multi-category classification. In this work, we propose an efficient uncertainty estimator for binary classification which we also extend to  multiple classes, and provide a non-asymptotic rate of convergence for our uncertainty sampling based active learning algorithm in both cases under no-noise conditions (\textit{i.e}., linearly separable data). We also extend our analysis to the noisy case and provide theoretical guarantees for our algorithm under the influence of noise in the task of binary and multi-class classification.

\end{abstract}

\section{Introduction}\label{sec:intro}
Over the last decade, machine learning algorithms have achieved a lot of success on various tasks in computer vision, natural language processing, and speech recognition. This success has been led by various factors which include improvement in computing architectures and improved machine learning algorithms. Moreover, the rapid growth in the number of large labeled public datasets is also one of the most important factors which contributed to the rise of  machine learning. However, in many practical scenarios, the labeled data are hard to obtain, as it requires a lot of time and human efforts to label a dataset. Hence, it is a time consuming as well as economically expensive procedure to perform. For this reason, there have been a lot of efforts to build machine learning algorithms which require a significantly lower number of labeled samples to train. One major direction of research in this area is to devise efficient \emph{active learning} algorithms. 

Active learning algorithms propose efficient labeling schemes to reduce the number of labels required in order to train a classifier, resulting in  minimal   annotation cost while still maintaining high performance. Active learning methods can be categorized in   two major categories : (i) stream-based active learning where samples from the data generating distribution are sequentially presented to the active learner, and (ii) pool-based active learning where there exists a very small number of labeled samples and the rest of the samples have no label. We note that any streaming based active learning algorithm can be converted to a pool-based active learning algorithm and vice versa~\cite{sabato2016interactive}.  For both of these categories, there exists an acquisition function which characterizes which informative samples should be labelled. The most popular way of determining if a sample is informative or not  is by estimating uncertainty. Other than uncertainty sampling based approaches, the other major approaches for performing active learning  are query-by-committee \cite{seung1992query}, expected model change \cite{settles2007multiple}, expected error reduction \cite{roy2001toward},  expected variance reduction \cite{wang2015ambiguity}, among others. 

The main focus of this paper is \emph{uncertainty sampling} based active learning algorithms. 
Despite of it being one of the most used active learning algorithms in practice, little is known about its theoretical properties. Specifically, there has been no common consensus on  the optimal uncertainty estimation approach used to perform active learning and its convergence properties. Also, most of the active learning algorithms studied previously are for binary classification and are not trivial to extend to multi-category classification problems. In this paper, we investigate these questions from the lens of optimization through stochastic gradient descent, and propose a sampling function for which the uncertainty sampling based active learning algorithm provably converges. We make the following contributions in this work :
\begin{itemize}
    \item[(i)] We propose a family of theoretically motivated functions to estimate uncertainty for linear  predictions for  binary and multi-category classification.
    
    \item[(ii)] We show that the active learning algorithm based on the proposed sampling scheme converges to the optimal predictor in the separable regime for binary classification, and can be easily extended to  multi-category classification. We provide a non-asymptotic rate of convergence of order $ {O}(1/n)$, where $n$ is number of iterations of the algorithms which is also number of unlabeled samples seen by the algorithm for both binary and multi-class cases. 
    
    \item[(iii)] We extend our analysis to the inseparable regime for both  cases (binary and multi-class) and show that the probability of mis-classification is bounded by ${O}(1/n)+{O}(\eta)$  where $n$ is number of iteration performed by the algorithms and $\eta$ is the noise parameter. 
    \item[(iv)] We perform experimental evaluations for our algorithm on classification tasks.
\end{itemize}


\subsection{Related Work}\label{sec:rel_work}
There has been a vast amount of work done in the field of active learning and we only give a brief overview here. For binary classification, online active learning has been studied under the name of selective sampling under adversarial assumptions  \cite{cesa2009robust,dekel2010robust,orabona2011better,cavallanti2011learning}. However, these methods can not be extended to multi-class classification and are computationally expensive.  \citet{agarwal2013selective} proposed a selective sampling scheme for multi-class classification for generalized linear models, but still each step of the algorithm is computationally expensive to perform. \citet{settles2012active} provides an excellent survey of empirical studies in active learning. 

Apart from empirical studies, there has been a lot of works in active learning on the theoretical front. Disagreement based active learning has been an active area of research in machine learning. The primary idea is as follows. A set of possible empirical risk minimizers are maintained with time and a label is queried if two minimizers disagree on the predicted label of that sample. An excellent survey of disagreement based active learning is provided in \cite{hanneke2014theory}. 

Our work can also be related to the vast line of work done in margin based active learning \cite{balcan2007margin,dasgupta2005analysis,balcan2013active,wang2016noise}. However for most of works in this area, the gain and convergence of the algorithm have only been shown under strong distributional assumptions while we do not make any such assumption on the data for our uncertainty sampling based active learning algorithm, and still manage to show a non-asymptotic rate of convergence for the algorithm. 

Uncertainty sampling based machine learning algorithms have a long history. They were first proposed by \citet{lewis1994sequential} who experimentally show that a probabilistic model with uncertainty sampling can improve the performance of text classification by up to 500 fold. Later, \citet{schohn2000less} applied uncertainty sampling to SVM classification and showed improved performance. Since then, it has been widely used for performing active learning \cite{yang2015multi,zhu2008active,lughofer2017online,yang2016active,wang2017uncertainty}. However, none of  the above mentioned works focuses on the theoretical understanding of uncertainty sampling. Recently, \citet{mussmann2018uncertainty} showed that threshold based uncertainty sampling on a convex (e.g., logistic) loss can be interpreted as performing a preconditioned
stochastic gradient step on the population zero-one loss. However, a proper convergence analysis was missing in \cite{mussmann2018uncertainty}, in part due to the underlying non-convexity of their formulation.  
 
\section{Background}\label{sec:background}
\subsection{Uncertainty Sampling} \label{sec:uncertain}
Because of the ease of application, uncertainty sampling remains one of the most popular approaches used to perform active learning. Uncertainty sampling relies on the idea of querying the data point about which the current predictor is most uncertain. In simpler terms, 
uncertainty sampling usually identifies those points which are  close to the decision boundary of the current model. However, the most important task here is to compute the uncertainty of prediction. There have been several approaches proposed to measure the uncertainty of a prediction. Here below,  we discuss few of them that are widely used in practice \cite{monarch2021human}. Let us assume that a probabilistic model generates predictions in the form of probability distributions $p_{\theta}(\cdot|x)$ on $\mathcal{Y}$ for $x\in \mathcal{X}$ and model parameter $\theta$.

\textbf{Margin of confidence sampling \cite{monarch2021human,nguyen2021measure}.} An intuitive way to estimate the uncertainty is by computing the margin in the confidence of top two predictions. Mathematically, sampling probability of querying a label $p_u(x,\theta) = \sigma(p_{\theta}(y_1^\star|x)- p_{\theta}(y_2^\star|x))$ where $\sigma:\mathbb{R} \rightarrow [0,1]$, and  $y_1^\star$ and $y_2^\star$ correspond to the two top most predictions for $x$ given the model parameter $\theta$.

\textbf{Least confidence sampling \cite{monarch2021human,nguyen2021measure}.}   Least confidence sampling considers the difference between 100\% confidence and the most confident prediction to compute sampling probability of query a label. That means $p_u(x,\theta) \propto 1- p_{\theta}(y_1^\star|x)$ where $y_1^\star$ corresponds to the  top most prediction for $x$ given the model parameter $\theta$.

\textbf{Entropy-based sampling \cite{monarch2021human,nguyen2021measure}.} Entropy is an information theoretically motivated way to compute the uncertainty and widely used to estimate the uncertainty. Sampling probability of querying  a label can be written as: $p_u(x,\theta) \propto \sum_{y\in \mathcal{Y}} p_{\theta} (y|x) \log p_{\theta} (y|x) $. 

In this paper, we use the margin of confidence sampling scheme to estimate uncertainty in the prediction. Details of the sampling function $\sigma$ will be provided in  \cref{sec:convergence} where we discuss convergence of the algorithm. However, before discussing the theoretical results (\cref{sec:convergence}), in the next section we discuss the relation between the hinge loss and corresponding test accuracy in binary and multi-class classification. 

\subsection{Max-margin linear classification} \label{sec:hinge}

In this paper, we consider the simplest possible set-up of linear classification, with inputs $x \in\rb^d$, and linear prediction functions. We note that by replacing $x$ by some feature function $\Phi(x)$ we can deal with non-linear problems, the feature map being explicit or implicit through kernel methods~\cite{hofmann2008kernel}. 

\textbf{Binary classification.} With two classes, we consider $y \in \{-1,1\}$ and a prediction function $x \mapsto \theta^\top x$ parameterized by $\theta \in \rb^d$. We then classify according to the sign of $\theta^\top x$.

The associated error rate can be computed as $$\P( y\theta^\top x \leqslant 0),$$ but is is a non-convex function of $\theta$. Among the many convex surrogates, we will consider the classical hinge loss:
$$
\hat{\ell}(x,y,\theta) = \max\{0, 1 - y \theta^\top x\},
$$
and its square, leading to the traditional support vector machine. The regular hinge loss is non differentiable, while the squared hinge loss is smooth in $\theta$. In this paper, we will consider an algorithm based on the squared hinge loss, but obtain a guarantee for the non-squared one. Given that the two losses lead to guarantees on the misclassification error, as
$$
\P( y\theta^\top x \leqslant 0) \leqslant \E(\hat{\ell}(x,y,\theta))$$
$$\P( y\theta^\top x \leqslant 0) \leqslant \E(\hat{\ell}(x,y,\theta)^2),
$$
this allows to get the desired bounds.

\textbf{Multi-class classification.} Here, we consider $y \in \{1,\dots,k\}$. In multi-class classification the model parameter $\theta$ is a vector in $\mathbb{R}^{d k}$ which consists of predictors $\theta(i) \in \mathbb{R}^d$ for all $i \in \{1,2,\ldots,k \}$. Hence, we denote model parameter $\theta$ as collection  of $k$ predictors, \textit{i.e.}, $\theta = [\theta(1); \theta(2); \cdots ;\theta(k)]$. We consider the multi-class SVM formulation of \cite{crammer2001algorithmic}.  Following the structured SVM notation in \cite{tsochantaridis2005large}, let us assume that $\phi(x,y)$ represents the feature map corresponding to the sample pair $(x,y)$. In multi-class classification with $k$ classes, $\phi(x,y) \in \mathbb{R}^{dk}$ consists of $k$ blocks of $d$-dimensional vector and if we allow ourselves to denote each $d$-dimensional block with $\phi(x,y)(i)$ for $i \in \{1,2,\cdots,k\}$, then $\phi(x,y) = [\phi(x,y)(1);\phi(x,y)(2);\cdots;\phi(x,y)(k)]$ where $\phi(x,y)(i) = 0$ for all $i\neq y$ and $\phi(x,y)(y)= x$. Define a loss function $\Delta : \mathcal{Y}\times \mathcal{Y}\rightarrow \mathbb{R}$. General hinge loss for maximum margin training  function  can be written as :
\begin{align*}
    \hat{\ell}(x_t,y_t, \theta) = \max_{y\in \mathcal{Y}} \left[ \Delta(y,y_n) - \theta^\top (\phi(x_n,y_n) - \phi(x_n,y)) \right].
\end{align*}
In the  case of  multi-class classification, generally   $\Delta(y,y') = 1$ if $y\neq y'$, otherwise $\Delta(y,y') = 0$. Hence, the multi-class hinge loss can be written as,
\begin{align}
    &\hat{\ell}(x,y, \theta)  = \max \left[ 0, 1 - \theta^{\top} \left(\phi(x,y) - \phi(x,y^\star(\theta,x,y)\right) \right],  \label{eq:multiclass-hinge}\\
    &\text{where}~~ y^\star(\theta,x,y) = \argmax_{z\in {\mathcal{Y}\backslash {y}}} \theta^\top \phi(x,z). \notag
\end{align}
Predicting a label for the data point $x \in \mathbb{R}^d$ by the predictor $\theta$ is done by computing $\argmax_{z\in {\mathcal{Y}}} \theta^\top \phi(x,z)$. Similar to the binary case, the hinge loss for multiclass classification is non differentiable while the square hinge loss is smooth and differentible in model parameter $\theta$. For  multi-class hinge loss as well, the misclassification error can be bounded by expected loss for both the losses. That is for a given sample pair $(x,y)$ and model parameter $\theta$, we have,
\begin{align*}
    \P( \theta^\top \phi(x,y) - \theta^\top \phi(x,z^\star) \leqslant 0) \leqslant \E(\hat{\ell}(x,y,\theta)) \\
\P( \theta^\top \phi(x,y) - \theta^\top \phi(x,z^\star) \leqslant 0) \leqslant \E(\hat{\ell}(x,y,\theta)^2),
\end{align*}
where $z^\star = \argmax_{z\in {\mathcal{Y}}} \theta^\top \phi(x,z)$.

\section{Convergent Uncertainty Sampling for Classification} \label{sec:convergence}
\begin{algorithm}[h]
\LinesNumberedHidden

  \KwInput{learning rate $\gamma$, Streaming $(x_i,y_i)$ for $i\in [n]$, initial model parameter $\theta_1$,  parameter $\mu$ and sampling function $\sigma$.}\;

  \KwOutput{average iterate $\bar{\theta}_{n+1}$. }
  
\For{$t\gets 1$ \KwTo $n$ }
{   \hspace{0.5mm}
    Compute probability $p_u(x_t,\theta_t) = \sigma(\theta_t, x_t).$ \\
    Sample Bernoulli random variable  $z_t$ with $ p=p_u(x_t,\theta_t).$ \\
    Compute $\hat{\ell}_t(x_t,y_t,\theta_t) \gets  \max(0, 1 - y_t (\theta_t^\top x_t)).$ \\
    Update $\theta_{t+1}\gets \theta_t + \gamma z_t (y_t x_t) \hat{\ell}_t(x_t,y_t,\theta_t).$  \\
    Update $\bar{\theta}_{t+1} \gets \left(1 - \frac{1}{t+1} \right)\bar{\theta}_{t+1} + \frac{1}{t+1} \theta_{t+1}.$
    }
\caption{ Uncertainty Sampling in Binary Classification}\label{alg:uncertain_binary}
\end{algorithm}

In this work, we consider the streaming data setting. However, the algorithm can also be applied for non-streaming data setting. In the next three sections, we would discuss the convergence results for binary and multi-class classification.

Let us consider $n$ \textit{i.i.d.}~samples $(x_i,y_i)$ jointly sampled from $\mathcal{P}$ such that $x_i \in \rb^d$, $i=1,\dots,n$, and  $y_i\in \mathcal{Y}$ where $\mathcal{Y} = \{ -1,1\}$ for binary classification and $\mathcal{Y} =  \{1,2,\ldots, k\}$ for multi-class classification.  

Before going into the details of theoretical results, we discuss the intuition of the uncertainty sampling algorithm here below. As discussed previously, the main idea behind the uncertainty sampling based active learning algorithm is that to query the labels for those data point about which the predictor is uncertain about.  In streaming data setting, every-time we see a new data point, we compute its uncertainty by computing the prediction score. Then, we convert this score into the density function with the help of the given function $\sigma$ which maps prediction score to probability of querying a label. This probability score is used to generate a Bernoulli random variable which decides if the algorithm would query  the label of the presented sample or not, and then perform a stochastic gradient step (for the squared hinge loss) if the label is accessed. The exact expression for the function $\sigma$ would be provided in \cref{sec:binary} (for binary classification) and in \cref{sec:multiclass} for multiclass classification. The pseudo-codes of the algorithms are presented in \cref{alg:uncertain_binary} (binary classification) and \cref{alg:uncertain_multi-class} (multi-class classification). 

\begin{algorithm}[h]
\LinesNumberedHidden
\DontPrintSemicolon

  \KwInput{learning rate $\gamma$, Streaming $(x_i,y_i)$ for $i\in [n]$, initial model parameter $\theta_1$,  parameter $\mu$, number of classes k, and sampling function $\sigma$.}\;

  \KwOutput{average iterate $\bar{\theta}_{n+1}$. }
  
\For{$t\gets 1$ \KwTo $n$ }
{   \hspace{0.5mm} Compute score $s_t(j) = \theta(j)^\top x_t$ for all $j \in [k]$ . \\
    Compute probability $p_u(x_t,\theta_t) = \sigma(\theta_t, x_t).$ \\
    Sample Bernoulli random variable  $z_t$ with $ p=p_u(x_t,\theta_t).$ \\
    $y_t^\star \gets \argmax_{j\in \mathcal{Y}\backslash y_t} s_t(j)$\\
    $\hat{\ell}_t(x_t,y_t,\theta_t) \gets  \max(0, 1 - \theta_t(y_t)^\top x_t + \theta_t(y_t^\star)^\top x_t).$ \\
    Update $\theta_{t+1}(y_t) \gets \theta_{t}(y_t) + \gamma z_t  x_t \hat{\ell}_t(x_t,y_t,\theta_t).$   \\
    Update $\theta_{t+1}(y_t^\star) \gets \theta_{t}(y_t^\star) - \gamma z_t  x_t \hat{\ell}_t(x_t,y_t,\theta_t).$ \\
    Update $\theta_{t+1} \gets [\theta_{t+1}(1); \theta_{t+1}(2);\ldots; \theta_{t+1}(k)].$ \\
    Update $\bar{\theta}_{t+1} \gets \left(1 - \frac{1}{t+1} \right)\bar{\theta}_{t+1} + \frac{1}{t+1} \theta_{t+1}.$
    }
\caption{ Uncertainty Sampling in Multi-Class Classification}\label{alg:uncertain_multi-class}
\end{algorithm}

\subsection{Binary Classification} \label{sec:binary}
In the separable case for binary classification we, assume that there exists an optimal classifier $\theta_\star \in \mathbb{R}^d$ such that for all $x\in \mathcal{X}$ and its corresponding label $y\in \mathcal{Y}$ where $\mathcal{Y} = \{ -1,1\}$,
\begin{align}
    y(\theta_\star^\top x) \geq \rho^\star > 1 .
\end{align}
The above assumption is a standard assumption made in the analysis of maximum margin classifier in the realizable case \cite{balcan2007margin,dasgupta2005analysis}. 
We  minimize the square hinge loss to obtain our predictor. The expression for square hinge loss can be written as (with an extra factor of $1/2)$, 
\begin{align}
    \ell(x,y,\theta)=  \frac{1}{2}\max\left[0, 1 - y\theta^\top x\right]^2.
\end{align}
We have the following update rule to update the model parameter $\theta$ for uncertainty sampling based active learning :
\begin{align}
    \theta_{t+1} = \theta_t  + \gamma~z_t(y_t x_t)\big[1 -y_t({\theta_t}^\top x_t) \big]_{+}, \label{eq:update_bianry}
\end{align}
where $z_t$ is a Bernoulli  random variable for fixed $\theta_t$ and $x_t$ such that $p(z_t=1|x_t,\theta_t) = \sigma(\theta_t, x_t)$ where $\sigma:\mathbb{R}^d\times \mathbb{R}^d\rightarrow [0,1]$ is an even function. 

When $\sigma = 1$, \textit{i.e.}, querying label for every sample, then this is exactly stochastic gradient descent for the squared hinge loss, which is known to converge with rate $O(1/t)$ in the separable situation~\cite{vaswani2018fast}. In the next result we show that the algorithm proposed in this paper (\cref{alg:uncertain_binary}) converges. 
\begin{theorem}\label{thm:no_noise_binary}
Consider a set of $n$ \textit{i.i.d}   samples $(x_i,y_i)$ jointly sampled from $\mathcal{P}$ such that $x_i \in \rb^d$,  and  $y_i\in \{-1,1\}$ for all $i=1,\dots,n$ then under the assumption that there exists a $\theta_\star$ for which $y(\theta_\star^\top x) \geq \rho^\star$ for all $(x,y)$ pair in $\mathcal{P}$, the following convergence guarantee exists for Algorithm \ref{alg:uncertain_binary},
\begin{align}
   \E (1 - y \theta_t^\top x)_{+} \leq \frac{R^2 \max\left\{1,\frac{1}{\mu}\right\} \|\theta_1-\theta_\star\|^2}{ \min\left\{\frac{1}{\mu}, \frac{\rho^\star - 1}{1+\mu} \right\}^2 n} ,
\end{align}
for the choice of  $\sigma(\theta,x) = \frac{1}{1+\mu |\theta^\top x|}$, step size $\gamma = \frac{\min\left\{\frac{1}{\mu},\frac{\rho^\star - 1}{1+\mu}\right\}}{R^2 \max\left\{1,\frac{1}{\mu}\right\}}$ and $\|x\| \leq R $ for all $x$ in the domain~$\mathcal{X}$.
\end{theorem}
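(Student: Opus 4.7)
The plan is a standard stochastic-gradient descent analysis on the squared hinge loss, adapted to the Bernoulli subsampling introduced by the uncertainty sampler. All of the work is in showing that the expected Bernoulli-weighted progress term is lower bounded linearly in the hinge loss while the variance term is upper bounded linearly in the hinge loss, with the explicit constants appearing in the statement.

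First I would set $h_t := 1 - y_t \theta_t^\top x_t$, rewrite the update as $\theta_{t+1} = \theta_t + \gamma z_t y_t x_t (h_t)_+$, and expand $\|\theta_{t+1} - \theta_\star\|^2$. The separability hypothesis $y_t \theta_\star^\top x_t \geq \rho^\star$ gives
\[
y_t x_t^\top (\theta_t - \theta_\star) \;=\; (1 - h_t) - y_t \theta_\star^\top x_t \;\leq\; -(h_t + \rho^\star - 1),
\]
which, together with $z_t^2 = z_t$ and $\|x_t\| \leq R$, yields the one-step inequality
\[
\|\theta_{t+1} - \theta_\star\|^2 \;\leq\; \|\theta_t - \theta_\star\|^2 \;-\; 2\gamma z_t (h_t)_+ (h_t + \rho^\star - 1) \;+\; \gamma^2 z_t R^2 (h_t)_+^2.
\]
Taking conditional expectation over $z_t$, with $\E[z_t \mid \F_t, x_t, y_t] = \sigma_t := 1/(1 + \mu |\theta_t^\top x_t|)$, replaces each $z_t$ by $\sigma_t$.

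The main algebraic step is then to control the two deterministic quantities $\sigma_t(h_t)_+(h_t + \rho^\star - 1)$ and $\sigma_t(h_t)_+^2$. On $\{(h_t)_+ > 0\}$ one has $|\theta_t^\top x_t| = |1 - h_t|$, and I would split into the sub-cases $h_t \in [0,1]$ (so $|\theta_t^\top x_t| = 1 - h_t$) and $h_t > 1$ (so $|\theta_t^\top x_t| = h_t - 1$), then analyze the scalar ratios
\[
\frac{h + \rho^\star - 1}{1 + \mu |1 - h|} \qquad \text{and} \qquad \frac{h}{1 + \mu |1 - h|}
\]
by monotonicity. Signs of the derivatives show that the first is bounded below by $C_2 := \min\{1/\mu,\, (\rho^\star - 1)/(1+\mu)\}$ for every $h \geq 0$ (the minimum being attained at $h = 0$ or $h \to \infty$ depending on the sign of $1 - \mu \rho^\star$), while the second is bounded above by $C_1 := \max\{1,\, 1/\mu\}$. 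Equivalently,
\[
\sigma_t (h_t)_+ (h_t + \rho^\star - 1) \;\geq\; C_2 (h_t)_+, \qquad \sigma_t (h_t)_+^2 \;\leq\; C_1 (h_t)_+.
\]

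Substituting these into the one-step bound gives
\[
\E\bigl[\|\theta_{t+1} - \theta_\star\|^2 \,\big|\, \F_t, x_t, y_t\bigr] \;\leq\; \|\theta_t - \theta_\star\|^2 \;-\; \gamma\bigl(2C_2 - \gamma R^2 C_1\bigr)(h_t)_+.
\]
The choice $\gamma = C_2 / (R^2 C_1)$, which is exactly the theorem's step size, reduces the per-step progress to $\gamma C_2 (h_t)_+ = (C_2^2/(R^2 C_1))(h_t)_+$. Taking full expectation (using that in the streaming setting $(x_t, y_t)$ is independent of $\F_t$), telescoping from $t = 1$ to $n$, dividing by $n$, and invoking convexity of $\theta \mapsto (1 - y \theta^\top x)_+$ to transfer the average of per-iterate expected losses to the averaged iterate $\bar\theta_{n+1}$ delivers the advertised rate $R^2 C_1 \|\theta_1 - \theta_\star\|^2 / (C_2^2 n)$.

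The hard part is the first of the two scalar inequalities, and it is precisely where the sampler $\sigma(\theta, x) = 1/(1 + \mu |\theta^\top x|)$ earns its keep: when the iterate misclassifies (so $h_t > 1$ and $\sigma_t$ can be small), the growth of $h_t + \rho^\star - 1$ exactly cancels the decay of $\sigma_t$, and the ratio $(h + \rho^\star - 1)/(1 + \mu(h - 1))$ tends to $1/\mu > 0$ uniformly in $h$. Checking both monotonicity regimes and verifying that the resulting minimum coincides with $\min\{1/\mu, (\rho^\star - 1)/(1 + \mu)\}$ is where most of the bookkeeping lives; everything else is a textbook SGD calculation.
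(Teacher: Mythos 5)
Your proposal is correct and follows essentially the same route as the paper: the same one-step expansion with $z_t^2=z_t$, conditioning on $z_t$ to replace it by $\sigma_t$, and the same two scalar bounds $\sigma_t(h_t)_+^2\le \max\{1,1/\mu\}(h_t)_+$ and $\sigma_t(h_t+\rho^\star-1)\ge\min\{1/\mu,(\rho^\star-1)/(1+\mu)\}$, which (after the substitution $u=1-h$) are exactly the content of the paper's Lemma~\ref{lem:sampling_binary}, followed by the identical telescoping, step-size choice, and Jensen step. No gaps.
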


\begin{proof}[Proof sketch] See the complete proof given in the Appendix. Using the update given in \cref{eq:update_bianry} and taking expectations  only with respect to $z_t$ considering $x_t,y_t,\theta_t$ fixed, we get the following
\begin{align*}
    \E \left\|\theta_{t+1} - \theta_\star \right\|^2 &= \left\|\theta_{t} - \theta_\star \right\|^2  +2\gamma \sigma(\theta_t,x_t) \big[1 -y_t({\theta_t}^\top x_t)y_t( \theta_t^\top x_t - y_t {\theta_\star}^\top x_t )  \big]_{+} \\
    & \qquad \qquad \qquad \qquad \qquad \qquad  + \gamma^2 \sigma(\theta_t,x_t)R^2 \big[1 -y_t({\theta_t}^\top x_t) \big]_{+}^2,
\end{align*}
where $R$ is the upper bound on $\|x\|$ for all $x\in \mathcal{X}$.
Our next goal is to find the function $\sigma(\theta,x)$ which satisfy the following properties for $y_t \theta_t^\top x_t < 1$,
\begin{align}
    &\sigma(\theta_t, x_t)(1 - y_t \theta_t^\top x_t)_{+}^2 \leq c_1 (1-y_t \theta_t^\top x_t)_{+} \label{eq:sigma_bin_con1} \\
    &\sigma(\theta_t, x_t)  (y_t \theta_t^\top x_t - y_t{\theta_\star}^\top x_t) \leq -c_2 , \label{eq:sigma_bin_con2}
\end{align}
for some positive constants $c_1$ and $c_2$. We show in Lemma~\ref{lem:sampling_binary} that choosing 
\begin{align}
    \sigma(\theta,x) = \frac{1}{1+\mu |\theta^\top x|},
\end{align}
for $\mu >0$ satisfies the conditions in \cref{eq:sigma_bin_con1} and \cref{eq:sigma_bin_con2} for  $c_1 \geq \max\left\{1,\frac{1}{\mu}\right\}$ and $c_2 \leq \min\left\{\frac{\rho^\star - 1}{1+\mu},\frac{1}{\mu}\right\}$.
Finally, after  taking expectations, applying Jensen's inequality, and for the optimal choice of step size $\gamma$, we get 
\begin{align*}
    \E (1 - y \theta_t^\top x)_{+} \leq \frac{R^2 \max\left\{1,\frac{1}{\mu}\right\} \|\theta_1-\theta_\star\|^2}{ \min\left\{\frac{1}{\mu}, \frac{\rho^\star - 1}{1+\mu} \right\}^2 n}.
\end{align*}

\end{proof}
\paragraph{On Mistake Bound.} As discussed in \cref{sec:hinge},  the probability of misclassification is bounded by the expected classification loss (non squared hinge loss). Hence, for an independently sampled pair $(x,y)$,
\begin{align}
    \P(y\bar{\theta}_{n}^\top x  \leq 0) \leq \frac{R^2 \max\left\{1,\frac{1}{\mu}\right\} \|\theta_1-\theta_\star\|^2}{ \min\left\{\frac{1}{\mu}, \frac{\rho^\star - 1}{1+\mu} \right\}^2 n}.
\end{align}

\paragraph{Discussion.} It is important to note that the conditions mentioned in \cref{eq:sigma_bin_con1} and \cref{eq:sigma_bin_con2} are required only when $y_t \theta_t x_t <1$ as the gradient is zero when $y_t \theta_t x_t \geq 1$, and hence the model parameter is not updated. Ideally, the label should not be queried when $y_t \theta_t x_t \geq 1$, however there is no way to compute it beforehand. Hence, our sampling scheme provides a good trade-off for sampling. The value of $\mu$ should be decided by experimental evaluation as for $\mu\rightarrow \infty$, the upper bound for our algorithm seems to diverge. 

Denoting the expected number of samples labelled in $t$ steps as $\#_t$, we get:
\begin{align*}
    \#_n &= \sum_{t=0}^{n-1} \sigma(\theta_t, x_t) = \sum_{t=0}^{n-1} \frac{1}{1 +   \mu |\theta_t^\top x_t| },
\end{align*}
which can be significantly less than $n$ if the absolute value of $\theta_t^\top x_t$ is large for most of $t$, \textit{i.e.}, most of the points are far from the decision boundary. 
\vspace{-1mm}
\subsection{Extension to Multi-class Classification} \label{sec:multiclass}
\vspace{-1mm}

In the separable multi-class case,  we assume that there exists a set of optimal half-spaces $\theta_\star(i) \in \mathbb{R}^{d}$ for $i\in \{1,2,\ldots,k\}$ corresponding to each class such that for all $x\in \mathcal{X}$ and its corresponding label $y\in \mathcal{Y}$ where $\mathcal{Y} =  \{1,2,\ldots, k\}$,
\begin{align}
    (\theta_\star(i)-\theta_\star(j))^\top x \geq \rho^\star >1 ~\text{for all } i\neq j \in [k].
\end{align}

Note that the loss function given in \cref{eq:multiclass-hinge} is the same as that of used to compute the loss in \cref{alg:uncertain_multi-class}.  We will optimize the multi-class square hinge loss and not the hinge loss for the reason discussed in \cref{sec:hinge}. Let us introduce the following notation, 
\begin{align*}
    \delta_x(y,y') = \phi(x,y) - \phi(x,y').
\end{align*}
Hence, the expression for the square hinge loss is
\begin{align*}
  {\ell}(x,y, \theta) = \frac{1}{2}\hat{\ell}^2(x,y, \theta) = \frac{1}{2}\left[ 1 - \theta^\top \delta_{x}\left(y,y^\star(\theta,x,y)\right)\right]_+^2.
\end{align*}
The gradient of $\ell(x,y, \theta)$ with respect to $\theta$ can be written as,
\begin{align*}
    \nabla {\ell}(x,y, \theta) = -\hat{\ell}(x,y, \theta) \delta_{x}\left(y,y^\star(\theta,x,y)\right).
\end{align*}
We consider the projected stochastic gradient descent update to  to update the model parameter $\theta$ for uncertainty sampling based active learning in multi-class classification:
\begin{align}
     \theta_{t+1} = \Pi_{\|\theta\| \leq B}\left[\theta_t + \gamma z_t \delta_{x_t}\left(y_t,y^\star(\theta_t,x_t,y_t)\right)  \hat{\ell}(x_t,y_t, \theta_t)\right], \label{eq:update_multi}
\end{align}
where $z_t$ is a Bernoulli  random variable for fixed $\theta_t$ and $x_t$ such that $p(z_t=1|x_t,\theta_t) = \sigma(\theta_t, x_t)$ where $\sigma:\mathbb{R}\rightarrow [0,1]$ is an even function. $\Pi_{\|\theta\| \leq B}$ denotes the projection operator which projects $\theta_t$ for all $t$ in the ball of radius $B$ centered around origin. In the theorem below, we show the convergence of the algorithm proposed in \cref{alg:uncertain_multi-class}.

\begin{theorem}\label{thm:no_noise_multi}
Consider a set of $n$ \textit{i.i.d}   samples $(x_i,y_i)$ jointly sampled from $\mathcal{P}$ such that $x_i \in \rb^d$,  and  $y_i\in \{1,2,\ldots,k\}$ for all $i=1,\dots,n$. Then under the assumption that there exists a  set of $d$-dimensional optimal half-spaces $\theta_\star = \{\theta_\star(1),\theta_\star(2),\ldots,\theta_\star(k) \}$   corresponding to each class   for which $\theta_\star^\top \delta_{x}(y,y^\star(\theta_\star,x,y)) \geq \rho^\star$ for all $(x,y)$ pair in $\mathcal{P}$, the following convergence guarantee exists for Algorithm \ref{alg:uncertain_multi-class} under projected gradient descent update in equation \eqref{eq:update_multi},
\begin{align}
      \E \hat{\ell}(x,y,\bar{\theta}_n) \leq \frac{R^2(1+BR) \|\theta_1-\theta_\star\|^2}{ \min\left\{\frac{1}{\mu}, \frac{\rho^\star - 1}{1+\mu} \right\}^2 n},
\end{align}
for the choice of  $\sigma(\theta,x) = \frac{1}{1+\mu \left|\theta_t(y_{t(1)}^\star)^\top x_t - \theta_t(y_{t(2)}^\star)^\top x_t\right|}$, step size $\gamma = \min\left\{\frac{1}{\mu}, \frac{\rho^\star - 1}{1+\mu} \right\} \frac{1}{2R^2(1+BR)}$ and $\|\delta_x(i,j)\| \leq R $ for all $x$ in the domain $\mathcal{X}$, and for all $i \neq j \in [k]$ and $\|\theta\|\leq B$.
\end{theorem}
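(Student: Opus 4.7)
The plan is to mirror the binary argument from \cref{thm:no_noise_binary}, making two modifications dictated by the multi-class setting: the projection $\Pi_{\|\theta\|\le B}$ must be absorbed by the standard non-expansiveness trick, and the sampling function now sees only the top-two-score margin $m_t := \theta_t(y_{t(1)}^\star)^\top x_t - \theta_t(y_{t(2)}^\star)^\top x_t$ rather than $|\theta_t^\top \delta_t|$. Writing $\delta_t := \delta_{x_t}(y_t, y^\star(\theta_t,x_t,y_t))$ and $\hat{\ell}_t := \hat{\ell}(x_t,y_t,\theta_t)$, non-expansiveness of the projection (assuming, as is implicit in the statement, that $\|\theta_\star\|\le B$) followed by taking the conditional expectation over $z_t$ given $(\theta_t, x_t, y_t)$ gives
\begin{align*}
\E\bigl[\|\theta_{t+1}-\theta_\star\|^2 \mid \theta_t, x_t, y_t\bigr] \le \|\theta_t-\theta_\star\|^2 + 2\gamma\,\sigma(\theta_t,x_t)\,\hat{\ell}_t\,\delta_t^\top(\theta_t-\theta_\star) + \gamma^2 R^2\,\sigma(\theta_t,x_t)\,\hat{\ell}_t^2.
\end{align*}

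As in the binary case, I would reduce everything to two structural inequalities for $\sigma$, valid whenever $\hat{\ell}_t > 0$:
\begin{align*}
\sigma(\theta_t,x_t)\,\hat{\ell}_t \le c_1, \qquad \sigma(\theta_t,x_t)\,\delta_t^\top(\theta_t-\theta_\star) \le -c_2.
\end{align*}
The first is a one-line calculation: $\sigma\le 1$, and $\hat{\ell}_t \le 1 + |\theta_t^\top \delta_t| \le 1 + \|\theta_t\|\|\delta_t\| \le 1 + BR$, so $c_1 = 1+BR$ works; this is precisely the source of the extra factor $1+BR$ in the numerator of the stated rate.

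The second inequality is the main obstacle, and I would handle it by splitting on the rank of $y_t$ under $\theta_t$. In case (A) $y_t = y_{t(1)}^\star$, so $y^\star(\theta_t, x_t, y_t) = y_{t(2)}^\star$ and $m_t = \theta_t^\top\delta_t \in [0,1)$; in case (B1) $y_t = y_{t(2)}^\star$, so $m_t = -\theta_t^\top \delta_t$; and in case (B2) $y_t$ ranks third or lower, and one only has $0 \le m_t \le |\theta_t^\top \delta_t|$. The separability assumption yields $\theta_\star^\top \delta_t \ge \rho^\star$ regardless of the adversarial choice of $y^\star$, so when $\hat{\ell}_t>0$ the numerator $\theta_t^\top\delta_t - \theta_\star^\top \delta_t$ is at most $1-\rho^\star<0$. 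A case analysis parallel to Lemma~\ref{lem:sampling_binary} then produces
\begin{align*}
\frac{\theta_t^\top\delta_t - \theta_\star^\top \delta_t}{1+\mu m_t} \le -\min\!\left\{\frac{1}{\mu},\; \frac{\rho^\star-1}{1+\mu}\right\} =: -c_2.
\end{align*}
The delicate branch is (B2), where the denominator $1+\mu m_t$ can be much smaller than $1+\mu|\theta_t^\top\delta_t|$; this is offset by the extra contribution $-|\theta_t^\top\delta_t|$ the numerator acquires from $\theta_t^\top\delta_t = -|\theta_t^\top\delta_t|$, and a short monotonicity check on $m_t \mapsto (\rho^\star + m_t)/(1+\mu m_t)$ shows the infimum is $\min\{\rho^\star,1/\mu\}\ge c_2$.

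Plugging both bounds into the recursion gives
\begin{align*}
\E\bigl[\|\theta_{t+1}-\theta_\star\|^2 \mid \theta_t, x_t, y_t\bigr] \le \|\theta_t-\theta_\star\|^2 - \hat{\ell}_t\bigl(2\gamma c_2 - \gamma^2 R^2 c_1\bigr),
\end{align*}
and the theorem's step-size choice makes the bracketed coefficient of order $c_2^2/(R^2 c_1)$. Taking the full expectation, telescoping over $t=1,\dots,n$, dividing by $n$, and finally applying convexity of $\hat{\ell}(x,y,\cdot)$ together with Jensen's inequality on $\bar{\theta}_n$ (using that a fresh $(x,y)\sim\mathcal{P}$ is independent of $\theta_t$ by the i.i.d.\ assumption) delivers
$\E\,\hat{\ell}(x,y,\bar{\theta}_n) \le R^2(1+BR)\|\theta_1-\theta_\star\|^2 \big/ \bigl(\min\{1/\mu,(\rho^\star-1)/(1+\mu)\}^2\, n\bigr)$, matching the claim. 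The principal difficulty in executing this plan is the multi-case verification of the second $\sigma$-condition, since the sampling function is inherently blind to $y_t$ and thus cannot directly see $|\theta_t^\top\delta_t|$.
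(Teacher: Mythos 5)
Your proposal is correct and follows essentially the same route as the paper's proof: non-expansiveness of the projection, conditioning on $(\theta_t,x_t,y_t)$ to replace $z_t$ by $\sigma(\theta_t,x_t)$, the two structural conditions on $\sigma$ with $c_1 = 1+BR$ (via Cauchy--Schwarz and $\|\theta\|\le B$, $\|\delta_x(i,j)\|\le R$) and $c_2 = \min\{1/\mu,(\rho^\star-1)/(1+\mu)\}$, followed by telescoping and Jensen. The only cosmetic difference is that the paper's Lemma~\ref{lem:sampling_multi} collapses your three-case analysis into the single inequality $\left|\theta_t(y_{t(1)}^\star)^\top x_t - \theta_t(y_{t(2)}^\star)^\top x_t\right| \le \left|\theta_t^\top\delta_{x_t}(y_t,y^\star(\theta_t,x_t,y_t))\right|$ and then reuses the one-dimensional minimization of $u\mapsto(\rho^\star-u)/(1+\mu|u|)$ from the binary case, yielding the same constants.
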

\begin{proof}[Proof sketch] See the complete proof  given in the Appendix. Using the update given in \cref{eq:update_multi} and taking expectations  only with respect to $z_t$ considering $x_t,y_t,\theta_t$ fixed, we get the following
\begin{align*}
    \E \|\theta_{t+1} - \theta^\star \|^2  &\leq   2\gamma \sigma(\theta_t,x_t) \hat{\ell}(x_t,y_t, \theta_t)\left( \theta_t^\top \delta_{x_t}\left(y_t,y^\star(\theta_t,x_t,y_t)\right)-  {\theta^\star}^\top \delta_{x_t}\left(y_t,y^\star(\theta_t,x_t,y_t)\right) \right) \notag \\
   &\qquad \qquad \qquad \qquad + \|\theta_{t} - \theta^\star \|^2 +  \gamma^2 \sigma(  \theta_t, x_t) R^2 \hat{\ell}^2(x_t,y_t, \theta_t),
\end{align*}
where $R$ is the upper bound on $\|\delta_x(i,j)\|$ for all $x\in \mathcal{X}$ and $i,j \in \{1,2,\dots,k\}$. Similar to the case of binary classification, we would need to find the function $\sigma(\theta,x)$ which satisfy the following properties for $\theta_t^\top \delta_{x_t}(y_t,y^\star(\theta_t,x_t,y_t)) <1$,
\begin{align}
    &\sigma(\theta_t,x_t) \hat{\ell}^2(x_t,y_t,\theta_t) \leq c_1\hat{\ell}(x_t,y_t,\theta_t) \label{eq:sigma_multi_con1}\\
    &\sigma(\theta_t,x_t) \left({\theta^\star}^\top \delta_{x_t}\left(y_t,y^\star(\theta_t,x_t,y_t)\right)    - \theta_t^\top \delta_{x_t}\left(y_t,y^\star(\theta_t,x_t,y_t)\right)   \right) \geq c_2, \label{eq:sigma_multi_con2}
\end{align}
for some positive constants $c_1$ and $c_2$. Let us now define top two predictions by the classifier $\theta$  for sample pair $(x,y)$ are as follows,
\begin{align}
    &y_{(1)}^\star = \argmax_{z\in \mathcal{Y}}\theta^\top \phi(x,z) \\
    &y_{(2)}^\star =\argmax_{z\in \mathcal{Y} \backslash y_{(1)}^\star }\theta^\top \phi(x,z).
\end{align}
Then, we show in Lemma~\ref{lem:sampling_multi} that choosing 
\begin{align*}
    \sigma(\theta_t,x_t) &= \frac{1}{1+\mu \left|\theta_t^\top \delta_{x_t}(y_{t(1)}^\star,y_{t(2)}^\star)\right|} =\frac{1}{1+\mu \left|\theta_t(y_{t(1)}^\star)^\top x_t - \theta_t(y_{t(2)}^\star)^\top x_t\right|},
\end{align*}
where $\mu$ is a positive constants, satisfies the conditions in \cref{eq:sigma_multi_con1} and \cref{eq:sigma_multi_con2} for $c_1\geq (1+BR)$ and for $c_2 \leq \frac{\rho^\star - 1}{1+\mu}$. Finally after taking expectations and applying Jensen's inequality, we get the following
\begin{align*}
    \E \hat{\ell}(x,y,\bar{\theta}_n) \leq \frac{R^2(1+BR)(1+\mu)^2\|\theta_0-\theta_\star\|^2}{n (\rho^\star - 1)^2},
\end{align*}
for all $\mu \geq 0$.
\end{proof}

\paragraph{On Mistake Bound.} Similar to the case of binary classification we  discussed in \cref{sec:hinge}, the probability of misclassification  for the multiclass case  is bounded by the expected classification loss (multi-hinge loss). Hence, for an independently sampled pair $(x,y)$ and for  $\mu\geq 0$,
\begin{align*}
    &\P(\bar{\theta}_{n}^\top \delta_{x}(y,y^\star(\bar{\theta}_{n},x,y))  \leq 0) \leq \frac{R^2(1+BR)(1+\mu)^2\|\theta_0-\theta_\star\|^2}{n (\rho^\star - 1)^2} \notag \\
    \Rightarrow~ &\P(\bar{\theta}_{n}(y)^\top x -  \bar{\theta}_{n}(y^\star(\bar{\theta}_{n},x,y))^\top x    \leq 0)    \leq \frac{R^2(1+BR)(1+\mu)^2\|\theta_0-\theta_\star\|^2}{n (\rho^\star - 1)^2}.
\end{align*}

\paragraph{Discussion.} It is directly not clear from the bound  that how to choose $\mu$ to have a direct gain of applying active learning method. However similar to the binary classification case, the querying of a label is only required when $\theta_t^\top \delta_{x_t}(y_t,y^\star(\theta_t,x_t,y_t)) <1$ as the gradient is $0$ when $\theta_t^\top \delta_{x_t}(y_t,y^\star(\theta_t,x_t,y_t)) \geq 1$. In that case, choosing larger $\mu$ will query less number of labels when $\theta_t^\top \delta_{x_t}(y_t,y^\star(\theta_t,x_t,y_t)) \geq 1$, however, then it will also start to discard informative samples. A good $\mu$ should be chosen based on experimental evidence. Nevertheless, the algorithm converges for all choices of~$\mu$. 

Denoting the expected number of samples labeled in $t$ steps as $\#_t$, we get,
\begin{align*}
    \#_n &= \sum_{t=0}^{n-1} \sigma(\theta_t,  x_t) = \sum_{t=0}^{n-1} \frac{1}{1 +  \mu \left|\theta_t^\top \delta_{x_t}(y_{t(1)}^\star,y_{t(2)}^\star)\right| },
\end{align*}
which can be significantly less than $n$ if the absolute value of $\theta_t^\top \delta_{x_t}(y_{t(1)}^\star,y_{t(2)}^\star) $ is large for most of the time instance $t$, \textit{i.e.}, most of the points are far from the decision boundary. 

\vspace{-1mm}
\subsection{Towards Uncertainty Sampling for Inseparable Data} \label{sec:insep}
\vspace{-1mm}
In this section, we  discuss   uncertainty sampling for active learning in a more realistic scenario, that is, the inseparable case. We assume the existence of mild noise in the data. That means there exists a $\theta_\star$ such that the following conditions about the classification noise hold in the the case of binary and multi-class prediction problem respectively for a given sample pair~$(x,y)$,
\begin{align}
    &\P(y \theta_\star^\top x |(x,y) \leq \rho^\star) \leq \eta ~\text{(binary classification)}, \label{eq:noise_con_bin}\\
    &\P(\theta_\star^\top \delta_{x}(y,y^\star(\theta,x,y)) |(x,y) \leq \rho^\star) \leq \eta  ~\text{(multi-class classification)} \label{eq:noise_con_multi}.
\end{align}
The assumption made about the noise in above equations are relatively stronger than  noise conditions often assumed in the statistical learning theory literature \cite{massart2006risk,tsybakov2004optimal}. However, extending our analysis under Tsyabkov's noise condition \cite{tsybakov2004optimal} is beyond the scope of this paper and could be considered in a subsequent work.  Before moving to present the main results in the noisy case, we mention below the projected stochastic gradient descent update for the binary classification, the update looks like as follows,
\begin{align}
    \theta_{t+1} = \Pi_{\| \theta\|\leq B}\left[  \theta_t  + \gamma~z_t(y_t x_t)\big[1 -y_t({\theta_t}^\top x_t) \big]_{+} \right]. \label{eq:update_bin_project}
\end{align}
Here below, we present our convergence result for inseparable data. 

\begin{theorem} \label{thm:noisy_binary}
Consider a set of $n$ \textit{i.i.d}   samples $(x_i,y_i)$ jointly sampled from $\mathcal{P}$ such that $x_i \in \rb^d$,  and  $y_i\in \{-1,1\}$ for all $i=1,\dots,n$. Then under the assumption  in equation \eqref{eq:noise_con_bin} for all $(x,y)$ pair in $\mathcal{P}$ and for the choice of $\sigma(\theta,x)= \frac{1}{1+\mu |\theta^\top x|}$, $\mu > 0$, the following convergence guarantee exists for Algorithm \ref{alg:uncertain_binary}:
\begin{enumerate}
    \item If the noise parameter  $$\eta < \frac{\min\left\{ \frac{1}{\mu} , \frac{\rho^\star - 1}{1+\mu}\right\}}{\max\left\{ R\|\theta_\star\| , \frac{1+R\|\theta_\star\|}{1+\mu} \right\} + \min\left\{ \frac{1}{\mu} , \frac{\rho^\star - 1}{1+\mu}\right\}}$$ and iterates in \cref{alg:uncertain_binary} are updated via   stochastic gradient descent update in equation~\eqref{eq:update_bianry}, then for step size $\gamma = \frac{ (1- \eta)\min\left\{\frac{1}{\mu},\frac{\rho^\star - 1}{1+\mu}\right\} -   \eta \max \left\{ \frac{1+R\|\theta_\star\|}{1+\mu},R\|\theta_\star\|\right\}}{ R^2 \max\left\{1,\frac{1}{\mu}\right\}}$, we have 
    \begin{align}
        \E (1 - y \bar{\theta}_n^\top x)_{+} \leq \frac{R^2 \max\left\{1,\frac{1}{\mu}\right\} \|\theta_1- \theta_\star \|^2}{\Gamma^2 n }\notag ,
    \end{align}
    
    such that $\|x\|\leq R$ for all $x \in \mathcal{X}$ and where \small$\Gamma = \left[(1- \eta)\min\left\{\frac{1}{\mu},\frac{\rho^\star - 1}{1+\mu}\right\} -   \eta \max \left\{ \frac{1+R\|\theta_\star\|}{1+\mu},R\|\theta_\star\|\right\} \right]$ \normalsize.
    \item If the noise parameter $\eta$ satisfies $$\eta \geq  \frac{\min\left\{ \frac{1}{\mu} , \frac{\rho^\star - 1}{1+\mu}\right\}}{\max\left\{ R\|\theta_\star\| , \frac{1+R\|\theta_\star\|}{1+\mu} \right\} + \min\left\{ \frac{1}{\mu} + \frac{\rho^\star - 1}{1+\mu}\right\}}$$ and iterates in \cref{alg:uncertain_binary} are updated via projected stochastic gradient descent update in equation~\eqref{eq:update_bin_project}, then for step size $\gamma = \frac{(1-\eta)\min\left\{\frac{1}{\mu}, \frac{\rho^\star -1}{1+\mu}\right\}}{R^2 \max\left\{1,\frac{1}{\mu}\right\}} $, we have
    \small
    \begin{align}
       \E (1 - y \bar{\theta}_n^\top x)_{+}  &\leq  \frac{\left(R^2 \max\left\{1,\frac{1}{\mu}\right\}\right) \|\theta_{1} - \theta^\star \|^2 }{ (1- \eta)^2\min\left\{\frac{1}{\mu},\frac{\rho^\star - 1}{1+\mu}\right\}^2 n}  + O(\eta) \notag ,
    \end{align}
    \normalsize
    such that $\|x\|\leq R$ for all $x \in \mathcal{X}$.
\end{enumerate}
\end{theorem}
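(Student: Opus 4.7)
The plan is to adapt the one-step contraction argument of Theorem~\ref{thm:no_noise_binary} to the inseparable setting by splitting each sample into a ``clean'' part, on which the margin assumption still applies, and a ``noisy'' part, which by hypothesis occurs with probability at most $\eta$. Starting from either \eqref{eq:update_bianry} (Case~1) or \eqref{eq:update_bin_project} (Case~2) --- using the non-expansiveness $\|\Pi_{\|\theta\|\leq B}(v) - \theta_\star\| \leq \|v - \theta_\star\|$, which holds since $\|\theta_\star\|\leq B$ --- I would expand $\|\theta_{t+1} - \theta_\star\|^2$ and take expectation only with respect to the Bernoulli indicator $z_t$ (exploiting $z_t^2=z_t$) to obtain
\[
\E_{z_t}\|\theta_{t+1} - \theta_\star\|^2 \leq \|\theta_t - \theta_\star\|^2 + 2\gamma\,\sigma(\theta_t,x_t)(1-y_t\theta_t^\top x_t)_+\bigl(y_t\theta_t^\top x_t - y_t\theta_\star^\top x_t\bigr) + \gamma^2 R^2 \sigma(\theta_t,x_t)(1-y_t\theta_t^\top x_t)_+^2.
\]
The quadratic term is handled uniformly by the second bound of Lemma~\ref{lem:sampling_binary}, which gives $\sigma(\theta_t,x_t)(1-y_t\theta_t^\top x_t)_+^2 \leq c_1(1-y_t\theta_t^\top x_t)_+$ with $c_1=\max\{1,1/\mu\}$.

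The cross term is where the noise intervenes. Let $A_t:=\{y_t\theta_\star^\top x_t\geq\rho^\star\}$, so that \eqref{eq:noise_con_bin} yields $\P(A_t^c)\leq\eta$. On $A_t$ the first bound of Lemma~\ref{lem:sampling_binary} applies without modification and gives $\sigma(\theta_t,x_t)(y_t\theta_t^\top x_t-y_t\theta_\star^\top x_t)\leq -c_2$ with $c_2=\min\{1/\mu,(\rho^\star-1)/(1+\mu)\}$. On $A_t^c$ I would bound the same quantity by a positive constant $M$ through a case split on the sign of $y_t\theta_t^\top x_t$ in the loss-active region $\{y_t\theta_t^\top x_t<1\}$: for $y_t\theta_t^\top x_t\in[0,1)$ one has $|\theta_t^\top x_t|<1$, and after writing the ratio as $(|\theta_t^\top x_t|+R\|\theta_\star\|)/(1+\mu|\theta_t^\top x_t|)$ the supremum is $(1+R\|\theta_\star\|)/(1+\mu)$; for $y_t\theta_t^\top x_t<0$ the uniform bounds $\sigma|\theta_t^\top x_t|\leq 1/\mu$ and $\sigma|\theta_\star^\top x_t|\leq R\|\theta_\star\|$ give the supremum $R\|\theta_\star\|$. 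The envelope is precisely $M:=\max\{R\|\theta_\star\|,(1+R\|\theta_\star\|)/(1+\mu)\}$, matching the constant in the theorem. Integrating the pointwise inequality $\sigma(\theta_t,x_t)(y_t\theta_t^\top x_t-y_t\theta_\star^\top x_t)\leq -c_2\mathbf{1}_{A_t}+M\mathbf{1}_{A_t^c}$ against $(1-y_t\theta_t^\top x_t)_+$, using $\P(A_t^c)\leq\eta$, converts the cross term into a descent coefficient of order $(1-\eta)c_2-\eta M=\Gamma$ in Case~1, or $(1-\eta)c_2$ plus an additive $O(\eta)$ term in Case~2, where the projection guarantees $(1-y_t\theta_t^\top x_t)_+\leq 1+BR$ on the noisy event.

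Assembling the pieces yields the recursion $\E\|\theta_{t+1}-\theta_\star\|^2 \leq \E\|\theta_t-\theta_\star\|^2 - (2\gamma\Gamma-\gamma^2 R^2 c_1)\,\E(1-y_t\theta_t^\top x_t)_+$ (with an added $O(\gamma\eta)$ bias in Case~2). The step size in the statement is precisely the one that maximizes the net coefficient $2\gamma\Gamma-\gamma^2 R^2 c_1$, giving $\gamma=\Gamma/(R^2c_1)$ and descent magnitude $\Gamma^2/(R^2c_1)$. Telescoping over $t=1,\dots,n$ and applying Jensen's inequality to the averaged iterate $\bar\theta_n$ turns the summed loss bound into the stated $R^2c_1\|\theta_1-\theta_\star\|^2/(\Gamma^2 n)$ in Case~1, and into the same expression with $\Gamma$ replaced by $(1-\eta)c_2$ together with an irreducible $O(\eta)$ bias in Case~2.

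The main obstacle will be the constant-tracking on the noisy event: one must certify that the same sampling function $\sigma(\theta,x)=1/(1+\mu|\theta^\top x|)$ simultaneously delivers the clean descent constant $c_2$ (via Lemma~\ref{lem:sampling_binary}) and a commensurate noisy-ascent constant $M$, and that the resulting net rate $\Gamma=(1-\eta)c_2-\eta M$ is positive under the quantitative threshold on $\eta$ stated in the theorem. Once both branches of the sign analysis are verified to envelope to the claimed $M$, the remaining algebra is routine; overlooking either branch would inflate $M$ and yield a strictly stronger (and unnecessary) assumption on $\eta$.
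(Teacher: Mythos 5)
Your proposal follows essentially the same route as the paper's own proof (Lemma~\ref{lem:noisy_inter_bin} plus the two-case step-size optimization): the same clean/noisy event split under \eqref{eq:noise_con_bin}, the same use of Lemma~\ref{lem:sampling_binary} for the quadratic and clean cross terms, the same envelope constant $M=\max\{R\|\theta_\star\|,(1+R\|\theta_\star\|)/(1+\mu)\}$ on the noisy event, and the same telescoping-plus-Jensen finish with the projection bound $(1-y\theta_t^\top x)_+\leq 1+BR$ absorbing the $O(\eta)$ bias in Case~2. The only cosmetic difference is that you make the sign-split derivation of $M$ explicit where the paper asserts it directly; the argument is correct.
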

A similar result also holds for the case of multi-class classification. 
\begin{theorem}\label{thm:noisy_multi}
Consider a set of $n$ \textit{i.i.d}   samples $(x_i,y_i)$ jointly sampled from $\mathcal{P}$ such that $x_i \in \rb^d$,  and  $y_i\in \{1,2,\ldots,k\}$ for all $i=1,\dots,n$. Then under the assumption  in equation \eqref{eq:noise_con_multi} for all $(x,y)$ pair in $\mathcal{P}$ and for the choice of $\sigma(\theta,x)= \frac{1}{1+\mu \left|\theta(y_1^\star)^\top x - \theta(y_2^\star)^\top x \right|}$, $\mu > 0$, the following convergence guarantee exists for Algorithm \ref{alg:uncertain_multi-class} with projected stochastic gradient descent update (equation~\eqref{eq:update_multi}):
\begin{enumerate}
    \item If  $\eta < \frac{\min\left\{ \frac{1}{\mu}, \frac{\rho^\star -1}{1+\mu}\right\}}{1+\min\left\{ \frac{1}{\mu}, \frac{\rho^\star -1}{1+\mu}\right\} + R\|\theta_\star\|}$, then for step size $\gamma = \frac{(1-\eta)\min\left\{\frac{1}{\mu}, \frac{\rho_\star -1}{1+\mu}\right\} - \eta (1+R\|\theta_\star\|)}{R^2(1+BR)} $
    \begin{align*}
        \E (1 - y \bar{\theta}_n^\top x)_{+} \leq \frac{R^2(1+BR) \|\theta_1 - \theta_\star\|^2}{\Gamma^2 n},
    \end{align*}
    where $\Gamma = \left[ (1-\eta)\min\left\{\frac{1}{\mu}, \frac{\rho_\star -1}{1+\mu}\right\} - \eta (1+R\|\theta_\star\|)\right]$ and  $\|\delta_x(i,j)\|\leq R$ for all $x \in \mathcal{X}$ and $i,j\in [k]$.
    \item If  $\eta \geq  \frac{\min\left\{ \frac{1}{\mu}, \frac{\rho^\star -1}{1+\mu}\right\}}{1+\min\left\{ \frac{1}{\mu}, \frac{\rho^\star -1}{1+\mu}\right\} + R\|\theta_\star\|}$, then for step size $\gamma = \frac{(1-\eta)\min\left\{\frac{1}{\mu}, \frac{\rho^\star -1}{1+\mu}\right\}}{R^2 (1+BR)} $
    \begin{align*}
       \E (1 - y \bar{\theta}_n^\top x)_{+}  &\leq  \frac{ R^2 (1+BR) \|\theta_{1} - \theta^\star \|^2 }{ (1- \eta)^2\min\left\{\frac{1}{\mu},\frac{\rho^\star - 1}{1+\mu}\right\}^2 n}  + O(\eta),
    \end{align*}
    such that $\|\delta_x(i,j)\|\leq R$ for all $x \in \mathcal{X}$ and $i,j\in [k]$.
\end{enumerate}
\end{theorem}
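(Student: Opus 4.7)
The plan is to follow the template established by Theorem~\ref{thm:no_noise_multi} and Theorem~\ref{thm:noisy_binary}, combining the multi-class one-step analysis of the former with the noise-decomposition trick of the latter. Starting from the projected update~\eqref{eq:update_multi} and using non-expansiveness of the projection onto $\{\theta : \|\theta\|\leq B\}$ (note $\|\theta_\star\|\leq B$ under the assumption $B\geq \|\theta_\star\|$), I would expand $\|\theta_{t+1}-\theta_\star\|^2$ and take a conditional expectation with respect to the Bernoulli variable $z_t$ given $(x_t,y_t,\theta_t)$, which yields
\begin{align*}
\E\|\theta_{t+1}-\theta_\star\|^2 &\leq \|\theta_t-\theta_\star\|^2 + 2\gamma\,\sigma(\theta_t,x_t)\,\hat\ell(x_t,y_t,\theta_t)\bigl[\theta_t^\top \delta_{x_t}(y_t,y_t^\star) - \theta_\star^\top \delta_{x_t}(y_t,y_t^\star)\bigr] \\
&\qquad + \gamma^2\,\sigma(\theta_t,x_t)\,R^2\,\hat\ell^2(x_t,y_t,\theta_t),
\end{align*}
writing $y_t^\star$ for $y^\star(\theta_t,x_t,y_t)$ and using $\|\delta_{x_t}(i,j)\|\leq R$.

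Next I would split the sample space into the \emph{clean} event $\cE_t = \{\theta_\star^\top \delta_{x_t}(y_t,y^\star(\theta_\star,x_t,y_t)) \geq \rho^\star\}$ and its complement, which by~\eqref{eq:noise_con_multi} has probability at most $\eta$. On $\cE_t$, I would invoke Lemma~\ref{lem:sampling_multi} to get
\begin{align*}
\sigma(\theta_t,x_t)\,\hat\ell^2(x_t,y_t,\theta_t) &\leq (1+BR)\,\hat\ell(x_t,y_t,\theta_t), \\
\sigma(\theta_t,x_t)\bigl[\theta_\star^\top\delta_{x_t}(y_t,y_t^\star) - \theta_t^\top\delta_{x_t}(y_t,y_t^\star)\bigr] &\geq \min\!\Bigl\{\tfrac{1}{\mu},\tfrac{\rho^\star-1}{1+\mu}\Bigr\}\,\hat\ell(x_t,y_t,\theta_t),
\end{align*}
where the second inequality requires a short argument replacing the margin $\rho^\star$ that held pointwise in the noise-free case by its conditional version on $\cE_t$. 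On the complement, I would use the crude bounds $\sigma(\theta_t,x_t)\leq 1$, $|\theta_\star^\top\delta_{x_t}(y_t,y_t^\star)|\leq R\|\theta_\star\|$, and $\hat\ell(x_t,y_t,\theta_t)\leq 1+BR$, yielding a contribution of order $\eta(1+R\|\theta_\star\|)\,\hat\ell(x_t,y_t,\theta_t)$ to the cross term and $\eta R^2(1+BR)\hat\ell(x_t,y_t,\theta_t)$ to the quadratic term.

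Combining these, taking total expectation, and denoting $\Gamma_0=\min\{\tfrac{1}{\mu},\tfrac{\rho^\star-1}{1+\mu}\}$, the recursion becomes
\begin{align*}
\E\|\theta_{t+1}-\theta_\star\|^2 \leq \E\|\theta_t-\theta_\star\|^2 - 2\gamma\bigl[(1-\eta)\Gamma_0 - \eta(1+R\|\theta_\star\|)\bigr]\,\E\hat\ell(x_t,y_t,\theta_t) + \gamma^2 R^2(1+BR)\,\E\hat\ell(x_t,y_t,\theta_t),
\end{align*}
where the quadratic term uses $\hat\ell^2 \leq (1+BR)\hat\ell$ (since $\hat\ell\leq 1+BR$). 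In Case 1, the net coefficient $\Gamma=(1-\eta)\Gamma_0 - \eta(1+R\|\theta_\star\|)$ is strictly positive, and the optimal $\gamma$ balances the linear and quadratic terms, giving $O(1/n)$ after telescoping and Jensen's inequality applied to $\hat\ell$ (which is convex in $\theta$). In Case 2, $\Gamma$ may be negative; here I would instead drop the $\eta(1+R\|\theta_\star\|)$ term into an explicit $O(\eta)$ additive bias by writing the cross-term as $-2\gamma(1-\eta)\Gamma_0\,\E\hat\ell(x_t,y_t,\theta_t) + 2\gamma\eta(1+R\|\theta_\star\|)\E\hat\ell(x_t,y_t,\theta_t)$ and absorbing the latter after rearrangement, using that $\hat\ell\leq 1+BR$ to produce a bounded additive $O(\eta)$ term.

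The main obstacle will be the case-split bookkeeping: establishing that Lemma~\ref{lem:sampling_multi} remains applicable when the $\rho^\star$ margin holds only conditionally on $\cE_t$ (in particular, verifying that the inequality on $\sigma(\theta_t,x_t)(\theta_\star^\top\delta_{x_t}-\theta_t^\top\delta_{x_t})$ uses $\theta_\star^\top\delta_{x_t}(y_t,y_t^\star)\geq \rho^\star$ rather than the weaker bound at $y^\star(\theta_\star,x_t,y_t)$, which follows since $\theta_\star^\top\delta_{x_t}(y_t,y_t^\star) \geq \theta_\star^\top\delta_{x_t}(y_t,y^\star(\theta_\star,x_t,y_t)) \geq \rho^\star$ on $\cE_t$ by optimality of $y^\star(\theta_\star,x_t,y_t)$), and carefully controlling the $O(\eta)$ remainder in Case 2 so that the constants match the stated bound.
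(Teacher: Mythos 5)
Your proposal follows essentially the same route as the paper: the paper's Lemma~\ref{lem:noisy_inter_multi} is exactly your one-step recursion, obtained by expanding the projected update, conditioning on $z_t$, splitting the expectation over $y_t$ into the clean event (where Lemma~\ref{lem:sampling_multi} gives the margin term) and the noisy event of probability at most $\eta$ (bounded via $|\theta_\star^\top\delta_{x_t}|\leq R\|\theta_\star\|$ and $\sigma\leq 1$), and the two cases are then handled identically — optimal step size and telescoping in Case~1, absorbing the $\eta$ cross-term into an additive $O(\eta)$ bias via $\hat\ell\leq 1+BR$ in Case~2. The one subtlety you flag (that the noise condition stated at $y^\star(\theta_\star,x,y)$ transfers to $y^\star(\theta_t,x,y)$ by optimality) is handled correctly and is implicit in the paper's argument.
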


\paragraph{Discussion.} In the above two results, we see that our uncertainty sampling algorithm is robust to small noise. However, in the case of significantly large noise, our algorithm   has to pay for an extra error cost of the order $O(\eta)$.

\begin{figure*}[h!] 
\centering
\begin{subfigure}[t]{0.32\textwidth}
  \centering
  \includegraphics[width=\linewidth]{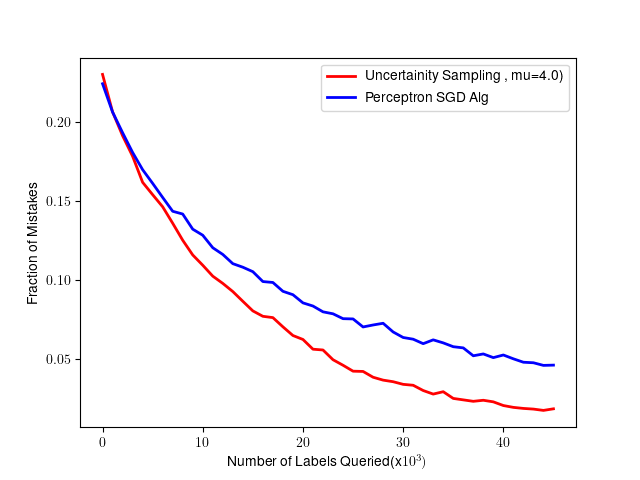}
  \caption{Test Error Comparison.}
  \label{fig:bin_syn_lin_test}
\end{subfigure}%
~
\begin{subfigure}[t]{0.32\textwidth}
  \centering
  \includegraphics[width=\linewidth]{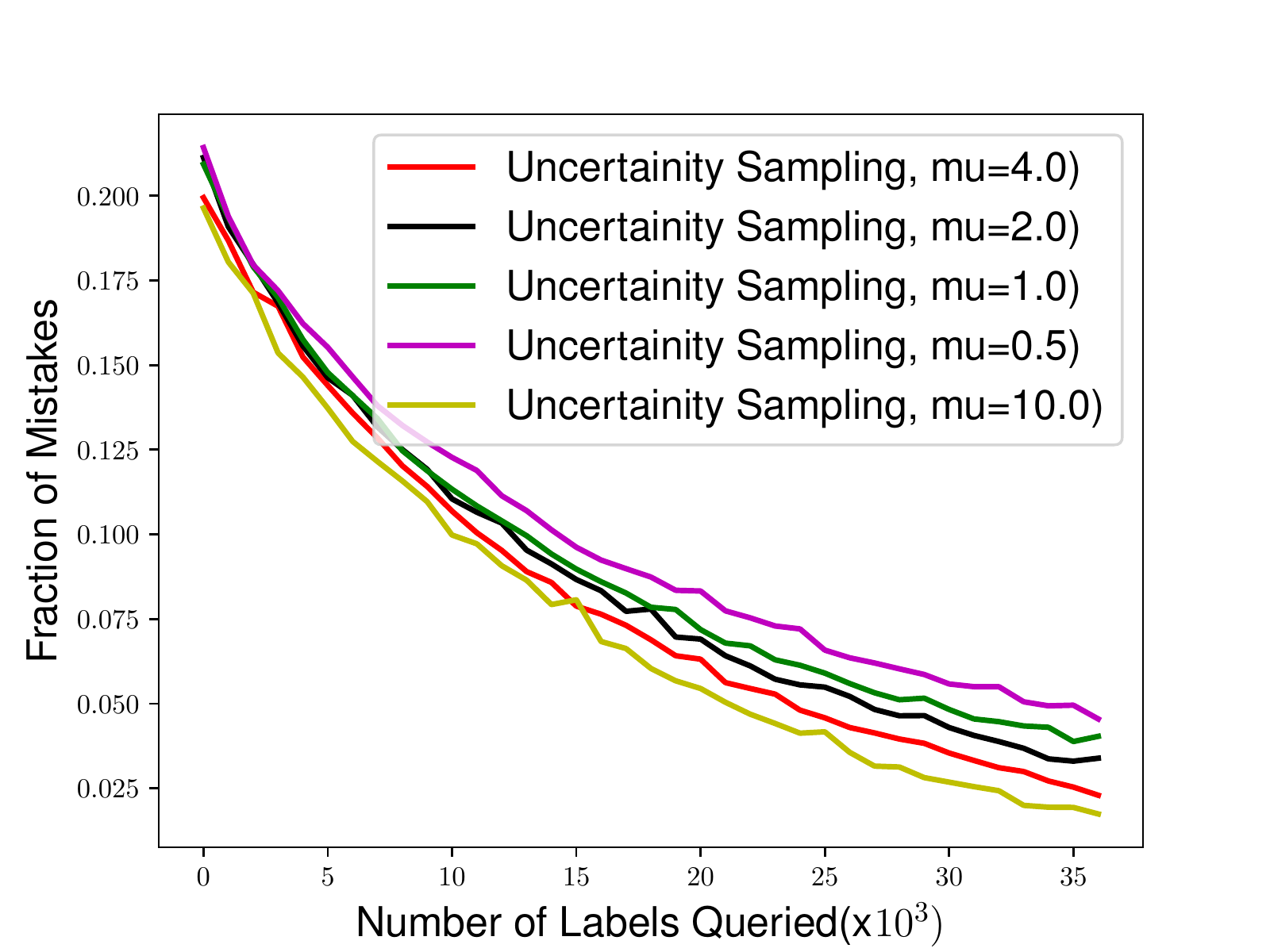}
  \caption{Comparing $\mu$.}
  \label{fig:bin_syn_lin_mu}
\end{subfigure}
~
\begin{subfigure}[t]{0.32\textwidth}
  \centering
  \includegraphics[width=\linewidth]{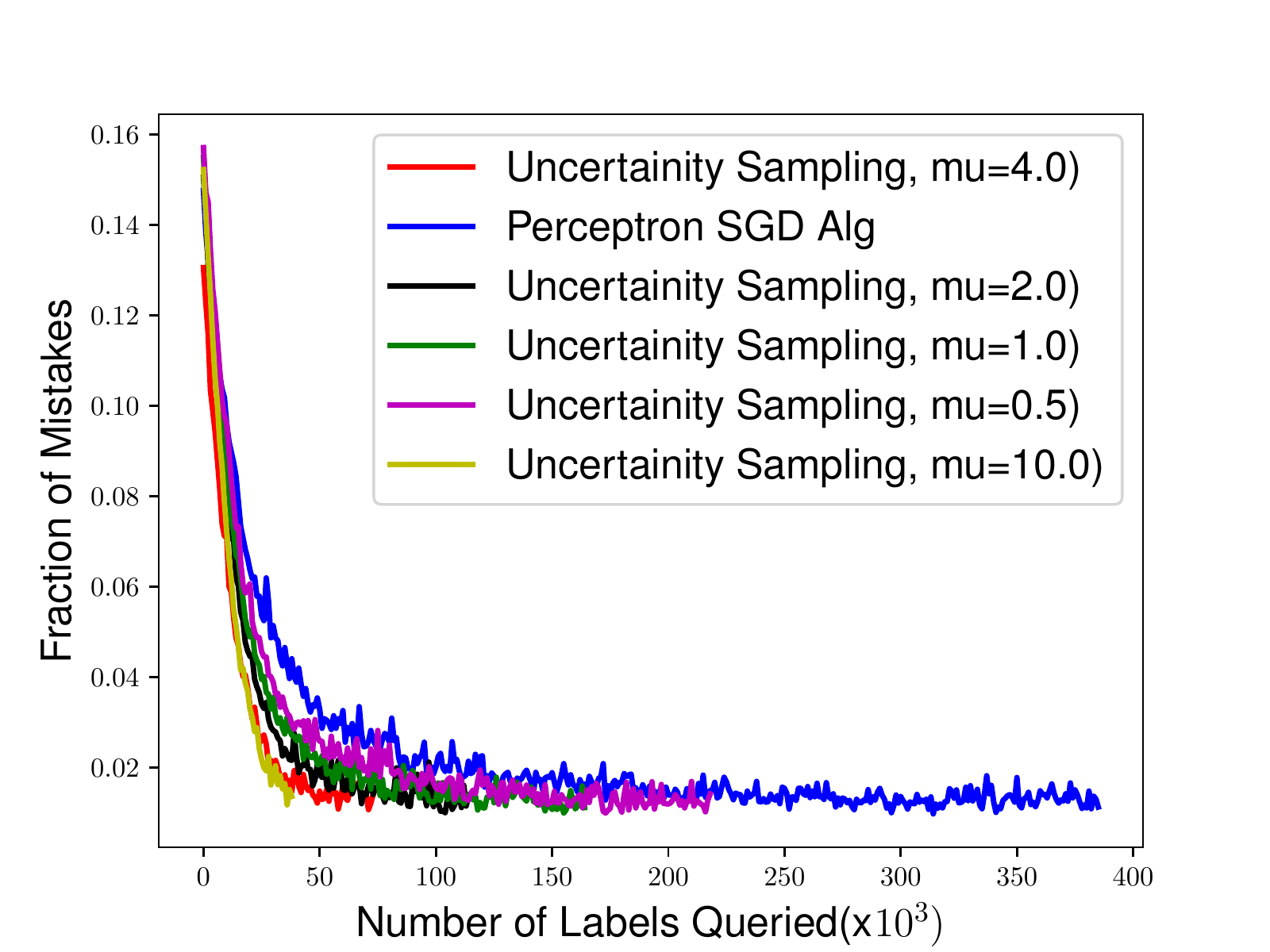}
  \caption{Unlabeled Data Requirement.}
  \label{fig:bin_syn_lin_unlabeled}
\end{subfigure}
\vspace{-2mm}
\caption{Experimental results for binary classification on synthetic data. }
 \label{fig:syn_binary}
 \vspace{-2mm}
\end{figure*}
\begin{figure*}[h!] 
\centering
\begin{subfigure}[t]{0.32\textwidth}
  \centering
  \includegraphics[width=\linewidth]{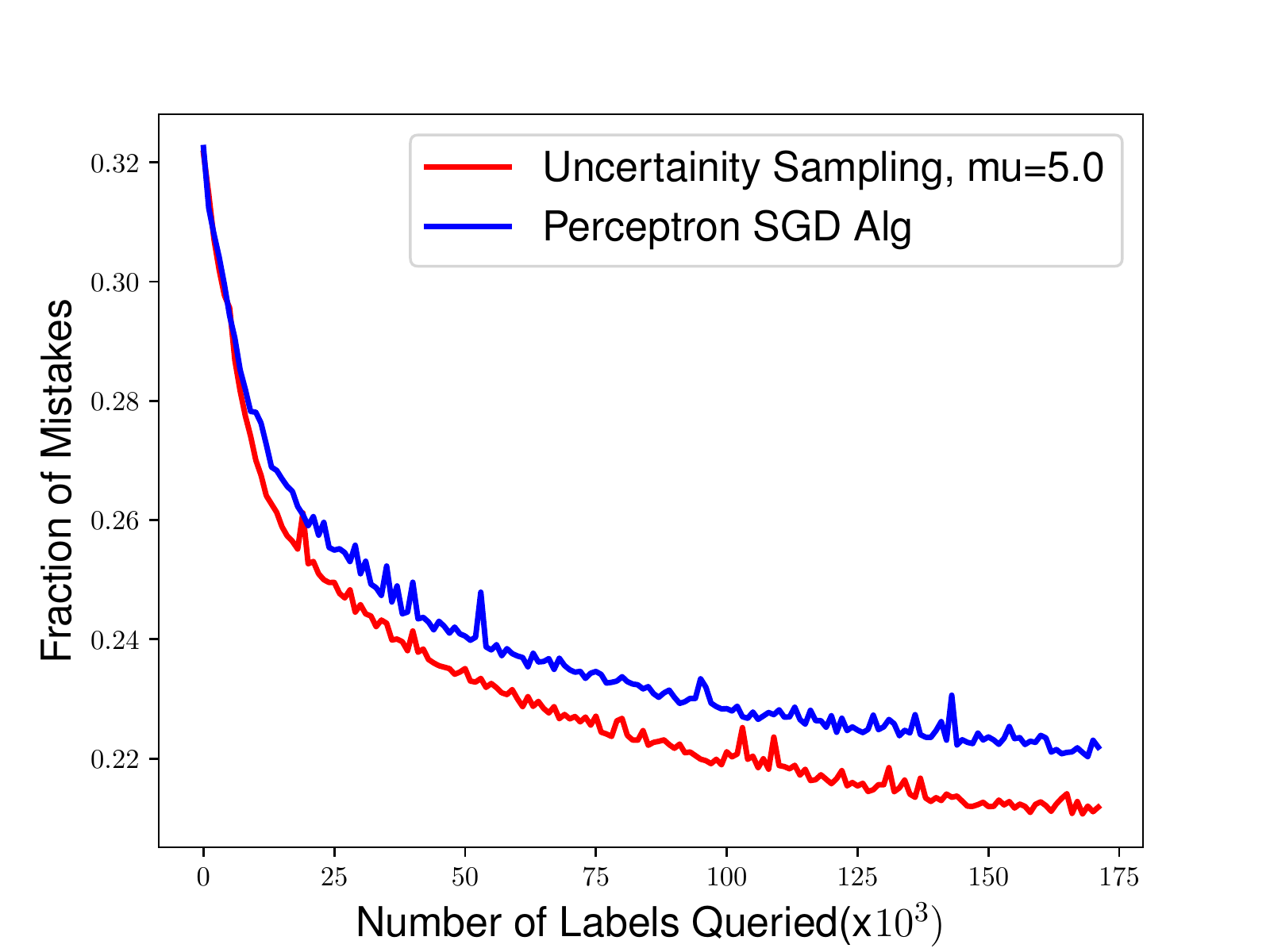}
  \caption{Test Error (Covertype dataset).}
  \label{fig:bin_cover_lin_test}
\end{subfigure}%
~
\begin{subfigure}[t]{0.32\textwidth}
  \centering
  \includegraphics[width=\linewidth]{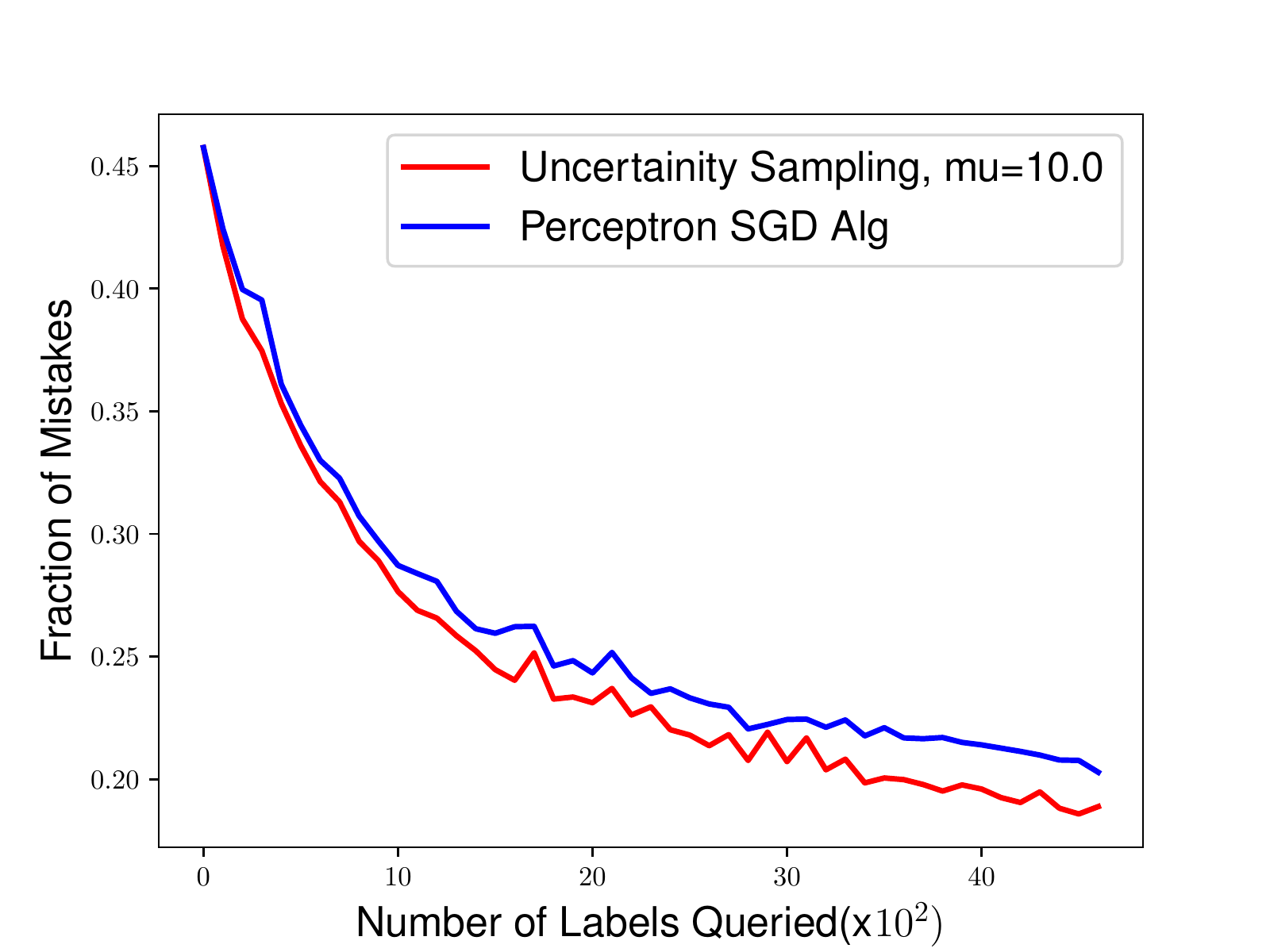}
  \caption{Test Error (letter dataset).}
  \label{fig:bin_letter_lin_mu}
\end{subfigure}
~
\begin{subfigure}[t]{0.32\textwidth}
  \centering
  \includegraphics[width=\linewidth]{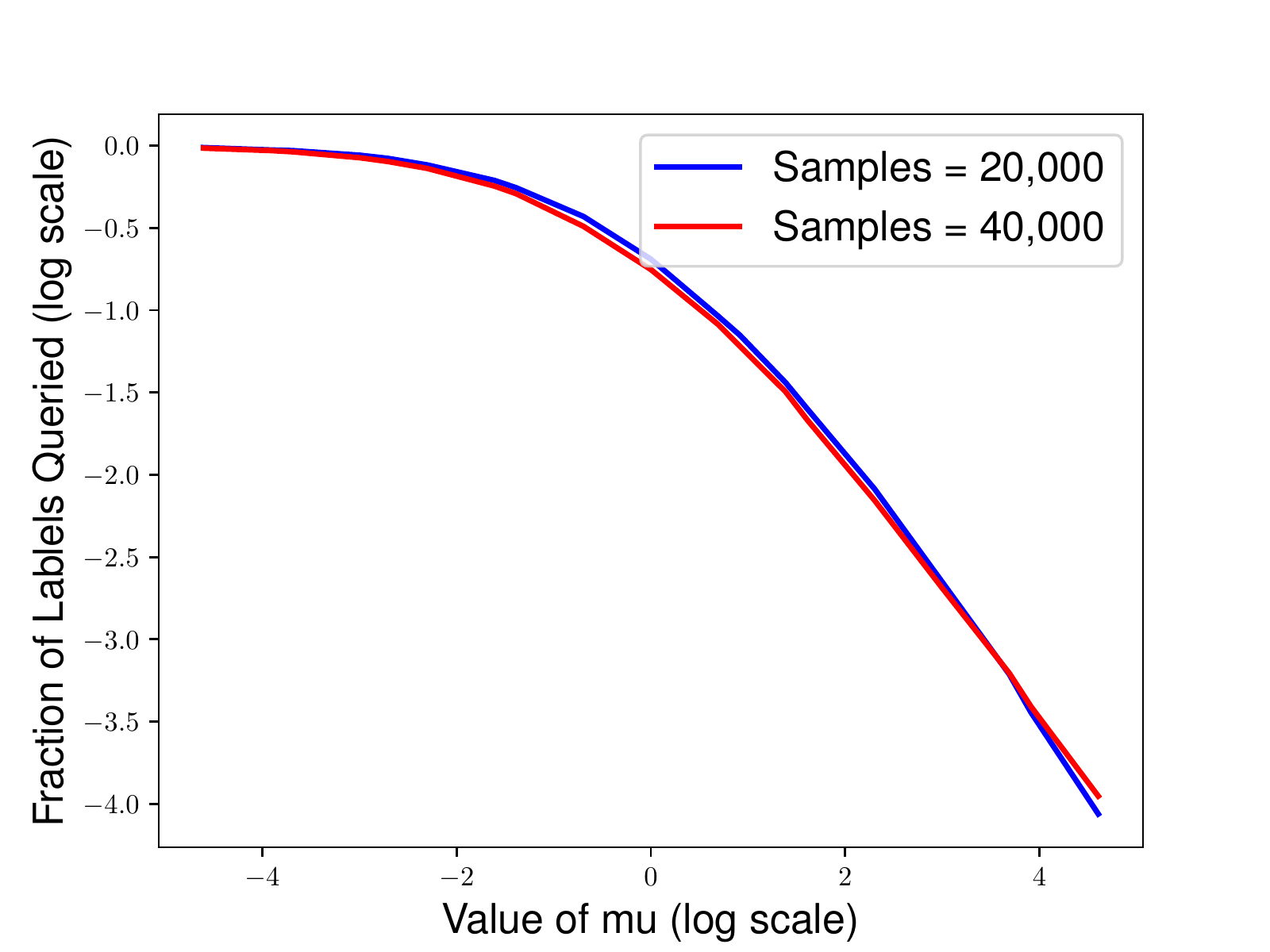}
  \caption{Number of queries vs $\mu$}
  \label{fig:evolve_mu}
\end{subfigure}
\vspace{-2mm}
\caption{Experimental results for binary classification. }
 \label{fig:real_bainary}
 \vspace{-2mm}
\end{figure*}
\vspace{-2mm}
\section{Experiments} \label{sec:expts}
\vspace{-2mm}
In this section, we perform an experimental evaluation for our proposed uncertainty sampling based active learning algorithm. The experiments are performed on both synthetic as well as real world data.  

\paragraph{Synthetic Data (Binary Classification).} We generate $n$ number of data points $x_i \in \mathbb{R}^d$ for $i \in \{1,2,\cdots,n\}$ having dimension $d=200$  from Gaussian distribution centered at 0 with covariance matrix $\Sigma$ which is a diagonal matrix. Similarly, we generate a 200-dimensional prediction vector $\theta$ sampled from the normal distribution. We compute the prediction vector $y_i$ for $x_i$ as follows, $y_i = \sign(\theta^\top x_i)$. Separately from the training set, another set of 15,000 data points are generated which are used to evaluate the test performance. 

To generate Fig.~1a, we fix $n = 200,000$, $\mu = 4$ and the $i$-th singular value of  $\Sigma$ is ${1}/{i}$. We run one pass of vanilla SGD algorithm and our uncertainty sampling based active learning algorithm. As clear from our plot, out of 200,000 samples, our algorithm asks labels for only 40,000 sample. Hence, for the comparison, we compare the performance on test set of first $40,000$ iteration of vanilla SGD with our algorithm.

In Fig.~1b, we compare the test performance for various $\mu$. It is obvious that larger value of $\mu$ implies a more aggressive sampling scheme. Hence, it is expected that for a fixed budget,  larger value of $\mu$ will provide better performance which is also reflected in the plot \ref{fig:bin_syn_lin_mu}.

In Fig.~1c, we plot the test accuracy with respect to the number of labels queries given with fixed budget of unlabeled data. As can be seen from the plot, vanilla SGD asks for the label of every data point. As we increase $\mu$, the number of queried labels goes down. However, the performance on test data remains almost the same surprisingly for uncertainty sampling based algorithm with increasing $\mu$ and for vanilla SGD at the end of one epoch.

\paragraph{Real World Datasets.} Next, we evaluate our algorithm on  real world datasets. We consider \emph{Covertype} and \emph{letter-binary} datasets to evaluate the test performance of uncertainty sampling based active learning for binary classification. Normalized binary version of datasets are downloaded from  ~\href{http://manikvarma.org/code/LDKL/download.html}{ \small \color{blue}{manikvarma.org/code/LDKL/download.html.}} \normalsize Since, the decision boundary for these datasets are non-linear,  we use randomized Fourier features \cite{rahimi2007random} to perform the task of classification. For both the datasets, we use 500 random fourier feature representation. We perform vanilla SGD and uncertainty sampling based active learning. We plot the result in Fig.~\ref{fig:bin_cover_lin_test} for \emph{covertype} and in Fig.~\ref{fig:bin_letter_lin_mu} for \emph{letter-binary}. We observe that  uncertainty sampling based active learning algorithm has low prediction error for a fixed budget. 

In the last Fig.~\ref{fig:evolve_mu}, we plot the fraction of labels queried for various value of $\mu$ when number of unlabeled synthetically generated samples are fixed to 20,000 and 40,000. The plot shows that, our algorithm tends to query similar fraction of labels irrespective of number of available unlabeled data points. 
\vspace{-2mm}
\section{Conclusion}
\vspace{-2mm}
In this paper, we show that our uncertainty sampling based active learning method converges  to the optimal predictor for linear models under the proposed sampling scheme for binary as well as for multi-class classification. We also extend our analysis for noisy case under restrictive noise assumptions (eqs.  \eqref{eq:noise_con_bin} and \eqref{eq:noise_con_multi}). As a future research direction, we would like to analyze uncertainty sampling algorithm based active learning algorithm under more generalized noise assumptions of \citet{tsybakov2004optimal}),and would like to consider more aggressive sampling schemes. 

\subsection*{Acknowledgements}
This work was funded in part by the French government under management of Agence Nationale de la Recherche as part of the “Investissements d’avenir” program, reference ANR-19-P3IA-0001 (PRAIRIE 3IA Institute). We also acknowledge support the European
Research Council (grant SEQUOIA 724063).

\bibliographystyle{plainnat}
\bibliography{opt-ml}
\normalsize
\newpage
\onecolumn
\appendix
\begin{center}
{\centering \LARGE Appendix }
\vspace{1cm}
\sloppy

\end{center}
\section{Binary Separable Classification}
\begin{theorem*}[Restatement of Theorem~\ref{thm:no_noise_binary}]
Consider a set of $n$ \textit{i.i.d}   samples $(x_i,y_i)$ jointly sampled from $\mathcal{P}$ such that $x_i \in \rb^d$,  and  $y_i\in \{-1,1\}$ for all $i=1,\dots,n$ then under the assumption that there exists a $\theta_\star$ for which $y(\theta_\star^\top x) \geq \rho^\star$ for all $(x,y)$ pair in $\mathcal{P}$, the following convergence guarantee exists for Algorithm \ref{alg:uncertain_binary},
\begin{align}
   \E (1 - y \theta_t^\top x)_{+} \leq \frac{R^2 \max\left\{1,\frac{1}{\mu}\right\} \|\theta_1-\theta_\star\|^2}{ \min\left\{\frac{1}{\mu}, \frac{\rho^\star - 1}{1+\mu} \right\}^2 n} ,
\end{align}
for the choice of  $\sigma(\theta,x) = \frac{1}{1+\mu |\theta^\top x|}$, step size $\gamma = \frac{\min\left\{\frac{1}{\mu},\frac{\rho^\star - 1}{1+\mu}\right\}}{R^2 \max\left\{1,\frac{1}{\mu}\right\}}$ and $\|x\| \leq R $ for all $x$ in the domain $\mathcal{X}$.
\end{theorem*}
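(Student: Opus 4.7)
My approach would be the standard ``one-step SGD descent lemma'' adapted to handle the random querying, combined with a careful case analysis that shows the chosen $\sigma$ preserves the drift towards $\theta_\star$. Starting from the update in \eqref{eq:update_bianry} and expanding $\|\theta_{t+1}-\theta_\star\|^2$, then taking conditional expectation in $z_t$ (using $\E[z_t\mid x_t,\theta_t]=\sigma(\theta_t,x_t)$ and $z_t^2=z_t$), I would obtain a recursion of the form
\begin{equation*}
\E\|\theta_{t+1}-\theta_\star\|^2 = \|\theta_t-\theta_\star\|^2 + 2\gamma\sigma(\theta_t,x_t)[1-y_t\theta_t^\top x_t]_+\,y_t(\theta_t-\theta_\star)^\top x_t + \gamma^2 \sigma(\theta_t,x_t) R^2 [1-y_t\theta_t^\top x_t]_+^2.
\end{equation*}
The cross term is negative whenever $y_t\theta_t^\top x_t < y_t\theta_\star^\top x_t$, which is guaranteed by separability when $y_t\theta_t^\top x_t<1\le \rho^\star$.

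The crux is then to find constants $c_1,c_2>0$ so that on the event $\{y_t\theta_t^\top x_t<1\}$ (the only case where the gradient is nonzero) both
\begin{equation*}
\sigma(\theta_t,x_t)[1-y_t\theta_t^\top x_t]_+^2 \;\le\; c_1\,[1-y_t\theta_t^\top x_t]_+ \qquad\text{and}\qquad \sigma(\theta_t,x_t)\,y_t(\theta_\star-\theta_t)^\top x_t \;\ge\; c_2
\end{equation*}
hold pointwise. For $\sigma(\theta,x)=1/(1+\mu|\theta^\top x|)$ this is a two-regime bookkeeping exercise on the scalar $a:=y_t\theta_t^\top x_t\in(-\infty,1)$. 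When $a\ge 0$ we have $|\theta_t^\top x_t|=a\le 1$, so $\sigma\ge 1/(1+\mu)$ and $1-a\le 1$, giving the cleaner constants $c_1\ge 1$ and $c_2\le (\rho^\star-1)/(1+\mu)$ (using $y_t\theta_\star^\top x_t\ge\rho^\star$). When $a<0$, the hinge term $(1-a)$ can be large, but $|\theta_t^\top x_t|=-a=|a|$ makes $\sigma(1-a)=(1-a)/(1+\mu|a|)\le 1/\mu$ (using $1-a \le 1+|a|\le (1+\mu|a|)/\mu$ for $\mu\le 1$, and an analogous bound otherwise), while $\sigma(\rho^\star-a)\ge \sigma|a|\ge 1/\mu$ handles the second condition. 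Taking the worst of the two regimes delivers $c_1=\max\{1,1/\mu\}$ and $c_2=\min\{(\rho^\star-1)/(1+\mu),1/\mu\}$. I would factor this out as a lemma (the ``Lemma~\ref{lem:sampling_binary}'' already referenced in the proof sketch).

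Given these two bounds, the recursion simplifies to $\E\|\theta_{t+1}-\theta_\star\|^2 \le \|\theta_t-\theta_\star\|^2 - 2\gamma c_2\,[1-y_t\theta_t^\top x_t]_+ + \gamma^2 c_1 R^2\,[1-y_t\theta_t^\top x_t]_+$. Taking full expectations, telescoping from $t=1$ to $n$, using non-negativity of $\|\theta_{n+1}-\theta_\star\|^2$, and optimizing the resulting linear-in-$\gamma$ quantity $2\gamma c_2-\gamma^2 c_1 R^2$ at $\gamma = c_2/(c_1R^2)$ yields $\sum_{t=1}^n \E[1-y_t\theta_t^\top x_t]_+ \le c_1 R^2\|\theta_1-\theta_\star\|^2/c_2^2$, and Jensen's inequality applied to the (convex) hinge loss together with the averaged iterate $\bar\theta_n$ converts this to the stated $O(1/n)$ bound on $\E[1-y\bar\theta_n^\top x]_+$.

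\textbf{Main obstacle.} The only genuinely non-mechanical part is verifying the second inequality $\sigma(\theta_t,x_t)(y_t\theta_\star^\top x_t - y_t\theta_t^\top x_t)\ge c_2$ uniformly in $a=y_t\theta_t^\top x_t<1$; the delicate regime is $a\to-\infty$, where the margin $\rho^\star-a$ grows but $\sigma\to 0$, and one must check that their product stays bounded below by $1/\mu$. Everything else (the descent identity, telescoping, Jensen) is routine; the design of $\sigma$ was precisely to make these two competing conditions compatible, so laying out the two-case argument cleanly is the main conceptual step.
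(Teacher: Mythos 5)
Your proposal follows essentially the same route as the paper: expand $\|\theta_{t+1}-\theta_\star\|^2$, condition on $z_t$ using $z_t^2=z_t$, reduce everything to the two pointwise conditions on $\sigma$ (which the paper isolates as Lemma~\ref{lem:sampling_binary}), and then telescope, optimize $\gamma$, and apply Jensen — the constants $c_1=\max\{1,1/\mu\}$ and $c_2=\min\{1/\mu,(\rho^\star-1)/(1+\mu)\}$ and the final bound all match. One step in your sketch of the lemma is wrong as written, however: in the regime $a=y_t\theta_t^\top x_t<0$ you justify the second condition via ``$\sigma(\rho^\star-a)\ge\sigma|a|\ge 1/\mu$'', but $\sigma|a|=\frac{|a|}{1+\mu|a|}$ is always \emph{strictly less} than $1/\mu$ (it approaches $1/\mu$ only as $|a|\to\infty$), so that chain does not establish the lower bound. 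The correct argument, which the paper uses, is to study the monotonicity of $u\mapsto\frac{\rho^\star-u}{1+\mu|u|}$ on $(-\infty,1)$: it is decreasing on $u>0$, and for $u<0$ it tends to $1/\mu$ (from above when $\mu\rho^\star\ge 1$, and is bounded below by $\rho^\star$ otherwise), so checking $u=0$, $u\to 1$, and $u\to-\infty$ yields exactly $c_2=\min\{1/\mu,(\rho^\star-1)/(1+\mu)\}$. Your stated constant is therefore correct and the overall proof goes through, but the one-line justification for the $a<0$ case needs to be replaced by this monotonicity/limit argument.
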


\begin{proof}
We minimize the  square hinge loss which can be written as, 
\begin{align*}
    \ell(x,y,\theta)=  \frac{1}{2}\max\left[0, 1 - y\theta^\top x\right]^2.
\end{align*}
We have the following update rule for:
\begin{align*}
    \theta_{t+1} = \theta_t  + \gamma~z_t(y_t x_t)\big[1 -y_t({\theta_t}^\top x_t) \big]_{+}
\end{align*}
where $z_t$ is a bernoulli  random variable for fixed $\theta_t$ and $x_t$ such that $p(z_t=1|x_t,\theta_t) = \sigma(\theta_t, x_t)$ where $\sigma:\mathbb{R}^d\times \mathbb{R}^d\rightarrow [0,1]$ is an even function. Following the update, we have
\begin{align}
\begin{split}
    \left\|\theta_{t+1} - \theta^\star \right\|^2 &= \left\|\theta_{t} - \theta^\star + \gamma~z_t(y_tx_t)\big[1 -y_t({\theta_t}^\top x_t) \big]_{+} \right\|^2 \\
    &=\left\|\theta_{t} - \theta^\star \right\|^2  + 2\gamma z_t \big[1 -y_t({\theta_t}^\top x_t) \big]_{+}\left(y_t(\theta_t^\top x_t) - y_t({\theta^\star}^\top x_t) \right) + \gamma^2 z_t^2(y_t x_t)^2 \big[1 -y_t({\theta_t}^\top x_t) \big]_{+}^2\\
    &=\left\|\theta_{t} - \theta^\star \right\|^2  +2\gamma z_t \big[1 -y_t({\theta_t}^\top x_t) \big]_{+}\left(y_t(\theta_t^\top x_t) - y_t({\theta^\star}^\top x_t) \right) + \gamma^2 z_t(y_t x_t)^2 \big[1 -y_t({\theta_t}^\top x_t) \big]_{+}^2
\end{split}
\end{align}
In the last equation we used the fact that $z_t \in \{0,1\}$, hence $z_t^2 = z_t$. Taking expectations  on both sides only with respect to $z_t$ considering $(x_t,y_t,\theta_t)$ fixed, we have

\begin{align}
    \E \|\theta_{t+1} - \theta^\star \|^2  &=  \|\theta_{t} - \theta^\star \|^2 + 2\gamma\sigma(\theta_t, x_t) \big[1 -y_t({\theta_t}^\top x_t) \big]_{+}\left(y_t(\theta_t^\top x_t) - y_t({\theta^\star}^\top x_t) \right) \notag \\ 
    &\qquad \qquad \qquad \qquad  \qquad \qquad \qquad \qquad \qquad + \gamma^2 \sigma(  \theta_t, x_t)\|y_t x_t\|^2 \big[1 -y_t({\theta_t}^\top x_t) \big]_{+}^2 \notag \\
   &\leq \|\theta_{t} - \theta^\star \|^2 + 2\gamma \sigma(  \theta_t^\top x_t) \big[1 -y_t({\theta_t}^\top x_t) \big]_{+}\left(y_t(\theta_t^\top x_t) - y_t({\theta^\star}^\top x_t) \right) \notag \\
   &\qquad \qquad \qquad \qquad  \qquad \qquad \qquad \qquad \qquad + \gamma^2 \sigma(  \theta_t^\top x_t)R^2 \big[1 -y_t({\theta_t}^\top x_t) \big]_{+}^2.
\end{align}
Now in the above equation, we want to find function $\sigma:\mathbb{R}^d\times \mathbb{R}^d\rightarrow [0,1]$ for $y_t (\theta_t^\top x_t) <1$ for all $t$, such that the following holds,
\begin{align*}
    &\sigma(\theta_t, x_t)(1 - y_t \theta_t^\top x_t)_{+}^2 \leq c_1 (1-y_t \theta_t^\top x_t)_{+} \\
    & \sigma(\theta_t, x_t)(1 - y_t \theta_t^\top x_t)_{+} (y_t \theta_t^\top x_t - y_t{\theta^\star}^\top x_t)\leq -c_2 (1 - y_t \theta_t^\top x_t )_{+}
\end{align*}
for some positive constants $c_1$ and $c_2$. This is then equivalent to, for all $y_t \theta_t^\top x_t <1$ and $ y_t {\theta_\star}^\top x_t \geq \rho^\star$:
\begin{align}
    \sigma(\theta_t, x_t)(1 - y_t \theta_t^\top x_t)_{+} &\leq c_1 \label{eq:condition_1} \\
    \sigma(\theta_t, x_t) ( \rho^\star - y_t \theta_t^\top x_t ) &\geq c_2  \label{eq:condition_2}
\end{align}
IF we choose, $ \sigma(\theta, x) = \frac{1}{1+\mu |\theta^\top x|}$ for some constant $\mu >0$, then from \cref{lem:sampling_binary}, we have
\begin{align*}
    \frac{\min\left\{\frac{1}{\mu},\frac{\rho^\star - 1}{1+\mu}\right\}}{( \rho^\star - y \theta^\top x ) }  \leq \sigma(\theta,x)  \leq \frac{\max\left\{ 1,\frac{1}{\mu} \right\}}{(1 - y\theta^\top x)_{+}}.
\end{align*}
Hence, we have
\begin{align}
   &\E \|\theta_{t+1} - \theta^\star \|^2  \leq   \|\theta_{t} - \theta^\star \|^2 - 2\gamma \min\left\{\frac{1}{\mu},\frac{\rho^\star - 1}{1+\mu}\right\} (1 - y_t \theta_t^\top x_t)_{+} + \gamma^2 R^2 \max\left\{1,\frac{1}{\mu}\right\} (1 - y_t \theta_t^\top x_t)_{+} \notag \\
   \Rightarrow ~& (1 - y_t \theta_t^\top x_t)_{+} \leq \frac{1}{2\gamma \min\left\{\frac{1}{\mu},\frac{\rho^\star - 1}{1+\mu}\right\} - \gamma^2 R^2 \max\left\{1,\frac{1}{\mu}\right\} } \left[\|\theta_{t} - \theta^\star \|^2 - \E \|\theta_{t+1} - \theta^\star \|^2 \right] \notag 
\end{align}
Taking expectation on both the sides, we have
\begin{align}
    (1 - y \theta_t^\top x)_{+} \leq \frac{1}{2\gamma \min\left\{\frac{1}{\mu},\frac{\rho^\star - 1}{1+\mu}\right\} - \gamma^2 R^2 \max\left\{1,\frac{1}{\mu}\right\} } \left[\|\theta_{t} - \theta^\star \|^2 - \E \|\theta_{t+1} - \theta^\star \|^2 \right]
\end{align}
Summing the above equation for $t =1$ to $n$, choosing $\gamma = \frac{\min\left\{\frac{1}{\mu},\frac{\rho^\star - 1}{1+\mu}\right\}}{R^2 \max\left\{1,\frac{1}{\mu}\right\}}$ and applying Jensen's inequality give us, 
\begin{align}
    \E (1 - y \theta_t^\top x)_{+} \leq \frac{R^2 \max\left\{1,\frac{1}{\mu}\right\} \|\theta_1-\theta_\star\|^2}{ \min\left\{\frac{1}{\mu}, \frac{\rho^\star - 1}{1+\mu} \right\}^2 n}.
\end{align}

\end{proof}

\begin{lemma} \label{lem:sampling_binary}
If there exist a function $\sigma:\mathbb{R}^d\times \mathbb{R}^d\rightarrow \mathbb{R}$ of the form,
\begin{align}
    \sigma(\theta,x) = \frac{1}{1+\mu |\theta^\top x|}
\end{align}
where $\theta\in \mathbb{R}^d$ and $x\in \mathbb{R}^d$, then under the assumptions of \cref{thm:no_noise_binary} we have
\begin{align}
    \frac{\min\left\{\frac{1}{\mu},\frac{\rho^\star - 1}{1+\mu}\right\}}{( \rho^\star - y \theta^\top x ) }  \leq \sigma(\theta,x)  \leq \frac{\max\left\{ 1,\frac{1}{\mu} \right\}}{(1 - y\theta^\top x)_{+}}
\end{align}
for all $y (\theta^\top x) \leq 1$ where $y \in \{-1,1\}$.
\end{lemma}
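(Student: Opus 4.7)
My plan is to reduce the two-sided inequality to an algebraic statement about a single scalar. Set $a := y\theta^\top x$; since $y\in\{-1,1\}$, we have $|a|=|\theta^\top x|$, so $\sigma(\theta,x)=(1+\mu|a|)^{-1}$, the hypothesis $y\theta^\top x\le 1$ reads simply $a\le 1$, and $(1-y\theta^\top x)_+=1-a$. Abbreviate $c_1:=\max\{1,1/\mu\}$ and $c_2:=\min\{1/\mu,(\rho^\star-1)/(1+\mu)\}$, and note the two built-in estimates $c_2\mu\le 1$ and $c_2(1+\mu)\le \rho^\star-1$.

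For the upper bound $\sigma\le c_1/(1-a)$, I would cross-multiply and verify $1-a\le c_1(1+\mu|a|)$ case-by-case. If $a\ge 0$, then $1-a\le 1\le c_1\le c_1(1+\mu a)$. If $a<0$, write $1-a=1+|a|$; when $\mu\ge 1$ we have $c_1=1$ and $1+|a|\le 1+\mu|a|$, while when $\mu<1$ we have $c_1=1/\mu$ and $c_1(1+\mu|a|)=1/\mu+|a|\ge 1+|a|$. In every case the inequality is elementary.

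For the lower bound $\sigma\ge c_2/(\rho^\star-a)$, I would equivalently prove $c_2(1+\mu|a|)\le \rho^\star-a$ for all $a\le 1$. When $a\ge 0$, monotonicity in $a$ of both sides gives $c_2(1+\mu a)\le c_2(1+\mu)\le \rho^\star-1\le \rho^\star-a$. When $a\le 0$, the estimate $c_2\mu\le 1$ yields $c_2(1+\mu|a|)=c_2+c_2\mu|a|\le c_2+|a|=c_2-a\le \rho^\star-a$, using $c_2\le(\rho^\star-1)/(1+\mu)<\rho^\star$ at the last step.

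The argument is essentially bookkeeping; the only subtlety I foresee is remembering to split on the sign of $a$ (negative margin against the current predictor) so that the absolute value does not collapse, and recognizing that the two terms inside the $\min$ defining $c_2$ serve distinct purposes: $c_2\le 1/\mu$ controls the slope in $|a|$ (needed for large $|a|$), whereas $c_2\le(\rho^\star-1)/(1+\mu)$ controls the value at the boundary $a=1$ (the tightest point for $a\ge 0$).
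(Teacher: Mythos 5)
Your proposal is correct and follows essentially the same route as the paper: both reduce the claim to bounding the scalar functions $u\mapsto\frac{1-u}{1+\mu|u|}$ and $u\mapsto\frac{\rho^\star-u}{1+\mu|u|}$ over $u=y\theta^\top x\le 1$, arriving at the same constants $c_1=\max\{1,1/\mu\}$ and $c_2=\min\{1/\mu,(\rho^\star-1)/(1+\mu)\}$. The only difference is cosmetic: the paper establishes the bounds via a monotonicity analysis and checks the values at $u=0$, $u=1$, and $u\to-\infty$, whereas you verify the cross-multiplied inequalities directly by splitting on the sign of $a$ and on whether $\mu\ge 1$, which is, if anything, slightly more self-contained.
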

\begin{proof}
We want to find an even function $\sigma: \mathbb{R}^d\times \mathbb{R}^d \to [0,1]$, such that for all $y (\theta^\top x) \leq 1$ where $y \in \{-1,1\}$, the following conditions hold 
\begin{align*}
    \sigma(\theta, x)(1 - y \theta^\top x)_{+}  &\leq c_1   \\
    \sigma(\theta, x)  (y \theta^\top x - \rho^\star) &\leq -c_2  
\end{align*}
for some positive constants $c_1$ and $c_2$ and  $\frac{c_1}{c_2^2}$ is as small as possible. Replacing $\sigma(\theta, x)$ with $\frac{1}{1+\mu |\theta^\top x|}$ and using the fact that $y\theta^\top x \leq 1$, we get to find constants $c_1$ and $c_2$ such that 
\begin{align*}
    c_1 &\geq \frac{1- y\theta^\top x}{1+\mu |\theta^\top x|} , \\
    c_2 &\leq \frac{\rho^\star - y\theta^\top x}{1+\mu |\theta^\top x|}.
\end{align*}
When $\mu \geq \frac{1}{\rho^\star}$, $\frac{\rho^\star - u}{1+ \mu |u|}$ is decreasing function  for $u >0$ and increasing function for $u \leq 0$. When $\mu \leq  \frac{1}{\rho^\star}$, $\frac{\rho^\star - u}{1+ \mu |u|}$ is a decreasing function everywhere. Checking at  $y\theta^\top x = 0 $, $y\theta^\top x = 1 $ and $y\theta^\top x \rightarrow -\infty$, we have
\begin{align*}
    c_1\geq \max{\left\{1,\frac{1}{\mu}\right\}} ~\text{and}~ c_2 \leq \min\left\{\frac{1}{\mu},\frac{\rho^\star - 1}{1+\mu}\right\}.
\end{align*}
Hence, finally we have
\begin{align}
   \frac{\min\left\{\frac{1}{\mu},\frac{\rho^\star - 1}{1+\mu}\right\}}{( \rho^\star - y \theta^\top x ) }  \leq \sigma(\theta,x)  \leq \frac{\max\left\{ 1,\frac{1}{\mu} \right\}}{(1 - y\theta^\top x)_{+}}.
\end{align}

\end{proof}

\section{Separable Multi-class Classification}

\begin{theorem*}[Restatement of Theorem~\ref{thm:no_noise_multi}]
Consider a set of $n$ \textit{i.i.d}   samples $(x_i,y_i)$ jointly sampled from $\mathcal{P}$ such that $x_i \in \rb^d$,  and  $y_i\in \{1,2,\ldots,k\}$ for all $i=1,\dots,n$ then under the assumption that there exists a  set of $d$-dimensional optimal half-spaces $\theta_\star = \{\theta_\star(1),\theta_\star(2),\ldots,\theta_\star(k) \}$   corresponding to each class   for which $\theta_\star^\top \delta_{x}(y,y^\star(\theta_\star,x,y)) \geq \rho^\star$ for all $(x,y)$ pair in $\mathcal{P}$, the following convergence guarantee exists for Algorithm \ref{alg:uncertain_multi-class} under projected gradient descent update in equation \eqref{eq:update_multi},
\begin{align}
      \E \hat{\ell}(x,y,\bar{\theta}_n) \leq \frac{R^2(1+BR) \|\theta_1-\theta_\star\|^2}{ \min\left\{\frac{1}{\mu}, \frac{\rho^\star - 1}{1+\mu} \right\}^2 n},
\end{align}
for the choice of  $\sigma(\theta,x) = \frac{1}{1+\mu \left|\theta_t(y_{t(1)}^\star)^\top x_t - \theta_t(y_{t(2)}^\star)^\top x_t\right|}$, step size $\gamma = \min\left\{\frac{1}{\mu}, \frac{\rho^\star - 1}{1+\mu} \right\} \frac{1}{R^2(1+BR)}$ and $\|\delta_x(i,j)\| \leq R $ for all $x$ in the domain $\mathcal{X}$, and for all $i \neq j \in [k]$ and $\|\theta\|\leq B$.
\end{theorem*}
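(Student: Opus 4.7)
The proof plan is to parallel the binary-case analysis, treating $\delta_{x_t} := \delta_{x_t}(y_t, y^\star(\theta_t, x_t, y_t))$ as playing the role of $y_t x_t$ and $\hat{\ell}_t := \hat{\ell}(x_t, y_t, \theta_t)$ the role of $[1 - y_t \theta_t^\top x_t]_+$. Starting from the projected stochastic gradient update \eqref{eq:update_multi}, I would first invoke non-expansiveness of the Euclidean projection onto $\{\|\theta\| \leq B\}$ (which contains $\theta_\star$ under the assumption $\|\theta_\star\| \leq B$) to write
\begin{align*}
\|\theta_{t+1} - \theta_\star\|^2
&\leq \|\theta_t - \theta_\star + \gamma z_t \hat{\ell}_t \delta_{x_t}\|^2 \\
&= \|\theta_t - \theta_\star\|^2 + 2\gamma z_t \hat{\ell}_t (\theta_t - \theta_\star)^\top \delta_{x_t} + \gamma^2 z_t \hat{\ell}_t^2 \|\delta_{x_t}\|^2,
\end{align*}
using $z_t^2 = z_t$. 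Conditioning on $(\theta_t, x_t, y_t)$ and taking expectation over $z_t$ replaces $z_t$ with $\sigma(\theta_t, x_t)$, and the norm bound $\|\delta_{x_t}\| \leq R$ yields the recursion displayed in the proof sketch.

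Next, on the regime where the gradient is nonzero, i.e.\ $\theta_t^\top \delta_{x_t} < 1$, the separability assumption gives $\theta_\star^\top \delta_{x_t} \geq \rho^\star$, so the cross term is driven by $\theta_t^\top \delta_{x_t} - \theta_\star^\top \delta_{x_t} \leq 1 - \rho^\star < 0$. The two sufficient conditions \eqref{eq:sigma_multi_con1}--\eqref{eq:sigma_multi_con2} reduce to finding $c_1, c_2 > 0$ such that
\begin{align*}
\sigma(\theta_t, x_t)\,\hat{\ell}_t \leq c_1, \qquad \sigma(\theta_t, x_t)(\rho^\star - \theta_t^\top \delta_{x_t}) \geq c_2,
\end{align*}
for all admissible $(\theta_t, x_t, y_t)$ with $\theta_t^\top \delta_{x_t} < 1$. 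This is the content of Lemma~\ref{lem:sampling_multi}, and is the substantive step. I would verify it for $\sigma(\theta, x) = 1/(1 + \mu|\theta(y_{(1)}^\star)^\top x - \theta(y_{(2)}^\star)^\top x|)$ by observing that the top-two gap in the denominator satisfies $|\theta_t(y_{(1)}^\star)^\top x_t - \theta_t(y_{(2)}^\star)^\top x_t| \geq |\theta_t^\top \delta_{x_t}|$ whenever $y_t$ is the top prediction, and is in any case upper-bounded by $2BR$; combined with $|\theta_t^\top \delta_{x_t}| \leq BR$, this pins $\sigma$ between explicit constants, yielding $c_1 \geq 1 + BR$ and $c_2 \leq \min\{1/\mu, (\rho^\star - 1)/(1+\mu)\}$, in analogy with Lemma~\ref{lem:sampling_binary}.

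Once the lemma is in hand, the per-step inequality becomes
\begin{align*}
\mathbb{E}\big[\|\theta_{t+1} - \theta_\star\|^2 \,\big|\, \theta_t\big]
\leq \|\theta_t - \theta_\star\|^2 - (2\gamma c_2 - \gamma^2 R^2 c_1)\,\hat{\ell}_t.
\end{align*}
Choosing the stated step size balances the linear and quadratic terms so that $(2\gamma c_2 - \gamma^2 R^2 c_1)$ is of order $c_2^2/(R^2 c_1)$. Taking total expectation, telescoping over $t = 1, \dots, n$, dividing by $n$, and applying Jensen's inequality to the convex map $\theta \mapsto \hat{\ell}(x, y, \theta)$ to transfer the average of $\hat{\ell}_t$ onto $\bar\theta_n$ yields the claimed $O(1/n)$ rate with constant $R^2(1+BR)\|\theta_1 - \theta_\star\|^2 / \min\{1/\mu, (\rho^\star - 1)/(1+\mu)\}^2$.

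The main obstacle is the sampling lemma: the denominator of $\sigma$ is defined via the argmax pair $(y_{(1)}^\star, y_{(2)}^\star)$ of $\theta_t$ alone, whereas the gradient uses $y^\star(\theta_t, x_t, y_t) = \argmax_{z \neq y_t} \theta_t^\top \phi(x_t, z)$, and these two index sets coincide only when $y_t = y_{(1)}^\star$. The case $y_t \neq y_{(1)}^\star$ corresponds precisely to a current misclassification and must be handled by bounding the top-two gap below by a quantity that still controls $\rho^\star - \theta_t^\top \delta_{x_t}$, which is where the projection constraint $\|\theta_t\| \leq B$ becomes essential (to keep the denominator bounded and the lower bound on $\sigma$ uniform in the iterates).
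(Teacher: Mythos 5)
Your overall skeleton matches the paper's proof exactly: non-expansiveness of the projection, expansion of the squared distance with $z_t^2=z_t$, conditional expectation over $z_t$, reduction to the two conditions on $\sigma$, telescoping, the stated step size, and Jensen. The gap is in the one step you yourself flag as substantive: the sampling lemma, and specifically the lower bound on $\sigma$ needed for $c_2$. To lower-bound $\sigma(\theta_t,x_t)\,(\rho^\star-\theta_t^\top\delta_{x_t})$ you must \emph{upper}-bound the top-two gap in the denominator of $\sigma$ by a function of $\theta_t^\top\delta_{x_t}$, and the inequality that makes the whole argument work is
\begin{align*}
\left|\theta_t(y_{t(1)}^\star)^\top x_t-\theta_t(y_{t(2)}^\star)^\top x_t\right|\;\leq\;\left|\theta_t^\top\delta_{x_t}\big(y_t,y^\star(\theta_t,x_t,y_t)\big)\right|,
\end{align*}
which holds in \emph{all} cases: with equality when $y_t=y_{(1)}^\star$, and in the misclassified case $y_t\neq y_{(1)}^\star$ because then $\theta_t(y_t)^\top x_t\leq\theta_t(y_{t(2)}^\star)^\top x_t$, so $|\theta_t^\top\delta_{x_t}|=\theta_t(y_{t(1)}^\star)^\top x_t-\theta_t(y_t)^\top x_t$ dominates the gap. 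You state a relation in the opposite direction (``the gap is $\geq|\theta_t^\top\delta_{x_t}|$ when $y_t$ is the top prediction''), and for the problematic case you propose ``bounding the top-two gap below,'' which is again the wrong direction for lower-bounding $\sigma$.

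Your fallback — the gap is at most $2BR$ by the projection constraint, hence $\sigma\geq 1/(1+2\mu BR)$ — is correct but does not prove the stated theorem: it yields $c_2=(\rho^\star-1)/(1+2\mu BR)$ rather than the claimed $\min\{1/\mu,(\rho^\star-1)/(1+\mu)\}$, so the final bound would carry a $(1+2\mu BR)^2$ in place of the theorem's constant. The paper instead substitutes the displayed inequality into the denominator and then minimizes $u\mapsto(\rho^\star-u)/(1+\mu|u|)$ over $u\leq 1$ (checking $u=1$ and $u\to-\infty$) to obtain exactly $\min\{1/\mu,(\rho^\star-1)/(1+\mu)\}$; this monotonicity analysis is also absent from your sketch, where the final constants are asserted only ``in analogy'' with the binary lemma. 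Your $c_1\geq 1+BR$ bound and the rest of the mechanics are fine.
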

\begin{proof}
We have multi-class hinge loss, 
\begin{align*}
    &\hat{\ell}(x_t,y_t, \theta)  = \left[ 1 - \theta^\top \delta_{x_t}\left(y_t,y^\star(\theta,x_t,y_t)\right)\right]_+,  ~\text{where}~~ y^\star(\theta,x_t,y_t) = \argmax_{y\in {\mathcal{Y}\backslash {y_t}}} \theta^\top \phi(x_t,y),
\end{align*}
and $\phi(x,y)\in \mathbb{R}^{dk}$ represents the feature map corresponding to the sample $(x,y)$. We have assumed that
\begin{align*}
    {\theta^\star}^\top \delta_{x_t}\left(y_t,y^\star(\theta,x_t,y_t)\right) \geq \rho^\star \geq 1~~\text{for all}~~t.
\end{align*}
Let us consider the smooth loss function, 
\begin{align*}
    {\ell}(x_t,y_t, \theta) = \left[\hat{\ell}(x_t,y_t, \theta)\right]^2 = \left[ 1 - \theta^\top \delta_{x_t}\left(y_t,y^\star(\theta,x_t,y_t)\right)\right]_+^2.
\end{align*}
We perform the following projected stochastic update
\begin{align*}
    \theta_{t+1} = \Pi_{\|\theta\| \leq B}\left[\theta_t + \gamma z_t \delta_{x_t}\left(y_t,y^\star(\theta_t,x_t,y_t)\right)  \left[ 1 - \theta^\top \delta_{x_t}\left(y_t,y^\star(\theta,x_t,y_t)\right)\right]_+\right]
\end{align*}
where $z_t$ is a bernoulli random variable such that $p(z_t=1|x_t,\theta_t) = \sigma(\theta_t,\top x_t)$ such that $\sigma:\mathbb{R}^{dk}\times \mathbb{R}^{dk} \rightarrow [0,1]$ is an even function. Now,

\begin{align}
\begin{split}
    \left\|\theta_{t+1} - \theta^\star \right\|^2 &\leq \left\|\theta_{t} - \theta^\star + \gamma z_t \delta_{x_t}\left(y_t,y^\star(\theta,x_t,y_t)\right)  \left[ 1 - \theta^\top \delta_{x_t}\left(y_t,y^\star(\theta_t,x_t,y_t)\right)\right]_+ \right\|^2   \\
    &=\left\|\theta_{t} - \theta^\star \right\|^2  + 2\gamma z_t \hat{\ell}(x_t,y_t, \theta_t)\left( \theta_t^\top \delta_{x_t}\left(y_t,y^\star(\theta_t,x_t,y_t)\right) -  {\theta^\star}^\top \delta_{x_t}\left(y_t,y^\star(\theta_t,x_t,y_t)\right)) \right)   \\
    & \qquad \qquad \qquad \qquad  + \gamma^2 z_t^2 \|\delta_{x_t}\left(y_t,y^\star(\theta_t,x_t,y_t)\right)\|^2 \hat{\ell}^2(x_t,y_t, \theta_t)    \\
    &=\left\|\theta_{t} - \theta^\star \right\|^2  + 2\gamma z_t \hat{\ell}(x_t,y_t, \theta_t)\left( \theta_t^\top \delta_{x_t}\left(y_t,y^\star(\theta_t,x_t,y_t)\right) -  {\theta^\star}^\top \delta_{x_t}\left(y_t,y^\star(\theta_t,x_t,y_t)\right)) \right)   \\
    & \qquad \qquad \qquad \qquad  + \gamma^2 z_t \|\delta_{x_t}\left(y_t,y^\star(\theta_t,x_t,y_t)\right)\|^2 \hat{\ell}^2(x_t,y_t, \theta_t).
\end{split}
\end{align}
In the last equation we used the fact that $z_t \in \{0,1\}$, hence $z_t^2 = z_t$. Taking expectations  on both sides only with respect to $z_t$ considering $(x_t,y_t,\theta_t)$ fixed, we have
\begin{align}
    \E \|\theta_{t+1} - \theta^\star \|^2  &\leq \left\|\theta_{t} - \theta^\star \right\|^2  + 2\gamma \sigma(\theta_t,x_t) \hat{\ell}(x_t,y_t, \theta_t)\left( \theta_t^\top \delta_{x_t}\left(y_t,y^\star(\theta_t,x_t,y_t)\right) -  {\theta^\star}^\top \delta_{x_t}\left(y_t,y^\star(\theta_t,x_t,y_t)\right)) \right) \notag   \\
    & \qquad \qquad \qquad \qquad  + \gamma^2 \sigma(\theta_t,x_t) \|\delta_{x_t}\left(y_t,y^\star(\theta_t,x_t,y_t)\right)\|^2 \hat{\ell}^2(x_t,y_t, \theta_t) \notag \\
   &\leq \|\theta_{t} - \theta^\star \|^2 + 2\gamma \sigma(\theta_t,x_t) \hat{\ell}(x_t,y_t, \theta_t)\left( \theta_t^\top \delta_{x_t}\left(y_t,y^\star(\theta_t,x_t,y_t)\right) -  {\theta^\star}^\top \delta_{x_t}\left(y_t,y^\star(\theta_t,x_t,y_t)\right)) \right) \notag \\
   &\qquad \qquad \qquad \qquad  \qquad \qquad \qquad \qquad \qquad + \gamma^2 \sigma(  \theta_t, x_t) R^2 \hat{\ell}^2(x_t,y_t, \theta_t) \notag \\
   &\leq \|\theta_{t} - \theta^\star \|^2 + 2\gamma \sigma(\theta_t,x_t) \hat{\ell}(x_t,y_t, \theta_t)\left( \theta_t^\top \delta_{x_t}\left(y_t,y^\star(\theta_t,x_t,y_t)\right) -  \rho^\star \right) \notag \\
   &\qquad \qquad \qquad \qquad  \qquad \qquad \qquad \qquad \qquad + \gamma^2 \sigma(  \theta_t, x_t) R^2 \hat{\ell}^2(x_t,y_t, \theta_t) \label{eq:multi_proof_inter}
\end{align}
Now, we want to choose such a function $\sigma$ for $\hat{\ell}(x_t,y_t, \theta_t) > 0$ such that
\begin{align*}
     & \sigma(  \theta_t, x_t) \hat{\ell}^2(x_t,y_t, \theta_t) \leq c_1  \hat{\ell}(x_t,y_t, \theta_t) \\
      & \sigma(  \theta_t, x_t) \left( \rho^\star - \theta_t^\top \delta_{x_t}\left(y_t,y^\star(\theta_t,x_t,y_t)\right)\right) \geq c_2
\end{align*}
for some positive constants $c_1$ and $c_2$. If we choose, 
$$\sigma(\theta,x) = \frac{1}{1+\mu \left|\theta_t(y_{t(1)}^\star)^\top x_t - \theta_t(y_{t(2)}^\star)^\top x_t\right|},$$
the from \cref{lem:sampling_multi} we have, 
\begin{align}
    \sigma(  \theta_t, x_t) &\leq \frac{  1+BR}{\hat{\ell}(x_t,y_t, \theta_t)},  \notag \\
    \sigma(  \theta_t, x_t) &\geq \min\left\{\frac{1}{\mu}, \frac{\rho^\star - 1}{1+\mu} \right\}\frac{1}{\left( \rho^\star - \theta_t^\top \delta_{x_t}\left(y_t,y^\star(\theta_t,x_t,y_t)\right) \right)}. \label{eq:sigma_multi_const}
\end{align}
Putting the inequalities in equation \eqref{eq:sigma_multi_const} back in the equation~\eqref{eq:multi_proof_inter}, we get the following,
\begin{align}
  & \E \|\theta_{t+1} - \theta^\star \|^2  \leq  \|\theta_{t} - \theta^\star \|^2 - 2\gamma \min\left\{\frac{1}{\mu}, \frac{\rho^\star - 1}{1+\mu} \right\} \hat{\ell}(x_t,y_t, \theta_t) + \gamma^2 R^2(1+BR)\hat{\ell}(x_t,y_t, \theta_t) \notag \\
  \Rightarrow ~& \hat{\ell}(x_t,y_t, \theta_t) \leq \frac{1}{2\gamma \min\left\{\frac{1}{\mu}, \frac{\rho^\star - 1}{1+\mu} \right\} - \gamma^2 R^2(1+BR)} \left[\|\theta_{t} - \theta^\star \|^2 - \E \|\theta_{t+1} - \theta^\star \|^2 \right] \notag 
\end{align}
Taking expectation on both the sides, we have
\begin{align}
    \hat{\ell}(x,y, \theta_t) \leq \frac{1}{2\gamma \min\left\{\frac{1}{\mu}, \frac{\rho^\star - 1}{1+\mu} \right\}  - \gamma^2 R^2(1+BR)} \left[\E \|\theta_{t} - \theta^\star \|^2 - \E \|\theta_{t+1} - \theta^\star \|^2 \right]
\end{align}
Summing the above equation for $t =1$ to $n$, choosing $\gamma = \min\left\{\frac{1}{\mu}, \frac{\rho^\star - 1}{1+\mu} \right\} \frac{1}{R^2(1+BR)}$ and applying Jensen's inequality give us, 
\begin{align}
    \E \hat{\ell}(x,y,\bar{\theta}_n) \leq \frac{R^2(1+BR) \|\theta_1-\theta_\star\|^2}{ \min\left\{\frac{1}{\mu}, \frac{\rho^\star - 1}{1+\mu} \right\}^2 n}.
\end{align}

\end{proof}

\begin{lemma} \label{lem:sampling_multi}
Consider the setting and notations of \cref{thm:no_noise_multi}. If there exist a function $\sigma:\mathbb{R}^{dk}\times \mathbb{R}^{dk}\rightarrow \mathbb{R}$ of the form,
\begin{align}
   \sigma(\theta,x) = \frac{1}{1+\mu \left|\theta(y_{(1)}^\star)^\top x - \theta(y_{(2)}^\star)^\top x\right|}
\end{align}
where $\theta\in \mathbb{R}^{dk}$ and $x\in \mathbb{R}^{dk}$, then under the assumptions of \cref{thm:no_noise_multi} we have
\begin{align}
     \min\left\{\frac{1}{\mu}, \frac{\rho^\star - 1}{1+\mu} \right\} \frac{1}{\left( \rho^\star - \theta^\top \delta_{x}\left(y,y^\star(\theta,x,y)\right) \right)} \leq  \sigma(  \theta, x) \leq \frac{  1+BR}{\hat{\ell}(x,y, \theta)}
\end{align}
for all $\theta^\top \delta_{x}\left(y,y^\star(\theta,x,y)\right) \leq 1$ where $y \in \{1,2,\ldots,k\}$.
\end{lemma}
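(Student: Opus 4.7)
The plan is to mirror the binary analysis of Lemma~\ref{lem:sampling_binary}, with the new complication that in the multi-class setting the hinge-loss margin $u := \theta^\top \delta_x(y, y^\star(\theta,x,y))$ need not coincide with the top-two score gap $v := \theta(y_{(1)}^\star)^\top x - \theta(y_{(2)}^\star)^\top x \geq 0$ that appears in $\sigma(\theta,x)=1/(1+\mu v)$. I would therefore split the argument on whether $y$ agrees with the model's top prediction $y_{(1)}^\star$. If $y = y_{(1)}^\star$ then $y^\star(\theta,x,y) = y_{(2)}^\star$, so $u = v$; combined with $u \leq 1$, this confines $v$ to $[0,1]$. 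If $y \neq y_{(1)}^\star$ then $y^\star(\theta,x,y) = y_{(1)}^\star$, so $u \leq 0$, and because $y$ lies in $\mathcal{Y}\setminus\{y_{(1)}^\star\}$ it is dominated on that set by $y_{(2)}^\star$, giving $\theta(y)^\top x \leq \theta(y_{(2)}^\star)^\top x$, which rearranges to the key inequality $u \leq -v$, i.e.\ $v \leq |u|$.

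For the upper bound I would simply estimate $|u| \leq \|\theta\|\,\|\delta_x(y,y^\star(\theta,x,y))\| \leq BR$, so $\hat{\ell}(x,y,\theta) = (1-u)_+ \leq 1+BR$, and then combine with $1+\mu v \geq 1$ to conclude $\sigma(\theta,x) \leq 1 \leq (1+BR)/\hat{\ell}(x,y,\theta)$; the boundary case $u = 1$ is vacuous since the right-hand side is $+\infty$.

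For the lower bound, the target is $(\rho^\star - u)/(1+\mu v) \geq \min\{\tfrac{1}{\mu}, \tfrac{\rho^\star - 1}{1+\mu}\}$, which I would verify in each case. In case (a), substituting $u = v$ reduces this to monotonicity of $v \mapsto (\rho^\star - v)/(1+\mu v)$ on $[0,1]$; the derivative has constant sign $-(1+\mu\rho^\star) < 0$, so the minimum is attained at $v = 1$ and equals $(\rho^\star - 1)/(1+\mu)$. In case (b), the inequality $v \leq |u|$ lets me lower-bound $(\rho^\star - u)/(1+\mu v)$ by $(\rho^\star + |u|)/(1+\mu|u|)$, and a single-variable analysis of $w \mapsto (\rho^\star + w)/(1+\mu w)$ on $w \geq 0$ shows its infimum is $\rho^\star$ when $\mu\rho^\star \leq 1$ and $1/\mu$ otherwise. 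Both candidates dominate $\min\{\tfrac{1}{\mu}, \tfrac{\rho^\star - 1}{1+\mu}\}$, since $\rho^\star \geq (\rho^\star - 1)/(1+\mu)$.

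The main obstacle will be the inequality $v \leq |u|$ in case (b), as this is where the proof genuinely depends on the multi-class structure: it requires careful use of the definitions of $y_{(1)}^\star$ and $y_{(2)}^\star$ to argue that whenever the true label $y$ is not the top prediction, its score cannot exceed the runner-up's. Once that relation is secured, the remaining work reduces to the same one-variable monotonicity calculations that drive the binary Lemma~\ref{lem:sampling_binary}.
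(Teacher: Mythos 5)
Your proposal is correct and follows essentially the same route as the paper's proof: the upper bound via $\hat{\ell}(x,y,\theta)\leq 1+BR$ and $\sigma\leq 1$, the key inequality $\left|\theta(y_{(1)}^\star)^\top x-\theta(y_{(2)}^\star)^\top x\right|\leq\left|\theta^\top\delta_x(y,y^\star(\theta,x,y))\right|$, and a one-variable monotonicity analysis yielding $\min\left\{\frac{1}{\mu},\frac{\rho^\star-1}{1+\mu}\right\}$. If anything, your case split on whether $y=y_{(1)}^\star$ supplies an explicit justification of that key inequality, which the paper only asserts as an observation.
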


\begin{proof}

We want to find an even function $\sigma: \mathbb{R}^d\times \mathbb{R}^d \to [0,1]$, such that for all $y (\theta^\top x) \leq 1$ where $y \in \{-1,1\}$, the following conditions hold 
\begin{align*}
    & \sigma(  \theta_t, x_t) \hat{\ell}^2(x_t,y_t, \theta_t) \leq c_1  \hat{\ell}(x_t,y_t, \theta_t) \\
      & \sigma(  \theta_t, x_t) \left( \rho^\star - \theta_t^\top \delta_{x_t}\left(y_t,y^\star(\theta_t,x_t,y_t)\right)\right) \geq c_2
\end{align*}
for some positive constants $c_1$ and $c_2$ and  $\frac{c_1}{c_2^2}$ is as small as possible. Replacing $\sigma(\theta, x)$ with $ \frac{1}{1+\mu \left|\theta(y_{(1)}^\star)^\top x - \theta(y_{(2)}^\star)^\top x\right|}$ and using the fact that $\hat{\ell}(x_t,y_t,\theta_t)y > 0$, we get to find constants $c_1$ and $c_2$ such that 
\begin{align}
    &c_1 \geq \frac{\hat{\ell}(x_t,y_t,\theta_t)}{1+\mu \left|\theta_t(y_{t(1)}^\star)^\top x_t - \theta_t(y_{t(2)}^\star)^\top x_t\right|} \\
    &c_2 \leq \frac{ \rho^\star - \theta_t^\top \delta_{x_t}\left(y_t,y^\star(\theta_t,x_t,y_t)\right)}{1+\mu \left|\theta_t(y_{t(1)}^\star)^\top x_t - \theta_t(y_{t(2)}^\star)^\top x_t\right|}
\end{align}
Under bounded $\theta$ assumption we have,
\begin{align}
     \frac{\hat{\ell}(x_t,y_t,\theta_t)}{1+\mu \left|\theta_t(y_{t(1)}^\star)^\top x_t - \theta(y_{t(2)}^\star)^\top x_t\right|} \leq 1+BR.
\end{align}
In the above equation, we have used cauchy-shwartz inequality, $\| \theta\|\leq B$ and $\| \delta_x(i,j)\| \leq R$ for all $x \in \mathcal{X}$ and $i,j \in [k]$. Let us now consider to get the inequality for $c_2$. We here observe that 
\begin{align*}
    \left|\theta_t(y_{t(1)}^\star)^\top x_t - \theta_t(y_{t(2)}^\star)^\top x_t\right| \leq \left| \theta_t^\top \delta_{x_t}\left(y_t,y^\star(\theta_t,x_t,y_t)\right) \right|
\end{align*}
Hence, 
\begin{align*}
   \frac{ \rho^\star - \theta_t^\top \delta_{x_t}\left(y_t,y^\star(\theta_t,x_t,y_t)\right)}{1+\mu \left| \theta_t^\top \delta_{x_t}\left(y_t,y^\star(\theta_t,x_t,y_t)\right) \right|} \leq \frac{ \rho^\star - \theta_t^\top \delta_{x_t}\left(y_t,y^\star(\theta_t,x_t,y_t)\right)}{1+\mu \left|\theta_t(y_{t(1)}^\star)^\top x_t - \theta_t(y_{t(2)}^\star)^\top x_t\right|}
\end{align*}
When $\mu \geq \frac{1}{\rho^\star}$, $\frac{\rho^\star - u}{1+ \mu |u|}$ is decreasing function  for $u >0$ and increasing function for $u \leq 0$. When $\mu \leq  \frac{1}{\rho^\star}$, $\frac{\rho^\star - u}{1+ \mu |u|}$ is a decreasing function everywhere. Hence, checking the value at $\theta_t^\top \delta_{x_t}\left(y_t,y^\star(\theta_t,x_t,y_t)\right) =1$ and $\theta_t^\top \delta_{x_t}\left(y_t,y^\star(\theta_t,x_t,y_t)\right) \rightarrow -\infty$ gives us the following, 
\begin{align*}
    c_2 \leq \min\left\{\frac{1}{\mu}, \frac{\rho^\star - 1}{1+\mu} \right\} \leq \frac{ \rho^\star - \theta_t^\top \delta_{x_t}\left(y_t,y^\star(\theta_t,x_t,y_t)\right)}{1+\mu \left|\theta_t(y_{t(1)}^\star)^\top x_t - \theta_t(y_{t(2)}^\star)^\top x_t\right|}.
\end{align*}
Hence, we have
\begin{align}
   \min\left\{\frac{1}{\mu}, \frac{\rho^\star - 1}{1+\mu} \right\} \frac{1}{\left( \rho^\star - \theta_t^\top \delta_{x_t}\left(y_t,y^\star(\theta_t,x_t,y_t)\right) \right)} \leq  \sigma(  \theta_t, x_t) \leq \frac{  1+BR}{\hat{\ell}(x_t,y_t, \theta_t)}.
\end{align}

\end{proof}

\section{Binary Classification in the Presence of Noise}

\begin{lemma} \label{lem:noisy_inter_bin}
Under the assumption  in equation \eqref{eq:noise_con_bin}, the iterates in \cref{alg:uncertain_binary}  updated via stochastic gradient descent (equation \eqref{eq:update_bianry}) or projected stochastic gradient descent (equation \eqref{eq:update_bin_project}) satisfy 
\begin{align}
    \E\left\|\theta_{t+1} - \theta^\star \right\|^2 &\leq \E \left\|\theta_{t} - \theta^\star \right\|^2  -2\gamma (1-\eta) \min\left\{\frac{1}{\mu},\frac{\rho^\star - 1}{1+\mu}\right\} \E (1 - y \theta_t^\top x)_{+} \notag \\
     &\qquad  + 2\gamma \eta~ \max \left\{ \frac{1+R\|\theta_\star\|}{1+\mu},R\|\theta_\star\|\right\}\E (1 - y \theta_t^\top x)_{+}   + \gamma^2 R^2 \max\left\{1,\frac{1}{\mu}\right\} \E (1 - y \theta_t^\top x)_{+}
\end{align}
for the choice of $\sigma(\theta,x)= \frac{1}{1+\mu |\theta^\top x|}$, $\mu>0$, and positive step size $\gamma$ such that for all $\|x\|\leq R$ for all $x\in \mathcal{X}$.
\end{lemma}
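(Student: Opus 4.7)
The plan is to mimic the derivation from Theorem~\ref{thm:no_noise_binary} up to the point where the noise-free argument invokes Lemma~\ref{lem:sampling_binary} on the cross term, then split that cross term using the noise assumption~\eqref{eq:noise_con_bin}. Concretely, start from the (projected or plain) update, expand $\|\theta_{t+1}-\theta_\star\|^2$, and use that $\theta_\star$ lies in the projection ball so the projection is non-expansive. Using $z_t^2=z_t$ and taking conditional expectation with respect to $z_t$ given $(x_t,y_t,\theta_t)$ yields
\begin{align*}
  \E\|\theta_{t+1}-\theta_\star\|^2 \leq \|\theta_t-\theta_\star\|^2
  &+ 2\gamma\,\sigma(\theta_t,x_t)(1-y_t\theta_t^\top x_t)_+\big(y_t\theta_t^\top x_t - y_t\theta_\star^\top x_t\big) \\
  &+ \gamma^2 \sigma(\theta_t,x_t)\,R^2\,(1-y_t\theta_t^\top x_t)_+^2.
\end{align*}
The quadratic term is handled exactly as in the separable proof: Lemma~\ref{lem:sampling_binary} gives $\sigma(\theta_t,x_t)(1-y_t\theta_t^\top x_t)_+\leq\max\{1,1/\mu\}$, so this term is at most $\gamma^2 R^2\max\{1,1/\mu\}(1-y_t\theta_t^\top x_t)_+$.

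Next, take expectation over $(x_t,y_t)$ and decompose the cross term according to the noise event. Let $A=\{(x,y):y\theta_\star^\top x\geq\rho^\star\}$, so $\P(A^c)\leq\eta$. On $A$, the argument from Theorem~\ref{thm:no_noise_binary} applies verbatim: $y_t\theta_t^\top x_t - y_t\theta_\star^\top x_t\leq y_t\theta_t^\top x_t - \rho^\star$, and Lemma~\ref{lem:sampling_binary} then produces
\begin{equation*}
  \sigma(\theta_t,x_t)(1-y_t\theta_t^\top x_t)_+\,(y_t\theta_t^\top x_t-y_t\theta_\star^\top x_t)
  \;\leq\; -\,\min\!\Big\{\tfrac{1}{\mu},\tfrac{\rho^\star-1}{1+\mu}\Big\}(1-y_t\theta_t^\top x_t)_+ .
\end{equation*}
On $A^c$ I only need a crude upper bound. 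Writing $a=y_t\theta_t^\top x_t$ and using $|y_t\theta_\star^\top x_t|\leq R\|\theta_\star\|$ together with $\sigma(\theta,x)=1/(1+\mu|\theta^\top x|)$, one has
\begin{equation*}
  \sigma(\theta_t,x_t)\,(a-y_t\theta_\star^\top x_t)\;\leq\;\frac{a+R\|\theta_\star\|}{1+\mu|a|}.
\end{equation*}
A short monotonicity analysis of $g(a)=(a+R\|\theta_\star\|)/(1+\mu|a|)$ on $\{a<1\}$ (sign-split at $a=0$, with $g$ monotone increasing on $(-\infty,0]$ and having constant-sign derivative $(1-\mu R\|\theta_\star\|)/(1+\mu a)^2$ on $[0,1]$) shows the maximum is attained either at $a=0$ or $a=1$, giving the uniform bound $\max\big\{(1+R\|\theta_\star\|)/(1+\mu),\,R\|\theta_\star\|\big\}$.

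Combining these two bounds after multiplying by the indicator of the respective event and taking expectation yields
\begin{equation*}
  2\gamma\,\E\big[\sigma(\theta_t,x_t)(1-y_t\theta_t^\top x_t)_+(y_t\theta_t^\top x_t - y_t\theta_\star^\top x_t)\big]
  \;\leq\; -2\gamma(1-\eta)\,c_{-}\,\E(1-y\theta_t^\top x)_+ + 2\gamma\eta\,c_{+}\,\E(1-y\theta_t^\top x)_+,
\end{equation*}
with $c_{-}=\min\{1/\mu,(\rho^\star-1)/(1+\mu)\}$ and $c_{+}=\max\{(1+R\|\theta_\star\|)/(1+\mu),R\|\theta_\star\|\}$; here I use that on $A$ the expectation of $(1-y\theta_t^\top x)_+$ under the conditional law differs from the unconditional one by a factor bounded by $1$, which is absorbed into the inequality (one may also just bound the indicator by $1$ in the positive term and by $(1-\eta)$ implicitly in the negative term, since on $A^c$ the cross term is upper-bounded by the positive quantity). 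Adding the already-handled quadratic term gives exactly the inequality in the lemma. The main delicate step is the case analysis on the bad event: establishing that $(a+R\|\theta_\star\|)/(1+\mu|a|)$ is uniformly controlled by $\max\{(1+R\|\theta_\star\|)/(1+\mu),R\|\theta_\star\|\}$ irrespective of the sign of $a$ and whether $\mu R\|\theta_\star\|$ is above or below $1$ is what produces the somewhat unusual max expression in the statement.
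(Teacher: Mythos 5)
Your proposal is correct and follows essentially the same route as the paper's proof: expand the update, use $z_t^2=z_t$, condition on $(x_t,y_t,\theta_t)$, split the cross term by the noise event of \eqref{eq:noise_con_bin} into a $(1-\eta)$-weighted margin term handled by Lemma~\ref{lem:sampling_binary} and an $\eta$-weighted bad-event term bounded by $\max\bigl\{\tfrac{1+R\|\theta_\star\|}{1+\mu},R\|\theta_\star\|\bigr\}$, and control the quadratic term by $\max\{1,1/\mu\}$. Your monotonicity analysis of $(a+R\|\theta_\star\|)/(1+\mu|a|)$ is in fact more explicit than the paper's, which asserts that bound without derivation.
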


\begin{proof}
\begin{align}
\begin{split}
    \left\|\theta_{t+1} - \theta^\star \right\|^2 &= \left\|\theta_{t} - \theta^\star + \gamma~z_t(y_tx_t)\big[1 -y_t({\theta_t}^\top x_t) \big]_{+} \right\|^2 \\
    &=\left\|\theta_{t} - \theta^\star \right\|^2  + 2\gamma z_t \big[1 -y_t({\theta_t}^\top x_t) \big]_{+}\left(y_t(\theta_t^\top x_t) - y_t({\theta^\star}^\top x_t) \right) + \gamma^2 z_t^2(y_t x_t)^2 \big[1 -y_t({\theta_t}^\top x_t) \big]_{+}^2\\
    &=\left\|\theta_{t} - \theta^\star \right\|^2  +2\gamma z_t \big[1 -y_t({\theta_t}^\top x_t) \big]_{+}\left(y_t(\theta_t^\top x_t) - y_t({\theta^\star}^\top x_t) \right) + \gamma^2 z_t(y_t x_t)^2 \big[1 -y_t({\theta_t}^\top x_t) \big]_{+}^2
\end{split}
\end{align}
In the last equation we used the fact that $z_t \in \{0,1\}$, hence $z_t^2 = z_t$. Taking expectations  on both sides only with respect to $z_t$ considering $(x_t,y_t,\theta_t)$ fixed, we have
\begin{align}
    \E \|\theta_{t+1} - \theta^\star \|^2  &\leq  \|\theta_{t} - \theta^\star \|^2 + 2\gamma\sigma(\theta_t^\top x_t) \big[1 -y_t({\theta_t}^\top x_t) \big]_{+}\left(y_t(\theta_t^\top x_t) - y_t({\theta^\star}^\top x_t) \right) \notag \\ 
    &\qquad \qquad \qquad \qquad  \qquad \qquad \qquad \qquad \qquad + \gamma^2 \sigma(  \theta_t^\top x_t)(y_t x_t)^2 \big[1 -y_t({\theta_t}^\top x_t) \big]_{+}^2 \notag \\
   &\leq \|\theta_{t} - \theta^\star \|^2 + 2\gamma \sigma(  \theta_t^\top x_t) \big[1 -y_t({\theta_t}^\top x_t) \big]_{+}\left(y_t(\theta_t^\top x_t) - y_t({\theta^\star}^\top x_t) \right) \notag \\
   &\qquad \qquad \qquad \qquad  \qquad \qquad \qquad \qquad \qquad + \gamma^2 \sigma(  \theta_t^\top x_t)R^2 \big[1 -y_t({\theta_t}^\top x_t) \big]_{+}^2.
\end{align}
In the last line, we used the fact that $\|x\| \leq R$ for all $x \in \mathcal{X}$. Now finally we take the expectation with respect to the noise in $y_t$, using the assumption made in \cref{eq:noise_con_bin} and conditioning on $\theta_t$, $x_t$ and $y_t$. We get the following, 
\begin{align}
    \E\left\|\theta_{t+1} - \theta^\star \right\|^2 &\leq \left\|\theta_{t} - \theta^\star \right\|^2  -2\gamma (1-\eta)  \sigma(  \theta_t, x_t)  \big[1 -y_t({\theta_t}^\top x_t) \big]_{+}\left[ \left(y_t(\theta_t^\top x_t) - \rho^\star) \right) \right] \notag \\
&+ 2\gamma \eta  \sigma(  \theta_t, x_t) \big[1 -y_t({\theta_t}^\top x_t) \big]_{+}\left[  y_t(\theta_t^\top x_t) + |{\theta^\star}^\top x_t|   \right] + \gamma^2 R^2  \sigma(  \theta_t, x_t) \big[1 -y_t({\theta_t}^\top x_t) \big]_{+}^2 \notag \\
&\leq \left\|\theta_{t} - \theta^\star \right\|^2  -2\gamma (1-\eta)  \sigma(  \theta_t, x_t)  \big[1 -y_t({\theta_t}^\top x_t) \big]_{+}\left[ \left(y_t(\theta_t^\top x_t) - \rho^\star) \right) \right] \notag \\
&+ 2\gamma \eta  \sigma(  \theta_t, x_t) \big[1 -y_t({\theta_t}^\top x_t) \big]_{+}\left[  y_t(\theta_t^\top x_t) + R\|{\theta^\star}\|   \right] + \gamma^2 R^2  \sigma(  \theta_t, x_t) \big[1 -y_t({\theta_t}^\top x_t) \big]_{+}^2.
\end{align}
Similar to the noiseless case, we choose 
\begin{align*}
    \sigma(\theta, x) = \frac{1}{1+\mu |\theta^\top x|},
\end{align*}
for $\mu >0$ and after applying result from lemma~\ref{lem:sampling_binary}, we have for $y_t \theta_t^\top x_t <1$
\begin{align}
    \E\left\|\theta_{t+1} - \theta^\star \right\|^2 &\leq \left\|\theta_{t} - \theta^\star \right\|^2  -2\gamma (1-\eta) \min\left\{\frac{1}{\mu},\frac{\rho^\star - 1}{1+\mu}\right\} (1 - y_t \theta_t^\top x_t)_{+} \notag \\
   &\qquad \qquad + 2\gamma \eta \frac{y_t \theta_t^\top x_t + |\theta_\star^\top x_t|}{1+ \mu |\theta_t^\top x_t|}(1 - y_t \theta_t^\top x_t)_{+}   + \gamma^2 R^2 \max\left\{1,\frac{1}{\mu}\right\} (1 - y_t \theta_t^\top x_t)_{+} \notag \\
   &\leq \left\|\theta_{t} - \theta^\star \right\|^2  -2\gamma (1-\eta) \min\left\{\frac{1}{\mu},\frac{\rho^\star - 1}{1+\mu}\right\} (1 - y_t \theta_t^\top x_t)_{+} \notag \\
   &\qquad \qquad + 2\gamma \eta \frac{y_t \theta_t^\top x_t + R\|\theta_\star\|}{1+ \mu |\theta_t^\top x_t|}(1 - y_t \theta_t^\top x_t)_{+}   + \gamma^2 R^2 \max\left\{1,\frac{1}{\mu}\right\} (1 - y_t \theta_t^\top x_t)_{+}
\end{align}
It is easier to see that $\frac{y_t \theta_t^\top x_t + R\|\theta_\star\|}{1+ \mu |\theta_t^\top x_t|} \leq \max \left\{ \frac{1+R\|\theta_\star\|}{1+\mu},R\|\theta_\star\|\right\}$. Hence,
\begin{align}
     \E\left\|\theta_{t+1} - \theta^\star \right\|^2 &\leq \left\|\theta_{t} - \theta^\star \right\|^2  -2\gamma (1-\eta) \min\left\{\frac{1}{\mu},\frac{\rho^\star - 1}{1+\mu}\right\} (1 - y_t \theta_t^\top x_t)_{+} \notag \\
     &\qquad   + 2\gamma \eta ~\max \left\{ \frac{1+R\|\theta_\star\|}{1+\mu},R\|\theta_\star\|\right\}(1 - y_t \theta_t^\top x_t)_{+}   + \gamma^2 R^2 \max\left\{1,\frac{1}{\mu}\right\} (1 - y_t \theta_t^\top x_t)_{+}
\end{align}
Taking expectation on both sides we have, 
\begin{align}
    \E\left\|\theta_{t+1} - \theta^\star \right\|^2 &\leq \E \left\|\theta_{t} - \theta^\star \right\|^2  -2\gamma (1-\eta) \min\left\{\frac{1}{\mu},\frac{\rho^\star - 1}{1+\mu}\right\} \E (1 - y \theta_t^\top x)_{+} \notag \\
     &\qquad  + 2\gamma \eta~ \max \left\{ \frac{1+R\|\theta_\star\|}{1+\mu},R\|\theta_\star\|\right\}\E (1 - y \theta_t^\top x)_{+}   + \gamma^2 R^2 \max\left\{1,\frac{1}{\mu}\right\} \E (1 - y \theta_t^\top x)_{+}.
\end{align}

\end{proof}

\begin{theorem*}[Restatement of Theorem~\ref{thm:noisy_binary}]
Consider a set of $n$ \textit{i.i.d}   samples $(x_i,y_i)$ jointly sampled from $\mathcal{P}$ such that $x_i \in \rb^d$,  and  $y_i\in \{-1,1\}$ for all $i=1,\dots,n$ then under the assumption  in equation \eqref{eq:noise_con_bin} for all $(x,y)$ pair in $\mathcal{P}$ and for the choice of $\sigma(\theta,x)= \frac{1}{1+\mu |\theta^\top x|}$, $\mu > 0$, the following convergence guarantee exists for Algorithm \ref{alg:uncertain_binary}:
\begin{enumerate}
    \item If the noise parameter  $\eta < \frac{\min\left\{ \frac{1}{\mu} + \frac{\rho^\star - 1}{1+\mu}\right\}}{\max\left\{ R\|\theta_\star\| + \frac{1+R\|\theta_\star\|}{1+\mu} \right\} + \min\left\{ \frac{1}{\mu} + \frac{\rho^\star - 1}{1+\mu}\right\}}$ and iterates in \cref{alg:uncertain_binary} are updated via   stochastic gradient descent update in equation~\eqref{eq:update_bianry}, then for step size $\gamma = \frac{ (1- \eta)\min\left\{\frac{1}{\mu},\frac{\rho^\star - 1}{1+\mu}\right\} -   \eta \max \left\{ \frac{1+R\|\theta_\star\|}{1+\mu},R\|\theta_\star\|\right\}}{ R^2 \max\left\{1,\frac{1}{\mu}\right\}} $
    \begin{align}
        \E (1 - y \bar{\theta}_n^\top x)_{+} \leq \frac{R^2 \max\left\{1,\frac{1}{\mu}\right\} \|\theta_1- \theta_\star \|^2}{\left[(1- \eta)\min\left\{\frac{1}{\mu},\frac{\rho^\star - 1}{1+\mu}\right\} -   \eta \max \left\{ \frac{1+R\|\theta_\star\|}{1+\mu},R\|\theta_\star\|\right\} \right]^2 n },
    \end{align}
    such that $\|x\|\leq R$ for all $x \in \mathcal{X}$.
    \item If the noise parameter  $\eta \geq  \frac{\min\left\{ \frac{1}{\mu} + \frac{\rho^\star - 1}{1+\mu}\right\}}{\max\left\{ R\|\theta_\star\| + \frac{1+R\|\theta_\star\|}{1+\mu} \right\} + \min\left\{ \frac{1}{\mu} + \frac{\rho^\star - 1}{1+\mu}\right\}}$ and iterates in \cref{alg:uncertain_binary} are updated via projected stochastic gradient descent update in equation~\eqref{eq:update_bin_project}, then for step size $\gamma = \frac{(1-\eta)\min\left\{\frac{1}{\mu}, \frac{\rho^\star -1}{1+\mu}\right\}}{R^2 \max\left\{1,\frac{1}{\mu}\right\}} $
    \begin{align}
       \E (1 - y \bar{\theta}_n^\top x)_{+}  &\leq  \frac{\left(R^2 \max\left\{1,\frac{1}{\mu}\right\}\right) \|\theta_{1} - \theta^\star \|^2 }{ (1- \eta)^2\min\left\{\frac{1}{\mu},\frac{\rho^\star - 1}{1+\mu}\right\}^2 n}  + O(\eta),
    \end{align}
    such that $\|x\|\leq R$ for all $x \in \mathcal{X}$.
\end{enumerate}
\end{theorem*}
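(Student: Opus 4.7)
The plan is to mirror the argument of Theorem~\ref{thm:no_noise_binary} but replace its deterministic margin lower bound on $y\theta_\star^\top x$ with an event splitting over the $\eta$-level noise assumption in \eqref{eq:noise_con_bin}. Let $c_1 := \min\{1/\mu,(\rho^\star-1)/(1+\mu)\}$, $c_2 := \max\{1,1/\mu\}$, $c_3 := \max\{(1+R\|\theta_\star\|)/(1+\mu),R\|\theta_\star\|\}$, and $L_t := \E(1 - y\theta_t^\top x)_{+}$. First I would redo the one-step expansion from the noiseless proof, keeping the same second-moment term from Lemma~\ref{lem:sampling_binary}, to obtain
\[
\E\|\theta_{t+1}-\theta_\star\|^2 \leq \E\|\theta_t-\theta_\star\|^2 + 2\gamma\,\E\!\left[\sigma(\theta_t,x_t)(1-y_t\theta_t^\top x_t)_{+}(y_t\theta_t^\top x_t - y_t\theta_\star^\top x_t)\right] + \gamma^2 R^2\, \E\!\left[\sigma(\theta_t,x_t)(1-y_t\theta_t^\top x_t)_{+}^2\right],
\]
where nonexpansiveness of the projection ensures the same bound serves both updates \eqref{eq:update_bianry} and \eqref{eq:update_bin_project}. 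Conditioning on $(x_t,y_t)$, I split into the clean event $\{y_t\theta_\star^\top x_t \geq \rho^\star\}$, on which Lemma~\ref{lem:sampling_binary} gives the noiseless contribution $-c_1(1-y_t\theta_t^\top x_t)_{+}$, and its complement (of probability at most $\eta$), on which I apply $|y_t\theta_\star^\top x_t| \leq R\|\theta_\star\|$ and verify by a two-sided case analysis of the sign of $\theta_t^\top x_t$ (exactly as in Lemma~\ref{lem:sampling_binary}) that $(y_t\theta_t^\top x_t + R\|\theta_\star\|)/(1+\mu|\theta_t^\top x_t|) \leq c_3$. The result is the working recursion
\[
\E\|\theta_{t+1}-\theta_\star\|^2 \leq \E\|\theta_t-\theta_\star\|^2 - 2\gamma(1-\eta)c_1\, L_t + 2\gamma\eta c_3\, L_t + \gamma^2 R^2 c_2\, L_t.
\]

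For Case~1 the hypothesis on $\eta$ is exactly $\Gamma := (1-\eta)c_1 - \eta c_3 > 0$, so after merging the two drift terms the coefficient of $L_t$ is $2\gamma\Gamma - \gamma^2 R^2 c_2$, which is positive for small enough $\gamma$. Maximising over $\gamma$ gives the stated choice $\gamma = \Gamma/(R^2 c_2)$, producing coefficient $\Gamma^2/(R^2 c_2)$. Telescoping from $t = 1$ to $n$, dividing by $n$, and applying Jensen's inequality to the averaged iterate $\bar\theta_n$ then yield the claimed $O(1/n)$ bound with no residual.

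The main obstacle is Case~2, where $\Gamma \leq 0$ renders the Case~1 manipulation vacuous. The remedy is to put only $2\gamma(1-\eta)c_1 L_t$ on the left-hand side and treat the noise term $2\gamma\eta c_3 L_t$ as a residual on the right, using the projection to $\{\|\theta\|\leq B\}$ crucially to bound $L_t \leq 1 + BR$ uniformly. This gives
\[
\bigl[\,2\gamma(1-\eta)c_1 - \gamma^2 R^2 c_2\,\bigr] L_t \leq \E\|\theta_t-\theta_\star\|^2 - \E\|\theta_{t+1}-\theta_\star\|^2 + 2\gamma\eta c_3\,(1+BR).
\]
Choosing $\gamma = (1-\eta)c_1/(R^2 c_2)$ makes the bracket on the left equal to $(1-\eta)^2 c_1^2/(R^2 c_2)$, while keeping it positive. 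Telescoping, dividing by $n$, and invoking Jensen on $\bar\theta_n$ give the advertised $O(1/n)$ leading term; the $n$ identical copies of the residual survive averaging and, divided by the drift coefficient, collapse to a constant of order $\eta\,c_3(1+BR)/((1-\eta)c_1)$, which is the stated $O(\eta)$ term.

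The conceptual difficulty is entirely in Case~2: without a strictly negative drift, one trades convergence of $L_t$ against a non-vanishing bias, and the boundedness enforced by projection is the only lever available. The remaining work is routine bookkeeping, namely verifying the case analysis on $(y\theta_t^\top x + R\|\theta_\star\|)/(1+\mu|\theta_t^\top x|)$ that produces $c_3$, and tracking constants through the telescoping to show that the $O(\eta)$ residual matches the form displayed in the theorem.
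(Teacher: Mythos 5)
Your proposal follows essentially the same route as the paper: the same one-step expansion with the event split over the $\eta$-probability violation of the margin condition, the same constants $c_1,c_2,c_3$ (the paper isolates the resulting recursion as Lemma~\ref{lem:noisy_inter_bin}), the same two-case treatment with identical step sizes, and the same use of the projection bound $(1-y\theta_t^\top x)_+\leq 1+BR$ to absorb the non-vanishing drift into an $O(\eta)$ residual in Case~2. The only (welcome) refinement is your explicit appeal to nonexpansiveness of the projection so that one recursion covers both updates, a point the paper leaves implicit.
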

\begin{proof}
From \cref{lem:noisy_inter_bin}, we have
\begin{align*}
   \E\left\|\theta_{t+1} - \theta^\star \right\|^2 &\leq \E \left\|\theta_{t} - \theta^\star \right\|^2  -2\gamma (1-\eta) \min\left\{\frac{1}{\mu},\frac{\rho^\star - 1}{1+\mu}\right\} \E (1 - y \theta_t^\top x)_{+} \notag \\
     &\qquad  + 2\gamma \eta~ \max \left\{ \frac{1+R\|\theta_\star\|}{1+\mu},R\|\theta_\star\|\right\}\E (1 - y \theta_t^\top x)_{+}   + \gamma^2 R^2 \max\left\{1,\frac{1}{\mu}\right\} \E (1 - y \theta_t^\top x)_{+} 
\end{align*}
Now we consider the two cases.
\begin{enumerate}
    \item[i] When $$\eta < \frac{\min\left\{ \frac{1}{\mu} + \frac{\rho^\star - 1}{1+\mu}\right\}}{\max\left\{ R\|\theta_\star\| + \frac{1+R\|\theta_\star\|}{1+\mu} \right\} + \min\left\{ \frac{1}{\mu} + \frac{\rho^\star - 1}{1+\mu}\right\}},$$
    then, $(1-\eta) \min\left\{\frac{1}{\mu},\frac{\rho^\star - 1}{1+\mu}\right\} > \eta~ \max \left\{ \frac{1+R\|\theta_\star\|}{1+\mu},R\|\theta_\star\|\right\}$. Hence, 
    \begin{align*}
        &\E\left\|\theta_{t+1} - \theta^\star \right\|^2 \leq \E \left\|\theta_{t} - \theta^\star \right\|^2  -2\gamma (1-\eta) \min\left\{\frac{1}{\mu},\frac{\rho^\star - 1}{1+\mu}\right\} \E (1 - y \theta_t^\top x)_{+} \notag \\
     &\qquad \qquad \qquad  + 2\gamma \eta~ \max \left\{ \frac{1+R\|\theta_\star\|}{1+\mu},R\|\theta_\star\|\right\}\E (1 - y \theta_t^\top x)_{+}   + \gamma^2 R^2 \max\left\{1,\frac{1}{\mu}\right\} \E (1 - y \theta_t^\top x)_{+} \\
     \Rightarrow~& \E (1 - y \theta_t^\top x)_{+} \leq \frac{\E \|\theta_{t} - \theta^\star \|^2 - \E \|\theta_{t+1} - \theta^\star \|^2}{2\gamma (1- \eta)\min\left\{\frac{1}{\mu},\frac{\rho^\star - 1}{1+\mu}\right\} - 2\gamma \eta \max \left\{ \frac{1+R\|\theta_\star\|}{1+\mu},R\|\theta_\star\|\right\} - \gamma^2 R^2 \max\left\{1,\frac{1}{\mu}\right\} } 
    \end{align*}
    Now in the above equation, summing for all $t$ from 1 to $n$ and applying Jensen's inequality after choosing the optimal step size $\gamma = \frac{ (1- \eta)\min\left\{\frac{1}{\mu},\frac{\rho^\star - 1}{1+\mu}\right\} -   \eta \max \left\{ \frac{1+R\|\theta_\star\|}{1+\mu},R\|\theta_\star\|\right\}}{ R^2 \max\left\{1,\frac{1}{\mu}\right\}}$, we get 
    \begin{align}
        \E (1 - y \bar{\theta}_n^\top x)_{+} \leq \frac{R^2 \max\left\{1,\frac{1}{\mu}\right\} \|\theta_1- \theta_\star \|^2}{\left[(1- \eta)\min\left\{\frac{1}{\mu},\frac{\rho^\star - 1}{1+\mu}\right\} -   \eta \max \left\{ \frac{1+R\|\theta_\star\|}{1+\mu},R\|\theta_\star\|\right\} \right]^2 n }.
    \end{align}

    \item[ii] When $$\eta \geq  \frac{\min\left\{ \frac{1}{\mu} + \frac{\rho^\star - 1}{1+\mu}\right\}}{\max\left\{ R\|\theta_\star\| + \frac{1+R\|\theta_\star\|}{1+\mu} \right\} + \min\left\{ \frac{1}{\mu} + \frac{\rho^\star - 1}{1+\mu}\right\}},$$ 
    then $(1-\eta) \min\left\{\frac{1}{\mu},\frac{\rho^\star - 1}{1+\mu}\right\} < \eta~ \max \left\{ \frac{1+R\|\theta_\star\|}{1+\mu},R\|\theta_\star\|\right\}$. Hence,
     \begin{align*}
        &\E\left\|\theta_{t+1} - \theta^\star \right\|^2 \leq \E \left\|\theta_{t} - \theta^\star \right\|^2  -2\gamma (1-\eta) \min\left\{\frac{1}{\mu},\frac{\rho^\star - 1}{1+\mu}\right\} \E (1 - y \theta_t^\top x)_{+} \notag \\
     &\qquad \qquad \qquad  + 2\gamma \eta~ \max \left\{ \frac{1+R\|\theta_\star\|}{1+\mu},R\|\theta_\star\|\right\}\E (1 - y \theta_t^\top x)_{+}   + \gamma^2 R^2 \max\left\{1,\frac{1}{\mu}\right\} \E (1 - y \theta_t^\top x)_{+} \\
     \Rightarrow~& \E (1 - y \theta_t^\top x)_{+} \leq \frac{\|\theta_{t} - \theta^\star \|^2 - \E \|\theta_{t+1} - \theta^\star \|^2}{2\gamma (1- \eta)\min\left\{\frac{1}{\mu},\frac{\rho^\star - 1}{1+\mu}\right\}  - \gamma^2 R^2 \max\left\{1,\frac{1}{\mu}\right\} } \\
     &\qquad \qquad \qquad \qquad \qquad + \frac{2\gamma \eta \max \left\{ \frac{1+R\|\theta_\star\|}{1+\mu},R\|\theta_\star\|\right\}}{2\gamma (1- \eta)\min\left\{\frac{1}{\mu},\frac{\rho^\star - 1}{1+\mu}\right\}  - \gamma^2 R^2 \max\left\{1,\frac{1}{\mu}\right\}} \E (1 - y \theta_t^\top x)_{+}
    \end{align*}
    We update via projected stochastic gradient descent. Hence,
    $$(1 - y\theta_t^\top x)_+ \leq 1+ BR$$
    for all $(x,y)$ pair. This gives,
    \begin{align*}
        &\E (1 - y \theta_t^\top x)_{+} \leq \frac{\E\|\theta_{t} - \theta^\star \|^2 - \E \|\theta_{t+1} - \theta^\star \|^2}{2\gamma (1- \eta)\min\left\{\frac{1}{\mu},\frac{\rho^\star - 1}{1+\mu}\right\}  - \gamma^2 R^2 \max\left\{1,\frac{1}{\mu}\right\} } \\
     &\qquad \qquad  \qquad  \qquad + \frac{2\gamma \eta \max \left\{ \frac{1+R\|\theta_\star\|}{1+\mu},R\|\theta_\star\|\right\}(1+BR)}{2\gamma (1- \eta)\min\left\{\frac{1}{\mu},\frac{\rho^\star - 1}{1+\mu}\right\}  - \gamma^2 R^2 \max\left\{1,\frac{1}{\mu}\right\}}.
    \end{align*}
    Choosing optimal $\gamma = \frac{(1-\eta)\min\left\{\frac{1}{\mu}, \frac{\rho^\star -1}{1+\mu}\right\}}{R^2 \max\left\{1,\frac{1}{\mu}\right\}}$ we have
    \begin{align*}
    \E (1 - y \theta_t^\top x)_{+}  &\leq \frac{\left(R^2 \max\left\{1,\frac{1}{\mu}\right\}\right)\left[\E\|\theta_{t} - \theta^\star \|^2 - \E \|\theta_{t+1} - \theta^\star \|^2\right]}{ (1- \eta)^2\min\left\{\frac{1}{\mu},\frac{\rho^\star - 1}{1+\mu}\right\}^2} \\
    &\qquad \qquad \qquad + {\left(\frac{2(1+BR) \max\left\{\frac{1+R\|\theta_\star\|}{1+\mu},R\|\theta_\star\| \right\} }{(1-\eta)\min\left\{\frac{1}{\mu}, \frac{\rho^\star -1}{1+\mu}\right\}}\right)} ~\eta .
    \end{align*} 
    Summing the above equation for $t =1$ to $n$ and applying Jensen's inequality give us, 
    \begin{align}
        \E (1 - y \bar{\theta}_n^\top x)_{+}  &\leq \frac{\left(R^2 \max\left\{1,\frac{1}{\mu}\right\}\right) \|\theta_{1} - \theta^\star \|^2 }{ (1- \eta)^2\min\left\{\frac{1}{\mu},\frac{\rho^\star - 1}{1+\mu}\right\}^2 n}  + \underbrace{\left(\frac{2(1+BR) \max\left\{\frac{1+R\|\theta_\star\|}{1+\mu},R\|\theta_\star\| \right\} }{(1-\eta)\min\left\{\frac{1}{\mu}, \frac{\rho^\star -1}{1+\mu}\right\}}\right)}_{:\approx \text{constant}}~\eta \\
    &= \frac{\left(R^2 \max\left\{1,\frac{1}{\mu}\right\}\right) \|\theta_{1} - \theta^\star \|^2 }{ (1- \eta)^2\min\left\{\frac{1}{\mu},\frac{\rho^\star - 1}{1+\mu}\right\}^2 n}  + O(\eta).
    \end{align}
\end{enumerate}
\end{proof}

\section{Multi-class Classification in the Presence of Noise}

\begin{lemma} \label{lem:noisy_inter_multi}
Under the assumption  in equation \eqref{eq:noise_con_multi}, the iterates in \cref{alg:uncertain_multi-class}  updated via  projected stochastic gradient descent (equation \eqref{eq:update_multi}) satisfy 
\begin{align}
     \E \|\theta_{t+1} - \theta^\star \|^2 &\leq \E \|\theta_{t} - \theta^\star \|^2 - 2\gamma (1-\eta) \min\left\{\frac{1}{\mu},\frac{\rho^\star - 1}{1+\mu}\right\} \E[\hat{\ell}(x,y, \theta_t)] \notag \\
   &\qquad  +2 \gamma \eta(1+R\|\theta_\star\|)  \E[\hat{\ell}(x,y, \theta_t)]   + \gamma^2 (1+BR) R^2 \E[\hat{\ell}(x,y, \theta_t)]
\end{align}
for the choice of $\sigma(\theta,x)= \frac{1}{1+\mu \left|\theta(y_1^\star)^\top x - \theta(y_2^\star)^\top x \right|}$, $\mu>0$, and positive step size $\gamma$ such that for all $\|\delta_x(i,j)\|\leq R$ for all $x\in \mathcal{X}$ and $i,j \in [k]$.
\end{lemma}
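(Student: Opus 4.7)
The plan is to follow the two-stage expectation strategy used in Lemma~\ref{lem:noisy_inter_bin}, but transported to the multi-class setting using the algebraic machinery already developed in the proof of Theorem~\ref{thm:no_noise_multi}. In particular, I would reuse the identity $z_t^2 = z_t$ and the upper/lower sandwich on $\sigma$ provided by Lemma~\ref{lem:sampling_multi}, and layer on top a splitting of the expectation over $(x_t,y_t)$ into a ``clean'' event (where \eqref{eq:noise_con_multi} gives ${\theta^\star}^\top \delta_{x_t}(y_t,y^\star(\theta_t,x_t,y_t)) \geq \rho^\star$) and a ``noisy'' event (of probability at most $\eta$) handled by Cauchy--Schwarz.

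First I would use non-expansivity of the projection $\Pi_{\|\theta\|\leq B}$ (applicable since $\|\theta^\star\|\leq B$) to drop it, expand $\|\theta_{t+1}-\theta^\star\|^2$, and absorb $z_t^2=z_t$, exactly as in the separable multi-class proof. Taking expectation with respect to $z_t$ conditional on $(x_t,y_t,\theta_t)$ and using $\|\delta_{x_t}(\cdot,\cdot)\|\leq R$ yields
\begin{align*}
\E\|\theta_{t+1}-\theta^\star\|^2 &\leq \|\theta_t-\theta^\star\|^2
 + 2\gamma \sigma(\theta_t,x_t)\hat{\ell}(x_t,y_t,\theta_t)\bigl[\theta_t^\top \delta - {\theta^\star}^\top \delta\bigr]
 + \gamma^2 R^2 \sigma(\theta_t,x_t)\hat{\ell}^2(x_t,y_t,\theta_t),
\end{align*}
where $\delta = \delta_{x_t}(y_t,y^\star(\theta_t,x_t,y_t))$.

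Next I would take the expectation over $(x_t,y_t)$. Writing $\E[{\theta^\star}^\top \delta]\geq (1-\eta)\rho^\star - \eta R\|\theta^\star\|$ (from \eqref{eq:noise_con_multi} on the good event and Cauchy--Schwarz with $\|\theta^\star\|\leq B$ on the bad one), the cross term decomposes as
\begin{align*}
\sigma\hat{\ell}\bigl[\theta_t^\top \delta - \E{\theta^\star}^\top \delta\bigr]
 \leq -(1-\eta)\,\sigma\hat{\ell}(\rho^\star - \theta_t^\top\delta) + \eta\,\sigma\hat{\ell}\bigl(\theta_t^\top\delta + R\|\theta^\star\|\bigr).
\end{align*}
The first piece is handled by the lower bound on $\sigma$ from Lemma~\ref{lem:sampling_multi}, giving
$-\sigma\hat{\ell}(\rho^\star - \theta_t^\top\delta) \leq -\min\{1/\mu,(\rho^\star-1)/(1+\mu)\}\hat{\ell}$.
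For the second, I would observe that on the update-active event $\hat{\ell}>0$ we have $\theta_t^\top\delta<1$, so $\theta_t^\top\delta + R\|\theta^\star\|<1+R\|\theta^\star\|$; combined with $\sigma\leq 1$, this yields $\sigma\hat{\ell}(\theta_t^\top\delta + R\|\theta^\star\|)\leq (1+R\|\theta^\star\|)\hat{\ell}$. Finally, the quadratic term is controlled by the upper bound $\sigma\hat{\ell}\leq 1+BR$ from Lemma~\ref{lem:sampling_multi}, producing $\gamma^2 R^2(1+BR)\hat{\ell}$. Assembling the three contributions and taking the outer expectation delivers the claimed inequality.

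The main obstacle I anticipate is the book-keeping on $\sigma$: three different faces of this function are needed within the same derivation (the sharp lower bound to transform the contractive term into $-c_2\hat{\ell}$, the trivial bound $\sigma\leq 1$ together with $\theta_t^\top\delta<1$ to eliminate the bounded-loss-times-Cauchy--Schwarz artifact in the noise term, and the upper bound $\sigma\hat{\ell}\leq 1+BR$ for the quadratic remainder). Getting all three to fit together cleanly, while respecting the sign of $\theta_t^\top\delta + R\|\theta^\star\|$ on the noisy event and the active-set condition $\hat{\ell}>0$, is the delicate part; everything else is a direct multi-class transcription of the binary noisy analysis.
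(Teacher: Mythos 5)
Your proposal is correct and follows essentially the same route as the paper's proof: expand the update after dropping the projection, use $z_t^2=z_t$, take the conditional expectation over $z_t$, split the cross term into the clean event (controlled via the lower bound on $\sigma$ from Lemma~\ref{lem:sampling_multi}) and the noisy event of probability at most $\eta$ (controlled via $|{\theta^\star}^\top\delta|\leq R\|\theta_\star\|$ and $\theta_t^\top\delta<1$ on the active set), and bound the quadratic term by $\sigma\hat{\ell}\leq 1+BR$. The three ``faces'' of $\sigma$ you flag as delicate are exactly the three bounds the paper uses, so no gap remains.
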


\begin{proof}
We perform the following projected stochastic update
\begin{align*}
    \theta_{t+1} = \Pi_{\|\theta\| \leq B}\left[\theta_t + \gamma z_t \delta_{x_t}\left(y_t,y^\star(\theta_t,x_t,y_t)\right)  \left[ 1 - \theta^\top \delta_{x_t}\left(y_t,y^\star(\theta,x_t,y_t)\right)\right]_+\right]
\end{align*}
where $z_t$ is a bernoulli random variable such that $p(z_t=1|x_t,\theta_t) = \sigma(\theta_t,\top x_t)$ such that $\sigma:\mathbb{R}^{dk}\times \mathbb{R}^{dk} \rightarrow [0,1]$ is an even function. Now,

\begin{align}
\begin{split}
    \left\|\theta_{t+1} - \theta^\star \right\|^2 &\leq \left\|\theta_{t} - \theta^\star + \gamma z_t \delta_{x_t}\left(y_t,y^\star(\theta,x_t,y_t)\right)  \left[ 1 - \theta^\top \delta_{x_t}\left(y_t,y^\star(\theta_t,x_t,y_t)\right)\right]_+ \right\|^2   \\
    &=\left\|\theta_{t} - \theta^\star \right\|^2  + 2\gamma z_t \hat{\ell}(x_t,y_t, \theta_t)\left( \theta_t^\top \delta_{x_t}\left(y_t,y^\star(\theta_t,x_t,y_t)\right) -  {\theta^\star}^\top \delta_{x_t}\left(y_t,y^\star(\theta_t,x_t,y_t)\right)) \right)   \\
    & \qquad \qquad \qquad \qquad  + \gamma^2 z_t^2 \|\delta_{x_t}\left(y_t,y^\star(\theta_t,x_t,y_t)\right)\|^2 \hat{\ell}^2(x_t,y_t, \theta_t)    \\
    &=\left\|\theta_{t} - \theta^\star \right\|^2  + 2\gamma z_t \hat{\ell}(x_t,y_t, \theta_t)\left( \theta_t^\top \delta_{x_t}\left(y_t,y^\star(\theta_t,x_t,y_t)\right) -  {\theta^\star}^\top \delta_{x_t}\left(y_t,y^\star(\theta_t,x_t,y_t)\right)) \right)   \\
    & \qquad \qquad \qquad \qquad  + \gamma^2 z_t \|\delta_{x_t}\left(y_t,y^\star(\theta_t,x_t,y_t)\right)\|^2 \hat{\ell}^2(x_t,y_t, \theta_t).
\end{split}
\end{align}
In the last equation we used the fact that $z_t \in \{0,1\}$, hence $z_t^2 = z_t$.  Taking expectations  on both sides only with respect to $z_t$ considering $(x_t,y_t,\theta_t)$ fixed, we have
\begin{align}
    \E \|\theta_{t+1} - \theta^\star \|^2  &\leq \left\|\theta_{t} - \theta^\star \right\|^2  + 2\gamma \sigma(\theta_t,x_t) \hat{\ell}(x_t,y_t, \theta_t)\left( \theta_t^\top \delta_{x_t}\left(y_t,y^\star(\theta_t,x_t,y_t)\right) -  {\theta^\star}^\top \delta_{x_t}\left(y_t,y^\star(\theta_t,x_t,y_t)\right)) \right) \notag   \\
    & \qquad \qquad \qquad \qquad  + \gamma^2 \sigma(\theta_t,x_t) \|\delta_{x_t}\left(y_t,y^\star(\theta_t,x_t,y_t)\right)\|^2 \hat{\ell}^2(x_t,y_t, \theta_t) \notag \\
   &\leq \|\theta_{t} - \theta^\star \|^2 + 2\gamma \sigma(\theta_t,x_t) \hat{\ell}(x_t,y_t, \theta_t)\left( \theta_t^\top \delta_{x_t}\left(y_t,y^\star(\theta_t,x_t,y_t)\right) -  {\theta^\star}^\top \delta_{x_t}\left(y_t,y^\star(\theta_t,x_t,y_t)\right)) \right) \notag \\
   &\qquad \qquad \qquad \qquad  \qquad \qquad \qquad \qquad \qquad + \gamma^2 \sigma(  \theta_t, x_t) R^2 \hat{\ell}^2(x_t,y_t, \theta_t)  \label{eq:multi_proof_inter_noise}
\end{align}
In the last line, we used the fact that $\|\delta_x(i,j)\| \leq R$ for all $x \in \mathcal{X}$ and $i,j \in [k]$. Now finally we take the expectation with respect to the noise in $y_t$, using the assumption made in \cref{eq:noise_con_multi} and conditioning on $\theta_t$, $x_t$ and $y_t$. We get the following, 
\begin{align*}
    \E \|\theta_{t+1} - \theta^\star \|^2  &\leq \|\theta_{t} - \theta^\star \|^2 + 2\gamma (1-\eta) \sigma(\theta_t,x_t) \hat{\ell}(x_t,y_t, \theta_t)\left( \theta_t^\top \delta_{x_t}\left(y_t,y^\star(\theta_t,x_t,y_t)\right) -   \rho^\star \right) \notag \\
   &\qquad  +2 \gamma \eta \sigma(\theta_t,x_t) \hat{\ell}(x_t,y_t, \theta_t)\left( \theta_t^\top \delta_{x_t}\left(y_t,y^\star(\theta_t,x_t,y_t)\right) + \left| \theta_\star^\top \delta_{x_t}\left(y_t,y^\star(\theta_t,x_t,y_t)\right) \right| \right) \\
   &\qquad \qquad \qquad \qquad \qquad \qquad \qquad \qquad \qquad  + \gamma^2 \sigma(  \theta_t, x_t) R^2 \hat{\ell}^2(x_t,y_t, \theta_t) \\
    &\leq \|\theta_{t} - \theta^\star \|^2 + 2\gamma (1-\eta) \sigma(\theta_t,x_t) \hat{\ell}(x_t,y_t, \theta_t)\left( \theta_t^\top \delta_{x_t}\left(y_t,y^\star(\theta_t,x_t,y_t)\right) -   \rho^\star \right) \notag \\
   &\qquad  +2 \gamma \eta \sigma(\theta_t,x_t) \hat{\ell}(x_t,y_t, \theta_t)\left( \theta_t^\top \delta_{x_t}\left(y_t,y^\star(\theta_t,x_t,y_t)\right) +R\|\theta_\star\| \right) \\
   &\qquad \qquad \qquad \qquad \qquad \qquad \qquad \qquad \qquad + \gamma^2 \sigma(  \theta_t, x_t) R^2 \hat{\ell}^2(x_t,y_t, \theta_t)
\end{align*}
Similar to the noiseless case, 
we choose $$\sigma(\theta, x) = \frac{1}{1+\mu \left|\theta(y_1^\star)^\top x - \theta(y_2^\star)^\top x \right|}$$
for $\mu >0$ and after applying result from lemma~\ref{lem:sampling_multi}, we have for $\hat{\ell}^2(x_t,y_t, \theta_t) > 0$
\begin{align}
    \E \|\theta_{t+1} - \theta^\star \|^2 &\leq \|\theta_{t} - \theta^\star \|^2 - 2\gamma (1-\eta) \min\left\{\frac{1}{\mu},\frac{\rho^\star - 1}{1+\mu}\right\} \hat{\ell}(x_t,y_t, \theta_t) \notag \\
   &\qquad  +2 \gamma \eta~   \hat{\ell}(x_t,y_t, \theta_t)\frac{\left( \theta_t^\top \delta_{x_t}\left(y_t,y^\star(\theta_t,x_t,y_t)\right) +R\|\theta_\star\| \right)}{1+ \mu \left|\theta_t(y_{t(1)}^\star)^\top x_t - \theta_t(y_{t(2)}^\star)^\top x_t\right| }\notag  \\
   &\qquad \qquad \qquad \qquad \qquad \qquad \qquad   + \gamma^2 (1+BR) R^2 \hat{\ell}(x_t,y_t, \theta_t) \notag \\
   &\leq \|\theta_{t} - \theta^\star \|^2 - 2\gamma (1-\eta) \min\left\{\frac{1}{\mu},\frac{\rho^\star - 1}{1+\mu}\right\} \hat{\ell}(x_t,y_t, \theta_t) \notag \\
   &\qquad  +2 \gamma \eta(1+R\|\theta_\star\|)  \hat{\ell}(x_t,y_t, \theta_t)   + \gamma^2 (1+BR) R^2 \hat{\ell}(x_t,y_t, \theta_t).
\end{align}
Taking expectation on both sides of the above expression gives us
\begin{align}
    \E \|\theta_{t+1} - \theta^\star \|^2 &\leq \E \|\theta_{t} - \theta^\star \|^2 - 2\gamma (1-\eta) \min\left\{\frac{1}{\mu},\frac{\rho^\star - 1}{1+\mu}\right\} \E[\hat{\ell}(x,y, \theta_t)] \notag \\
   &\qquad  +2 \gamma \eta(1+R\|\theta_\star\|)  \E[\hat{\ell}(x,y, \theta_t)]   + \gamma^2 (1+BR) R^2 \E[\hat{\ell}(x,y, \theta_t)]
\end{align}
\end{proof}

\begin{theorem*}[Restatement of Theorem~\ref{thm:noisy_multi}]
Consider a set of $n$ \textit{i.i.d}   samples $(x_i,y_i)$ jointly sampled from $\mathcal{P}$ such that $x_i \in \rb^d$,  and  $y_i\in \{1,2,\ldots,k\}$ for all $i=1,\dots,n$ then under the assumption  in equation \eqref{eq:noise_con_multi} for all $(x,y)$ pair in $\mathcal{P}$ and for the choice of $\sigma(\theta,x)= \frac{1}{1+\mu \left|\theta(y_1^\star)^\top x - \theta(y_2^\star)^\top x \right|}$, $\mu > 0$, the following convergence guarantee exists for Algorithm \ref{alg:uncertain_multi-class} with projected stochastic gradient descent update (equation~\eqref{eq:update_multi}):
\begin{enumerate}
    \item If the noise parameter  $\eta < \frac{\min\left\{ \frac{1}{\mu}, \frac{\rho^\star -1}{1+\mu}\right\}}{1+\min\left\{ \frac{1}{\mu}, \frac{\rho^\star -1}{1+\mu}\right\} + R\|\theta_\star\|}$, then for step size $\gamma = \frac{(1-\eta)\min\left\{\frac{1}{\mu}, \frac{\rho_\star -1}{1+\mu}\right\} - \eta (1+R\|\theta_\star\|)}{R^2(1+BR)} $
    \begin{align}
        \E (1 - y \bar{\theta}_n^\top x)_{+} \leq \frac{R^2(1+BR) \|\theta_1 - \theta_\star\|^2}{\left[ (1-\eta)\min\left\{\frac{1}{\mu}, \frac{\rho_\star -1}{1+\mu}\right\} - \eta (1+R\|\theta_\star\|)\right]^2n},
    \end{align}
    such that $\|\delta_x(i,j)\|\leq R$ for all $x \in \mathcal{X}$ and $i,j\in [k]$.
    \item If the noise parameter  $\eta \geq  \frac{\min\left\{ \frac{1}{\mu}, \frac{\rho^\star -1}{1+\mu}\right\}}{1+\min\left\{ \frac{1}{\mu}, \frac{\rho^\star -1}{1+\mu}\right\} + R\|\theta_\star\|}$, then for step size $\gamma = \frac{(1-\eta)\min\left\{\frac{1}{\mu}, \frac{\rho^\star -1}{1+\mu}\right\}}{R^2 (1+BR)} $
    \begin{align}
       \E (1 - y \bar{\theta}_n^\top x)_{+}  &\leq  \frac{ R^2 (1+BR) \|\theta_{1} - \theta^\star \|^2 }{ (1- \eta)^2\min\left\{\frac{1}{\mu},\frac{\rho^\star - 1}{1+\mu}\right\}^2 n}  + O(\eta),
    \end{align}
    such that $\|\delta_x(i,j)\|\leq R$ for all $x \in \mathcal{X}$ and $i,j\in [k]$.
\end{enumerate}
\end{theorem*}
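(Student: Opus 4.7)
}
The plan is to start from the one-step descent inequality provided by Lemma~\ref{lem:noisy_inter_multi}, which already isolates all the dependence on $\sigma$ via the choice $\sigma(\theta,x) = 1/(1+\mu|\theta(y_1^\star)^\top x - \theta(y_2^\star)^\top x|)$ and the bounds from Lemma~\ref{lem:sampling_multi}. Concretely, writing $A := \min\{\tfrac{1}{\mu},\tfrac{\rho^\star-1}{1+\mu}\}$ and $C := 1+R\|\theta_\star\|$, that lemma gives
\begin{align*}
\E\|\theta_{t+1}-\theta_\star\|^2 \;\leq\; \E\|\theta_t-\theta_\star\|^2 \;-\; 2\gamma\bigl[(1-\eta)A - \eta C\bigr]\,\E[\hat{\ell}(x,y,\theta_t)] \;+\; \gamma^2 R^2(1+BR)\,\E[\hat{\ell}(x,y,\theta_t)].
\end{align*}
This is the exact multi-class analogue of what drives the binary noisy proof (Theorem~\ref{thm:noisy_binary}), and the rest of the argument will follow the same two-case split depending on whether the ``useful'' coefficient $(1-\eta)A$ dominates the ``noise'' coefficient $\eta C$ or not.

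\emph{Case 1 (low noise).} The assumption $\eta < A/(C+A)$ is exactly equivalent to $(1-\eta)A - \eta C > 0$. In this regime I would move $\E[\hat{\ell}(x,y,\theta_t)]$ to the left-hand side and divide by $2\gamma[(1-\eta)A-\eta C] - \gamma^2 R^2(1+BR)$, which is positive for the stated choice $\gamma = [(1-\eta)A-\eta C]/[R^2(1+BR)]$ (this choice maximizes the coefficient, as in the noiseless proof). Summing from $t=1$ to $n$ telescopes the $\|\theta_t-\theta_\star\|^2$ terms and yields, after Jensen's inequality applied to the convex hinge loss,
\begin{align*}
\E\hat{\ell}(x,y,\bar\theta_n) \;\leq\; \frac{R^2(1+BR)\|\theta_1-\theta_\star\|^2}{[(1-\eta)A-\eta C]^2\, n},
\end{align*}
which is the claimed bound.

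\emph{Case 2 (high noise).} Now $(1-\eta)A - \eta C \leq 0$, so the $\eta C$ term cannot be absorbed into the ``good'' coefficient. The trick (identical to what is done in Theorem~\ref{thm:noisy_binary}, Case 2) is to exploit the projection $\|\theta_t\|\leq B$: for any $(x,y)$ we have $\hat{\ell}(x,y,\theta_t) = [1-\theta_t^\top\delta_{x}(\cdot,\cdot)]_+ \leq 1+BR$ by Cauchy--Schwarz with $\|\delta_x(\cdot,\cdot)\|\leq R$. I would therefore split the middle term as
\begin{align*}
-2\gamma(1-\eta)A\,\E[\hat\ell] \;+\; 2\gamma\eta C\,\E[\hat\ell],
\end{align*}
bound the second piece by $2\gamma\eta C(1+BR)$ (a pure constant), leaving a clean negative drift $-2\gamma(1-\eta)A\,\E[\hat\ell]+\gamma^2 R^2(1+BR)\,\E[\hat\ell]$. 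With the stated $\gamma = (1-\eta)A/[R^2(1+BR)]$ this coefficient is exactly $\gamma(1-\eta)A = (1-\eta)^2 A^2/[R^2(1+BR)]$. Telescoping, dividing, and applying Jensen give
\begin{align*}
\E\hat{\ell}(x,y,\bar\theta_n) \;\leq\; \frac{R^2(1+BR)\|\theta_1-\theta_\star\|^2}{(1-\eta)^2 A^2\, n} \;+\; \frac{2 C(1+BR)}{(1-\eta)A}\,\eta,
\end{align*}
and the second term is $O(\eta)$ with constants depending only on $\mu,\rho^\star,R,B,\|\theta_\star\|$.

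\emph{Main obstacle.} The heart of the argument is already packaged inside Lemma~\ref{lem:noisy_inter_multi} (and behind it Lemma~\ref{lem:sampling_multi}, where one must verify that the sampling function $\sigma$ is simultaneously upper- and lower-bounded by the right quantities), so the remaining difficulty here is essentially bookkeeping. The one genuinely delicate point is Case~2: without the projection onto $\{\|\theta\|\leq B\}$ one cannot bound the surplus $\eta C\,\hat\ell(x,y,\theta_t)$ by a pure constant, and the argument would fail to close. Thus the role of the projection step in Algorithm~\ref{alg:uncertain_multi-class} is not cosmetic — it is precisely what converts an otherwise unbounded noise contribution into an additive $O(\eta)$ bias term in the high-noise regime.
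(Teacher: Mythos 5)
Your proposal is correct and follows essentially the same route as the paper: both start from the one-step inequality of Lemma~\ref{lem:noisy_inter_multi}, split on whether $(1-\eta)\min\{\tfrac{1}{\mu},\tfrac{\rho^\star-1}{1+\mu}\}$ dominates $\eta(1+R\|\theta_\star\|)$, and in the high-noise case use the projection bound $\hat{\ell}(x,y,\theta_t)\leq 1+BR$ to turn the surplus noise term into an additive $O(\eta)$ constant before telescoping and applying Jensen. The step sizes, denominators, and final constants you obtain match the paper's exactly.
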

\begin{proof}
From lemma~\ref{lem:noisy_inter_multi}, we have
\begin{align}
     \E \|\theta_{t+1} - \theta^\star \|^2 &\leq \E \|\theta_{t} - \theta^\star \|^2 - 2\gamma (1-\eta) \min\left\{\frac{1}{\mu},\frac{\rho^\star - 1}{1+\mu}\right\} \E[\hat{\ell}(x_t,y_t, \theta_t)] \notag \\
   &\qquad  +2 \gamma \eta(1+R\|\theta_\star\|)  \E[\hat{\ell}(x_t,y_t, \theta_t)]   + \gamma^2 (1+BR) R^2 \E[\hat{\ell}(x_t,y_t, \theta_t)]
\end{align}
Now we consider two cases.
\begin{enumerate}
    \item[i] When $$\eta < \frac{\min\left\{ \frac{1}{\mu}, \frac{\rho^\star -1}{1+\mu}\right\}}{1+\min\left\{ \frac{1}{\mu}, \frac{\rho^\star -1}{1+\mu}\right\} + R\|\theta_\star\|},$$
    then $(1-\eta) \min\left\{\frac{1}{\mu},\frac{\rho^\star - 1}{1+\mu}\right\} > \eta(1+R\|\theta_\star\|) $. Hence, 
    \begin{align*}
         &\E \|\theta_{t+1} - \theta^\star \|^2 \leq \E \|\theta_{t} - \theta^\star \|^2 - 2\gamma (1-\eta) \min\left\{\frac{1}{\mu},\frac{\rho^\star - 1}{1+\mu}\right\} \E[\hat{\ell}(x,y, \theta_t)] \notag \\
   &\qquad  \qquad  \qquad  +2 \gamma \eta(1+R\|\theta_\star\|)  \E[\hat{\ell}(x,y, \theta_t)]   + \gamma^2 (1+BR) R^2 \E[\hat{\ell}(x,y, \theta_t)] \\
   \Rightarrow~& \E[\hat{\ell}(x,y, \theta_t)] \leq \frac{ \E \|\theta_{t} - \theta^\star \|^2 - \E \|\theta_{t+1} - \theta^\star \|^2}{2\gamma (1-\eta) \min\left\{\frac{1}{\mu},\frac{\rho^\star - 1}{1+\mu}\right\} - 2 \gamma \eta(1+R\|\theta_\star\|)  - \gamma^2 (1+BR) R^2 }
    \end{align*}
    Now in the above equation, summing for all $t$ from 1 to $n$ and applying Jensen's inequality after choosing the optimal step size $\gamma = \frac{(1-\eta)\min\left\{\frac{1}{\mu}, \frac{\rho_\star -1}{1+\mu}\right\} - \eta (1+R\|\theta_\star\|)}{R^2(1+BR)}$, we get
\begin{align}
    \E[\hat{\ell}(x,y, \bar{\theta}_n)]  \leq \frac{R^2(1+BR) \|\theta_1 - \theta_\star\|^2}{\left[ (1-\eta)\min\left\{\frac{1}{\mu}, \frac{\rho_\star -1}{1+\mu}\right\} - \eta (1+R\|\theta_\star\|)\right]^2n}.
\end{align}
\item[ii] When $$\eta \geq  \frac{\min\left\{ \frac{1}{\mu}, \frac{\rho^\star -1}{1+\mu}\right\}}{1+\min\left\{ \frac{1}{\mu}, \frac{\rho^\star -1}{1+\mu}\right\} + R\|\theta_\star\|},$$
    then $(1-\eta) \min\left\{\frac{1}{\mu},\frac{\rho^\star - 1}{1+\mu}\right\} \leq \eta(1+R\|\theta_\star\|) $. Hence, 
    
    \begin{align*}
         &\E \|\theta_{t+1} - \theta^\star \|^2 \leq \E \|\theta_{t} - \theta^\star \|^2 - 2\gamma (1-\eta) \min\left\{\frac{1}{\mu},\frac{\rho^\star - 1}{1+\mu}\right\} \E[\hat{\ell}(x,y, \theta_t)] \notag \\
   &\qquad  \qquad  \qquad  +2 \gamma \eta(1+R\|\theta_\star\|)  \E[\hat{\ell}(x,y, \theta_t)]   + \gamma^2 (1+BR) R^2 \E[\hat{\ell}(x,y, \theta_t)] \\
   \Rightarrow~ &\E[\hat{\ell}(x,y, \theta_t)] \leq \frac{\E \|\theta_{t} - \theta^\star \|^2 - \E \|\theta_{t+1} - \theta^\star \|^2}{2\gamma (1-\eta) \min\left\{ \frac{1}{\mu}, \frac{\rho^\star - 1}{1+\mu}\right\} - \gamma^2 R^2(1+BR)} \notag \\
   &\qquad  \qquad  \qquad \qquad \qquad  + \frac{2 \gamma \eta(1+R\|\theta_\star\|) }{2\gamma (1-\eta) \min\left\{ \frac{1}{\mu}, \frac{\rho^\star - 1}{1+\mu}\right\} - \gamma^2 R^2(1+BR)} \E[\hat{\ell}(x,y, \theta_t)] 
    \end{align*}
    We update via projected stochastic gradient descent. Hence,
    $$\hat{\ell}(x,y, \theta_t) \leq 1+ BR$$
    for all $(x,y)$ pair. This gives,
    \begin{align*}
        \E[\hat{\ell}(x,y, \theta_t)] &\leq \frac{\E \|\theta_{t} - \theta^\star \|^2 - \E \|\theta_{t+1} - \theta^\star \|^2}{2\gamma (1-\eta) \min\left\{ \frac{1}{\mu}, \frac{\rho^\star - 1}{1+\mu}\right\} - \gamma^2 R^2(1+BR)} \notag \\
   &\qquad  \qquad  \qquad  + \frac{2 \gamma \eta(1+R\|\theta_\star\|) }{2\gamma (1-\eta) \min\left\{ \frac{1}{\mu}, \frac{\rho^\star - 1}{1+\mu}\right\} - \gamma^2 R^2(1+BR)} (1+BR).
    \end{align*}
    Now choosing $\gamma = \frac{(1-\eta) \min\left\{ \frac{1}{\mu}, \frac{\rho^\star - 1}{1+\mu}\right\}}{R^2(1+BR)}$ gives 
    
    \begin{align}
        \E[\hat{\ell}(x,y, \theta_t)] &\leq \frac{R^2(1+BR)\left[\E \|\theta_{t} - \theta^\star \|^2 - \E \|\theta_{t+1} - \theta^\star \|^2\right]}{\left[(1-\eta) \min\left\{ \frac{1}{\mu}, \frac{\rho^\star - 1}{1+\mu}\right\}\right]^2} +\left( \frac{2(1+R\|\theta_\star\|)(1+BR)}{(1-\eta) \min\left\{ \frac{1}{\mu}, \frac{\rho^\star - 1}{1+\mu}\right\}}\right)~\eta
    \end{align}
    
Summing the above equation for $t =1$ to $n$ and applying Jensen's inequality give us, 
    \begin{align}
        \E[\hat{\ell}(x,y, \bar{\theta}_n)] &\leq \frac{R^2(1+BR) \|\theta_{1} - \theta^\star \|^2  }{\left[(1-\eta) \min\left\{ \frac{1}{\mu}, \frac{\rho^\star - 1}{1+\mu}\right\}\right]^2 n}   + \underbrace{\left( \frac{2(1+R\|\theta_\star\|)(1+BR)}{(1-\eta) \min\left\{ \frac{1}{\mu}, \frac{\rho^\star - 1}{1+\mu}\right\}}\right)}_{:\approx \text{constant}}~\eta \notag \\
    &= \frac{R^2(1+BR) \|\theta_{1} - \theta^\star \|^2  }{\left[(1-\eta) \min\left\{ \frac{1}{\mu}, \frac{\rho^\star - 1}{1+\mu}\right\}\right]^2 n} + O(\eta).
    \end{align}
\end{enumerate}
\end{proof}

\clearpage
\end{document}